\newtheorem{prop}{Proposition}
\newcolumntype{C}[1]{>{\centering\let\newline\\\arraybackslash\hspace{0pt}}m{#1}}
\def\ie{{i.e.}}
\def\G{\Gamma}
\def\P{\Psi}
\def\V{\mathcal{V}}
\DeclareMathOperator{\shrink}{shrink}
\DeclareMathOperator*{\argmin}{arg\,min}
\DeclareMathOperator*{\argmax}{arg\,max}
\DeclareMathOperator*{\I}{\mathcal{I}}
\DeclareMathOperator*{\J}{\mathcal{J}}
\newcommand{\myparagraph}[1]{\smallskip\noindent\textbf{#1.} }
\begin{document}

\begin{frontmatter}

\title{Joint Spatial-Angular Sparse Coding for dMRI with Separable Dictionaries}

\author{Evan Schwab, Ren\'{e} Vidal, and Nicolas Charon}
\address{Center for Imaging Science, Johns Hopkins University, Baltimore, MD USA}




\begin{abstract}

Diffusion MRI (dMRI) provides the ability to reconstruct neuronal fibers in the brain, \textit{in vivo}, by measuring water diffusion along angular gradient directions in $q$-space. High angular resolution diffusion imaging (HARDI) can produce better estimates of fiber orientation than the popularly used diffusion tensor imaging, but the high number of samples needed to estimate diffusivity requires longer patient scan times. To accelerate dMRI, compressed sensing (CS) has been utilized by exploiting a sparse dictionary representation of the data, discovered through sparse coding. The sparser the representation, the fewer samples are needed to reconstruct a high resolution signal with limited information loss, and so an important area of research has focused on finding the sparsest possible representation of dMRI. Current reconstruction methods however, rely on an angular representation \textit{per voxel} with added spatial regularization, and so, for non-zero signals, one is required to have at least one non-zero coefficient per voxel.  This means that the global level of sparsity must be greater than the number of voxels. In contrast, we propose a joint spatial-angular representation of dMRI that will allow us to achieve levels of global sparsity that are below the number of voxels. A major challenge, however, is the computational complexity of solving a global sparse coding problem over large-scale dMRI.  In this work, we present novel adaptations of popular sparse coding algorithms that become better suited for solving large-scale problems by exploiting spatial-angular separability. Our experiments show that our method achieves significantly sparser representations of HARDI than is possible by the state of the art.
\end{abstract}

\begin{keyword} sparse coding; separable dictionaries; Kronecker product; diffusion MRI
\end{keyword}

\end{frontmatter}


\section{Introduction}
\label{sec:intro}
Diffusion magnetic resonance imaging (dMRI) is a medical imaging modality used to analyze neuroanatomical biomarkers for brain diseases such as Alzheimer's.  dMRI are 6D signals consisting of a set of 3D spatial MRI volumes acquired in $k$-space that are each weighted with a different diffusion signal measured in $q$-space.  In each voxel of a brain dMRI, the $q$-space diffusion signals are reconstructed to estimate orientations and integrity of neuronal fiber tracts, \textit{in vivo}.  Different dMRI protocols measure $q$-space in different ways. For example, diffusion spectrum imaging (DSI) \citep{Wedeen:MRM05} measures $q$-space densely on a 3D grid.  Alternatively, diffusion tensor imaging (DTI) \citep{Basser:JMR94} simplifies acquisition by modeling a Gaussian distribution on the unit $q$-sphere.  High angular resolution diffusion imaging (HARDI) \citep{Tuch:MRM2004} also restricts measurements to the unit sphere, but increases the angular resolution from that of DTI.  Multi-Shell HARDI (MS-HARDI) \citep{Wu:Neuroimage07} expands its radial range to include multiple spheres, or shells.
Since DTI collects the fewest number of measurements, it has become the most widely used clinical dMRI protocol.  However, its simple tensor model is unable to capture the complex diffusion profiles in each voxel.  On the other hand, protocols like HARDI, MS-HARDI, and especially DSI, collect a higher number of $q$-space measurements to estimate more accurate diffusion profiles at the expense of longer scan times, making them currently unsuitable for clinical studies.

An ongoing research goal has been to find ways to reduce acquisition times of HARDI, MS-HARDI, or DSI, while maintaining accurate estimations of diffusion.  One avenue is from a hardware perspective: maintain dense signal measurement configurations while devising faster physical acquisition techniques like simultaneous multi-slice acquisition \citep{Setsompop:Neuroimage12} and simultaneous image refocusing \citep{Reese:JMRI09}. 
The other is from a signal processing perspective: maintain accurate signal reconstructions while devising methods to exploit redundancies in the data and reduce the number of required measurements to accelerate acquisition. This paradigm is known as Compressed Sensing (CS) \citep{Donoho:TIT06}.  

CS is a class of mathematical results and algorithms that exploits sparse representations of signals, discovered through sparse coding, to obtain extremely accurate reconstructions at sub-Nyquist rates.  A classical application of CS has been to accelerate structural MRI by subsampling the spatial frequency domain, $k$-space \citep{Lustig:MRM07}, known as $k$-space CS or $k$-CS. These ideas have also been previously applied to dMRI by subsampling the angular frequency domain, $q$-space, \citep{Ning:MIA15} (analogously called $q$-CS) and more recently, to subsample both $k$- and $q$-space \citep{Cheng:IPMI15,Sun:IPMI15}, commonly called ($k,q$)-space CS or ($k,q$)-CS, to further increase acceleration.  However, because the goal of dMRI reconstruction is to estimate diffusivity profiles at each voxel, dMRI signals are traditionally represented as a set of voxel-wise $q$-space signals in the angular domain. Spatial regularization is an important technique used to improve these estimations over an entire dMRI volume \citep{Goh:MICCAI09}, but the underlying data representation of dMRI is still angular and local to each voxel.  Therefore, when applying sparse coding for dMRI, the sparsest possible global representation over an entire volume can be no less than the number of voxels since at least one dictionary atom would be required to model $q$-space signals in each voxel.  

To overcome this fundamental limitation, we propose a global spatial-angular representation of dMRI that allows global sparsity levels to fall \textit{below} one atom per voxel by exploiting redundancies in the spatial and angular domains, \textit{jointly} with a global dictionary.
A major challenge, however, of optimizing over a global dictionary is the computational complexity of solving a massive global sparse coding problem over large-scale dMRI data. Yet, by imposing that our global dictionary is separable over the spatial and angular domains we can greatly improve computational efficiency while preserving good sparsity levels for typical signals.  
One of our main contributions in this paper is a set of novel adaptations of popular sparse coding algorithms to solve general large-scale sparse coding problems using separable dictionaries.
Our experiments on phantom and real HARDI brain data show that it is possible to achieve accurate global HARDI reconstructions with a sparse representation of less than one dictionary atom per voxel, exceeding the theoretical limit of the state of the art in sparse coding. Sparse coding has many important applications like de-noising \citep{Ouyang:IPI13}, dictionary learning \citep{Cheng:MICCAI15} and super-resolution \citep{Yoldemir:CDMRI14}, and, in particular, applying our \textit{joint} spatial-angular sparse coding framework within the application of ($k,q$)-CS will be the subject of future work.

The remainder of this paper is organized as follows: In Section~\ref{sec:state-of-the-art}, we review state-of-the-art sparse coding methods for dMRI and illustrate the limitations of their performance on a phantom HARDI dataset.  In Section~\ref{sec:proposed_framework}, we present our joint spatial-angular dMRI representation and formalize the global spatial-angular sparse coding problem.  Then, in Section~\ref{sec:reconstruction}, we develop and compare a set of novel sparse coding algorithms using separable dictionaries to efficiently solve our large-scale global optimization. Finally, in Section~\ref{sec:experiments} we provide experimental results showing the performance of our method over the state-of-the-art and conclude with a discussion in Section~\ref{sec:conclusion}.

\section{State of the Art}
\label{sec:state-of-the-art}
\subsection{Angular (Voxel-Wise) Reconstruction} A dMRI can be modeled as a 6D signal $\mathcal{S}(v,q)$, where $v \in \Omega \subset \mathbb{R}^3$ is the location of a voxel in the 3D spatial domain $\Omega$ and $q\in\mathbb{R}^3$ is a point in the so-called $q$-space.\footnote{The $q$-space is the frequency domain associated with the angular domain, while the $k$-space is the frequency domain associated with the spatial domain.} A dMRI signal is measured at a discrete number of voxels, $V$, and a discrete number of $q$-space points, $G$.   While dMRI signals can be viewed as a set of $G$ diffusion weighted images (DWIs) or volumes, 
the most common view-point for dMRI processing and analysis is voxel-wise, \ie for each voxel $v\in \Omega$, we acquire a vector of $G$ diffusion measurements $\mathcal{S}(v,q_g)_{g=1}^G = s_v(q_g)_{g=1}^G = s_v$ at points $q_g$ in 3D $q$-space.
The latter interpretation is most common for modeling because a major goal of dMRI reconstruction is to estimate 3D probability distribution functions (PDFs) of fiber tract orientation at each voxel. Accordingly, the signal vector $s_v$ is represented by a $q$-space 
basis, $\Gamma = [\Gamma_i (q)]_{i=1}^{N_\Gamma}$, with $N_\Gamma$ atoms,
such that 
\begin{equation}
s_v = \Gamma a_v.
\end{equation}
where $a_v$ is the vector of angular coefficients at voxel $v$.
The dMRI literature has produced a wide array of dMRI reconstruction algorithms for different acquisition protocols, an artillery of $q$-space bases and varying models for estimating orientation distributions.  
The vast majority of research reconstructs $q$-space signals in each voxel with a $q$-space basis (see the dMRI challenge \citep{Daducci:TMI14} for a comprehensive summary and comparison of state of the art reconstruction frameworks). To enforce or exploit desirable properties of dMRI signals, many methods will add a set of constraints $\mathcal{C}$ on the angular coefficients such as angular smoothing \citep{Ye:MICCAI16}, non-negativity of PDFs \citep{Schwab:MICCAI12,Wolfers:ISBI14}, or orientational symmetry \citep{Gramfort:MIA14}, solving:  
\begin{equation}
\label{eq:recon}
a_v^* = \argmin_{a_v} \frac{1}{2}||\Gamma a_v -s_v||_2^2 \ \ \ \textnormal{s.t.} \ \ a_v \in \mathcal{C}.
\end{equation}  
The constraint of particular interest in our paper is that of enforcing sparsity on the coefficients of the reconstruction, known as \textit{Sparse Coding}.  

\subsection{Angular (Voxel-Wise) Sparse Coding}
Sparse coding is a reconstruction problem which seeks a sparse representation, \ie \ a coefficient vector with few nonzero elements.  Given a sparsifying $q$-space basis $\Gamma$ for which the dMRI signal in each voxel is expected to have a sparse representation, the angular (voxel-wise) sparse coding problem can be formulated as: 
%
\begin{equation}
\label{eq:l0}
a_v^* = \argmin_{a_v} \frac{1}{2}||\Gamma a_v -s_v||_2^2 \ \ \ \textnormal{s.t.} \ \ ||a_v||_0 \leq K_v,
\end{equation}  
where $||a_v||_0$ counts the number of nonzero elements of vector $a_v$, and $K_v$ is the sparsity level at voxel $v$.  This problem is known to be NP-hard, and therefore the two main methodologies to tackle \eqref{eq:l0} are to $a)$ approximate a solution using greedy algorithms such as Orthogonal Matching Pursuit (OMP) \cite{Tropp:TIT04} or $b)$ replace the $L_0$ semi-norm by its convex relaxation, the $L_1$ norm, and solve either the Basis Pursuit or LASSO problem given by: 
\begin{equation}
\label{eq:l1}
a_v^* = \argmin_{a_v} \frac{1}{2}||\Gamma a_v -s_v||_2^2 + \lambda||a_v||_1
\end{equation}
using algorithms such as Alternating Direction Method of Multipliers (ADMM) \citep{Boyd:FTML10} or Fast Iterative Thresholding Algorithm (FISTA) \citep{Beck2009}, where $\lambda$ is the trade-off parameter between data fidelity and sparsity.  
\begin{figure}
\center
\label{fig:odflincomb}
 \includegraphics[width=.8\linewidth,height=.2\linewidth,trim=0 0 0 0,clip]{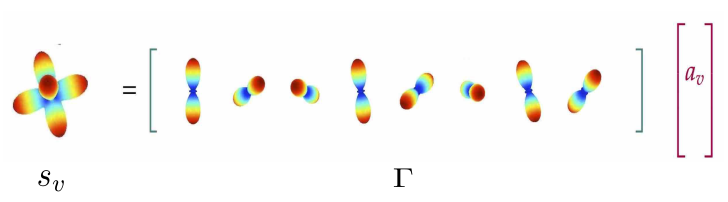}
 \caption{Illustration of voxel-wise angular HARDI representation $a_v$ using a sparsifying dictionary $\Gamma$.}
\end{figure}

An important application of sparse coding in the dMRI community is that of CS. Angular sparse coding and $q$-CS have been widely researched for dMRI to reduce long acquisition times.  Many groups have done extensive work 
choosing sparsifying $q$-space bases \citep{Merlet:ISBI11,Ning:MIA15,Aranda:MIA15}, developing dictionary learning methods \citep{Bilgic:MRM12,Cheng:MICCAI13,Cheng:MICCAI15,Gramfort:MICCAI12,Gramfort:MIA14,Merlet:MICCAI12,Merlet:MIA13,Sun:IPMI13}, and testing $q$-space subsampling schemes for DSI \citep{Gramfort:MIA14,Merlet:MICCAI10, Menzel:MRM11,Merlet:MIA13-2,Paquette:MRM15}, MS-HARDI \citep{Cheng:MICCAI11,Rathi:MICCAI11,Merlet:MIA13,Duarte:MRM14,Daducci:NeuroImage15}, HARDI \citep{Michailovich:MICCAI10,Michailovich:TIP10,TristanVega:MICCAI11,Duarte:MICCAI12, Alaya:ISP16} and DTI \citep{Landman:Neuroimage12} with promising results in sparsity and measurement reduction for clinical tractography \citep{Kuhnt:Neurosurgery13, Kuhnt:PLoS13}.  However, a major limitation for this family of methods is that the sparsest possible representation of an entire dMRI dataset can be no less than the number of voxels since $||a_v||_0 \ge 1 \  \forall \ v \in \Omega$. In CS applications, this induces fundamental limitations in the amount of subsampling factors that may be achievable in $q$-space. 
In practice, the spatial-angular sparsity level will be much greater than the number of voxels, to account for noise.  For example the work of \citep{Michailovich:ISBI08,Michailovich:TIP10} report an average sparsity level of $6$ to $10$ atoms per voxel. The methods presented in the next section attempt to improve upon these results by exploiting spatial redundancies and reducing measurements in $k$-space.

\subsection{Angular Sparse Coding with Spatial Regularization}
Incorporating spatial information into voxel-wise reconstruction is a well utilized technique for increasing the accuracy of reconstruction.  The following is a general formulation for including spatial regularization into the angular sparse coding problem:
\begin{equation}
\label{eq:VoxReconMult}
A^* = \argmin_A ||\Gamma A - S||_F^2 + \lambda ||A||_1 + \mathcal{R} (A),
\end{equation}
where $S = [s_1 \dots s_V] \in \mathbb{R}^{G \times V}$ is the concatenation of signals $s_v\in \mathbb{R}^{G}$ sampled at $G$ gradient directions over $V$ voxels, $A = [a_1 \dots a_V] \in \mathbb{R}^{N_\Gamma \times V}$ is the concatenation of angular coefficients and $\mathcal{R}(A)$ is a spatial regularizer that depends on the angular representation $A$. Here $||X||_F = \sqrt{\sum_i \sum_j |X_{i,j}|^2}$ is the Frobenius norm and $||X||_1 = \sum_i \sum_j |X_{i,j}|$ is the 1-norm taken over all elements of the matrix. In particular when $\mathcal{R} = 0$, this reduces to solving \eqref{eq:l1} for each voxel.  When $\lambda=0$ and $\mathcal{R}(A) = \sum_i\sum_{j \in \mathcal{N}_i} \|a_i -a_j\|^2$ (Laplacian regularization), where $\mathcal{N}_i$ is a local spatial neighborhood of voxel $i$, this is the general non-sparse reconstruction with spatial coherence \citep{Goh:MICCAI09}. Some have found incorporating both the angular sparsity constraint $\lambda ||A||_1$ and spatial coherence $\mathcal{R}(A)$ beneficial for applications such as de-noising \citep{Ye:ISBI12,Ouyang:IPI13,Ouyang:IJBRA14,Ye:MICCAI16} and tractography \citep{Ye:MIA17}.  

Spatial regularization within sparse coding is more prominently used for the application of reducing redundancies for CS.
For example, \citep{Michailovich:TMI11,Rathi:MIA14,Auria:Neuroimage15} enforce spatial smoothing for $q$-CS while \citep{Ning:Neuroimage16,Yin:CCPR16} combine $q$-CS with super-resolution reconstruction of the spatial domain. To further accelerate dMRI, the recent work of \citep{Chao:MRM17} combines CS with parallel imaging but reconstructs the signals in $k$-space and $q$-space separately in sequence.  A joint ($k,q$)-space reconstruction is important for maintaining coherence throughout the dataset.  As such, the works of \citep{Awate:ISBI13,Shi:MRM15,Cheng:IPMI15,Sun:IPMI15,Mcclymont:MRM15,Mani:MRM15} combine $k$- and $q$-CS by adding a data fidelity term for $k$-space subsampling and an additional spatial sparsity term.  
In total, however, while each of these works may be applied to different diffusion models and acquisition protocols testing various subsampling schemes, sparsifying transforms and dictionaries, each are based on an angular representation of dMRI data, $A$. In fact, they stem from the same optimization problem formulation \eqref{eq:VoxReconMult} with
\begin{equation}
\label{eq:spatialreg}
 \mathcal{R}(A) = \alpha ||\mathcal{T}( \Gamma A)||_1,
 \end{equation}
where $\alpha \geq 0$ is an additional trade-off weighting parameter, and $\mathcal{T}(\cdot)$ is a sparsifying transform (or dictionary) applied to the spatial domain such as wavelets or the finite difference gradient operator, leading to the usual total variation (TV) norm.  In \eqref{eq:spatialreg}, $\Gamma A$ is a reconstruction of the signal $S$ based on the angular representation $A$. 

While adding these spatial and angular sparsity terms may exploit redundancies in both the spatial and angular domains, because they are separate disjoint terms the minimal global sparsity level will be still limited by the size of the data since $||A||_0$ should be greater than $V$ and $||\Psi (\Gamma A)||_0$ should be greater than $G$. Indeed, when $||A||_0 < V$, there must exist voxels $v$ such that $a_v = 0$, leading to a zero valued signal $s_v$ (column of $S$) in that voxel.  Likewise, when $||\Psi (\Gamma A)||_0 < G$, there must exist some gradient directions, $q_g$, such that the signal in the entire volume $s_{q}$ (rows of $S$) equals zero.  This becomes a problem because zero valued signals are not physically representative of real dMRI data.  This also becomes a heuristic limitation of prior methods for appropriately choosing trade-off parameters $\lambda$ and $\xi$ that result in a physically accurate sparsity level. 

In the next section we will explicitly show the limitation of  sparsity on phantom HARDI data. 
Table~\ref{table:state-of-the-art} organizes the recent literature's usages of sparse coding and CS for dMRI and places the proposed work in context compared to the state of the art. There we use the term ``Spatial + Angular" Sparse Coding to emphasize that the state of the art perform both spatial and angular sparse coding, but not jointly. As an important note, though we frame our proposed sparse coding method in Table~\ref{table:state-of-the-art} with the backdrop of CS, we do not propose or implement CS in the current manuscript.
\begin{table}[t]
\center
\footnotesize{
\begin{tabular}{cc|c|C{3.5cm}|C{3.5cm}|c|}
\cline{3-6}
& & \multicolumn{4}{ c| }{Sparse Coding} \\ \cline{3-6}
& & \multicolumn{1}{ c| }{Spatial} & \multicolumn{1}{  c| }{Angular} & \multicolumn{1}{ c| }{Spatial + Angular} & \multicolumn{1}{ c| }{ Joint Spatial-Angular } \\ \cline{1-6}
\multicolumn{1}{ |c  }{\multirow{15}{*}{CS}}&
\multicolumn{1}{ |c| }{$k$} & 
\citep{Lustig:MRM07} & & & \\ \cline{2-6}
\multicolumn{1}{ | }{}  &
\multicolumn{1}{ |c| }{$q$} & & 
\citep{Paquette:MRM15}\newline \citep{Auria:Neuroimage15}\newline \citep{Ning:MIA15}\newline \citep{Cheng:MICCAI15}\newline \citep{Daducci:NeuroImage15}\newline \citep{Aranda:MIA15}\newline \citep{Alaya:ISP16} \newline \citep{Ning:Neuroimage16}\newline \citep{Yin:CCPR16}
& &   \\ \cline{2-6}
\multicolumn{1}{ |  }{}  &
\multicolumn{1}{ |c| }{($k,q$)} & & &  \citep{Shi:MRM15}\newline \citep{Cheng:IPMI15}\newline \citep{Sun:IPMI15} \newline \citep{Mcclymont:MRM15} \newline \citep{Mani:MRM15} & Proposed$^\dag$  \\ \cline{1-6}
\end{tabular}}
\caption{Summary of the state-of-the-art dMRI sparse reconstruction methods organized by domains of sparsity (spatial,angular) and CS subsampling ($k,q$).  The literature has provided a natural extension from $k$-CS in MRI using spatial sparse coding to $q$-CS in dMRI angular sparse coding.  However, for ($k,q$)-CS, the state of the art enforce sparsity in the spatial and angular domains separately, (called ``Spatial + Angular" Sparse Coding) with a purely angular representation. In contrast, the proposed work considers a joint spatial-angular representation for sparse coding which is a more natural model for joint ($k,q$)-CS. ($^\dag$Though our proposed spatial-angular sparse coding framework is intended for the application of $(k,q)$-CS as illustrated by this table, the work presented in this paper is only for sparse coding.)}
\label{table:state-of-the-art}
\end{table}

\begin{figure}
\center
 \includegraphics[width=.25\linewidth,trim=0 0 0 0,clip]{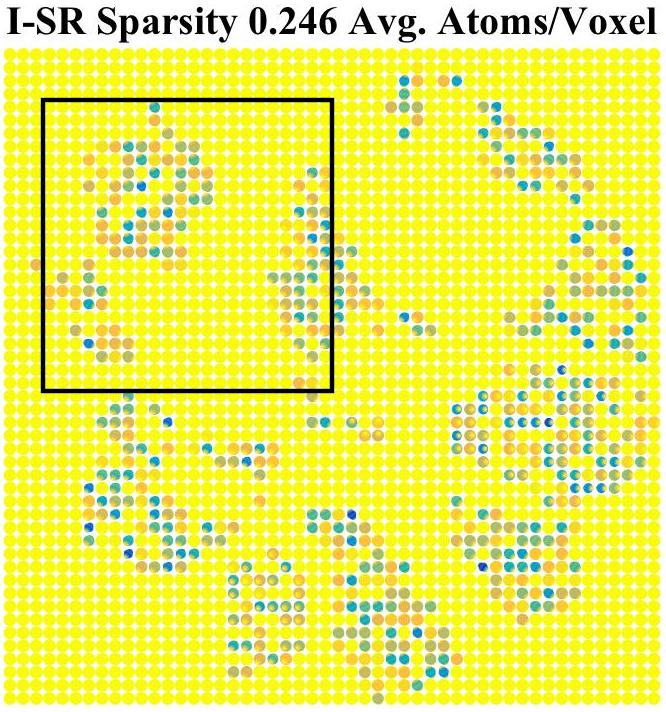}
     \includegraphics[width=.25\linewidth,trim=0 0 0 0,clip]{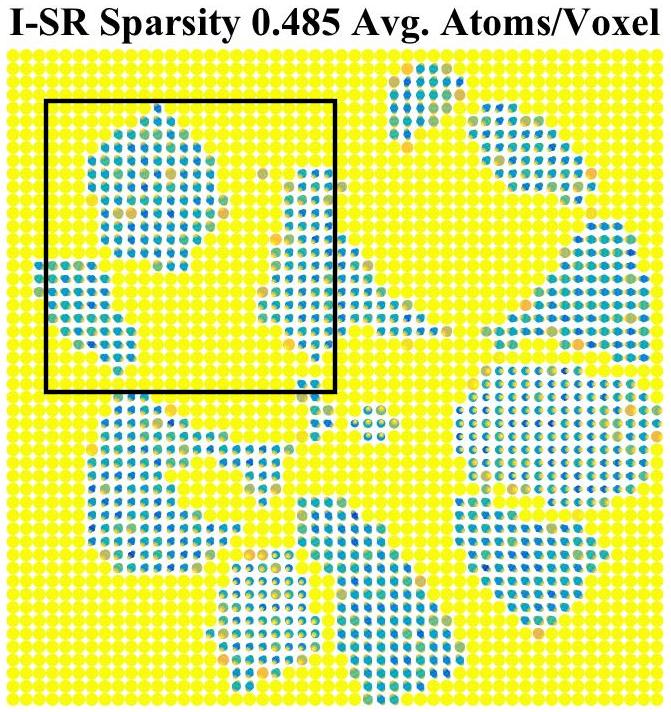}
      \includegraphics[width=.25\linewidth,trim=0 0 0 0,clip]{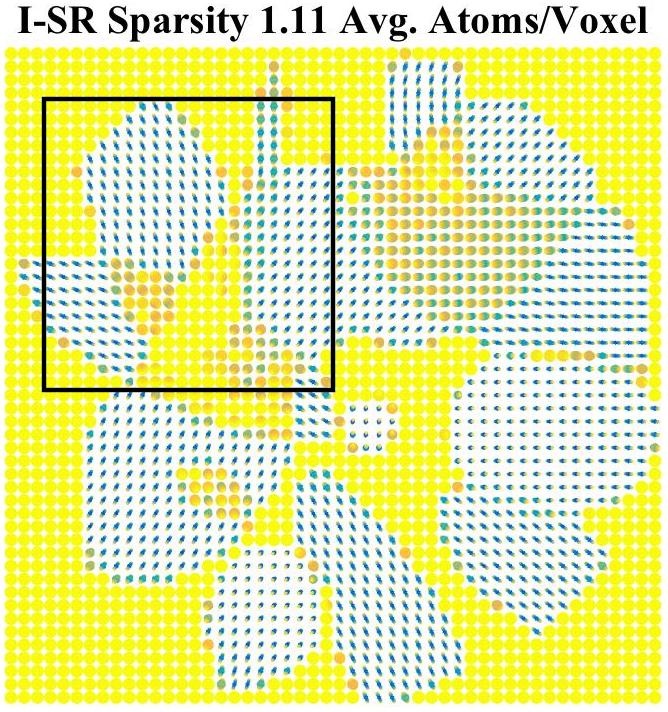}\\
      \includegraphics[width=.25\linewidth,trim=0 0 0 0,clip]{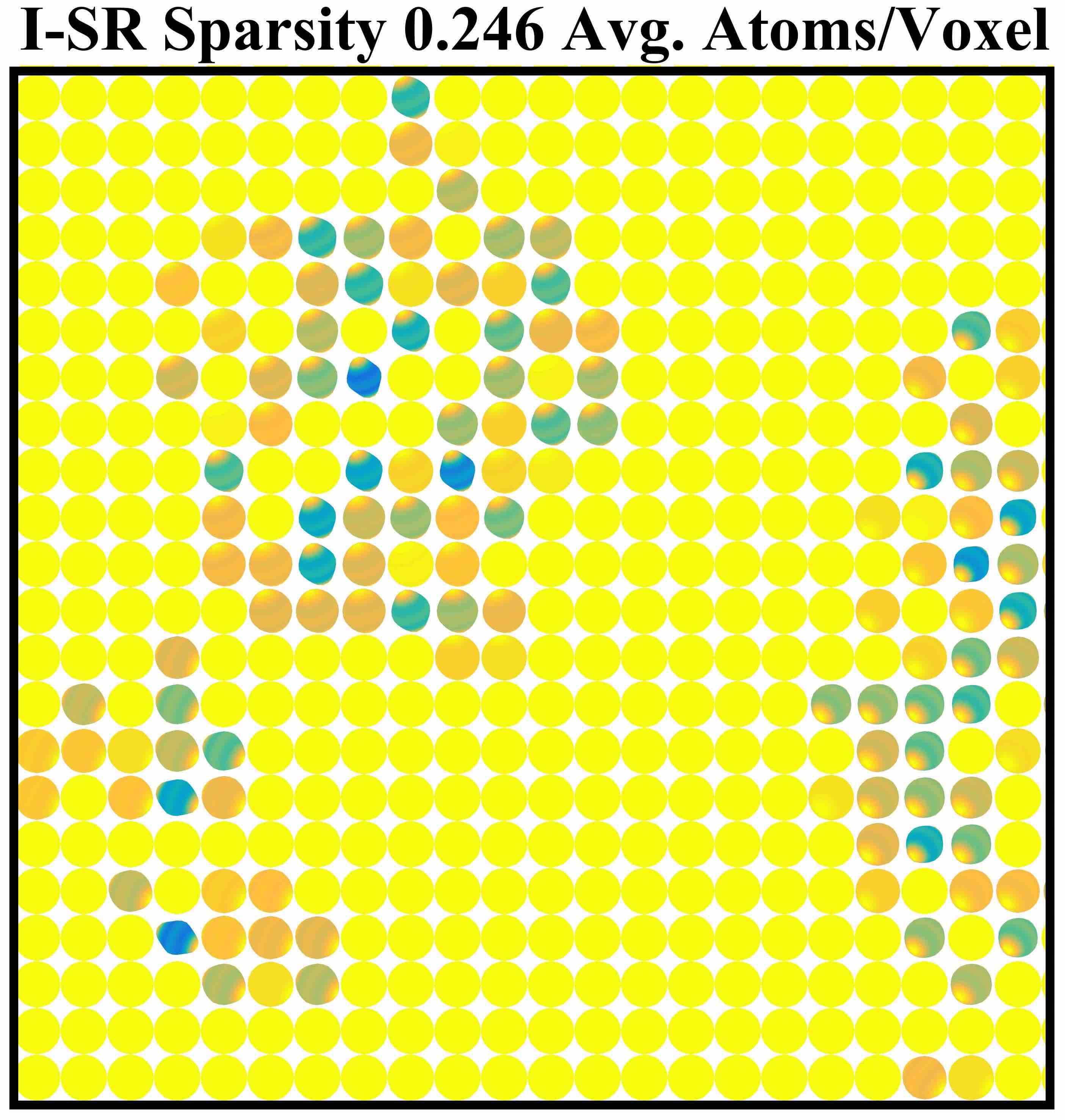}
    \includegraphics[width=.25\linewidth,trim=0 0 0 0,clip]{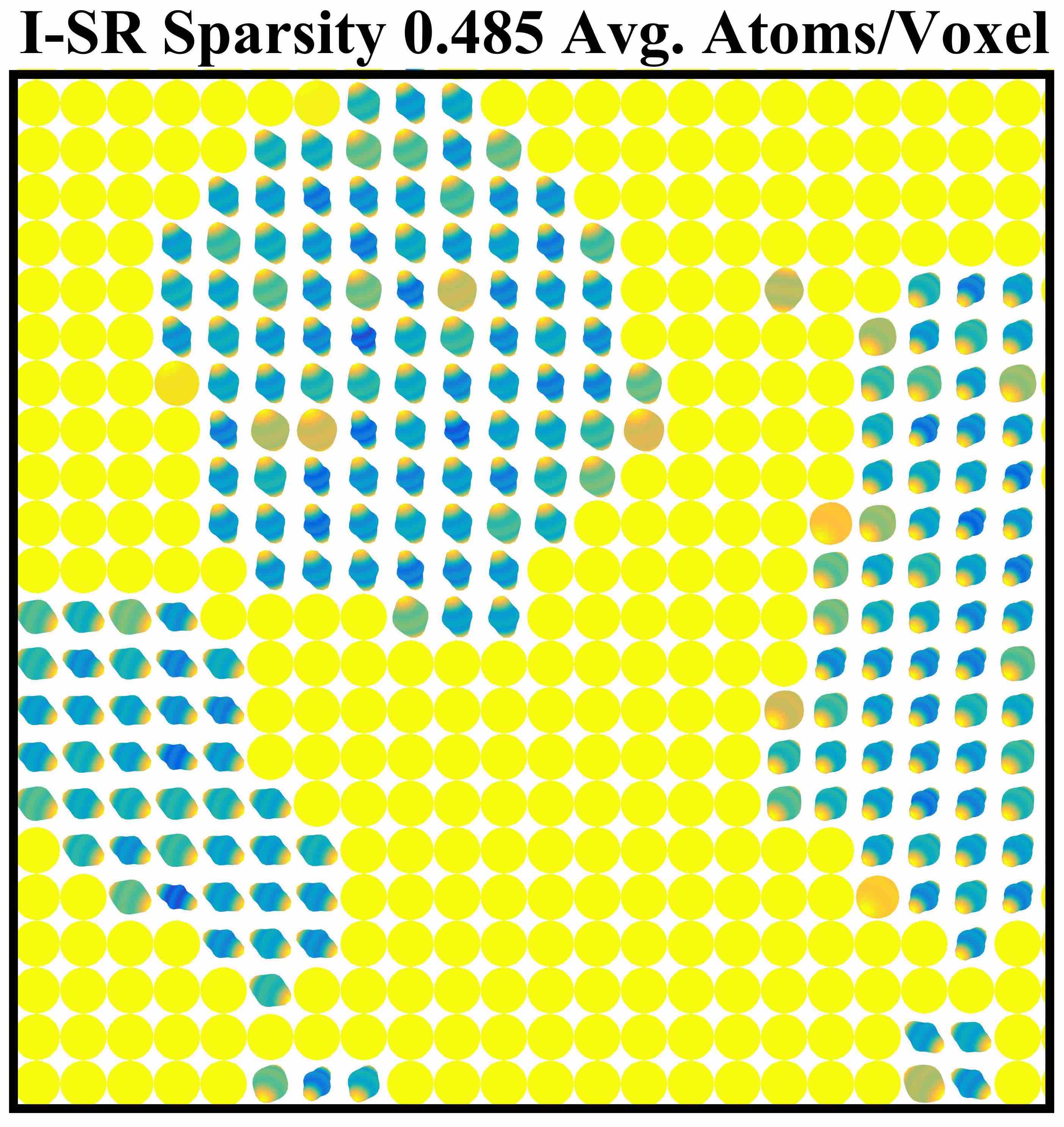}
     \includegraphics[width=.25\linewidth,trim=0 0 0 0,clip]{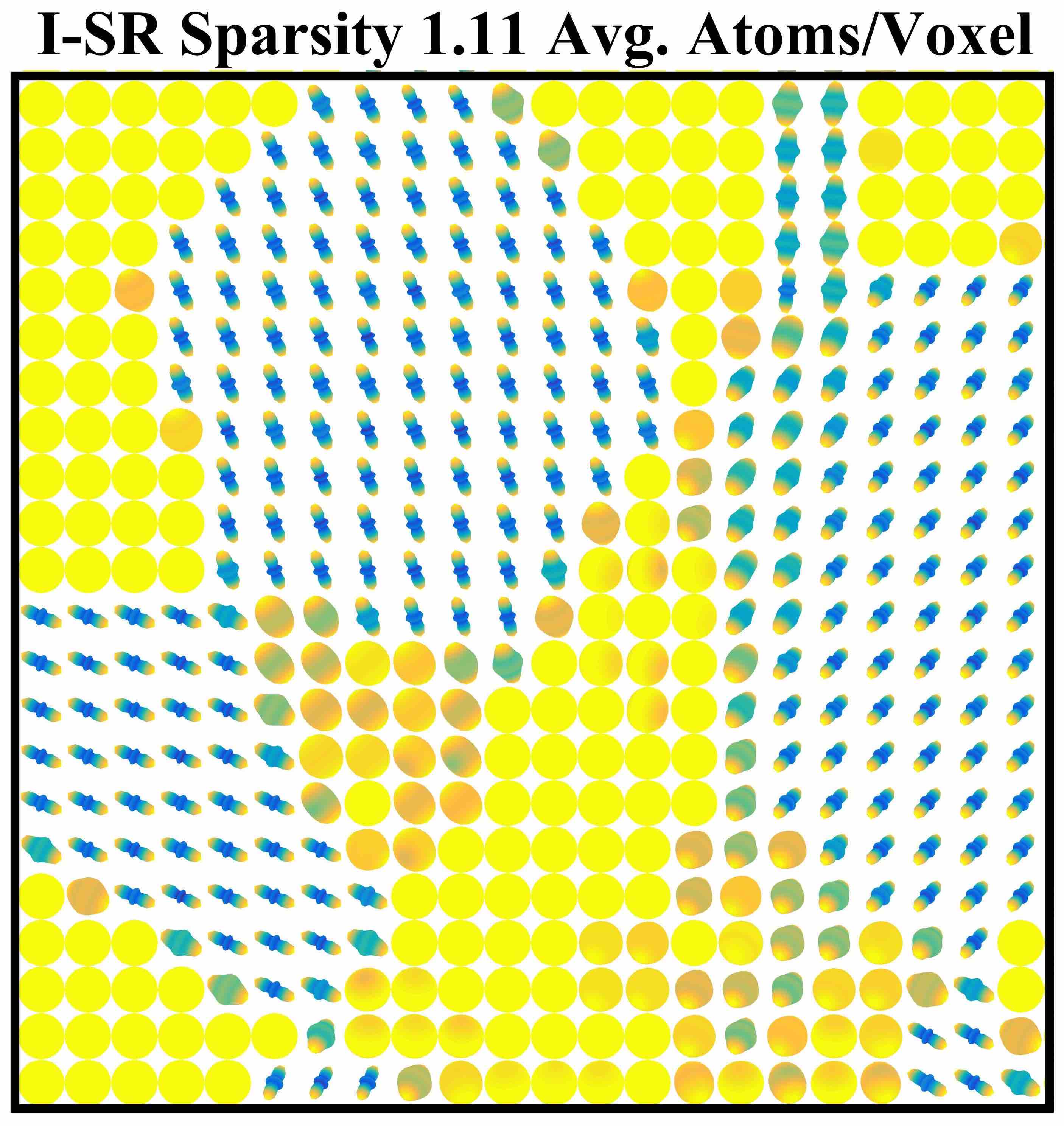}\\
 \includegraphics[width=.25\linewidth,trim=0 0 0 0,clip]{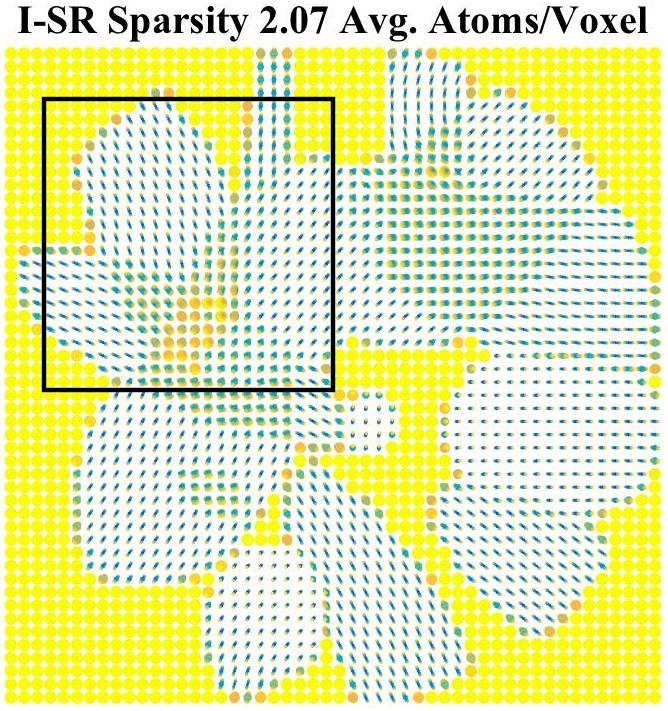}
  \includegraphics[width=.25\linewidth,trim=0 0 0 0,clip]{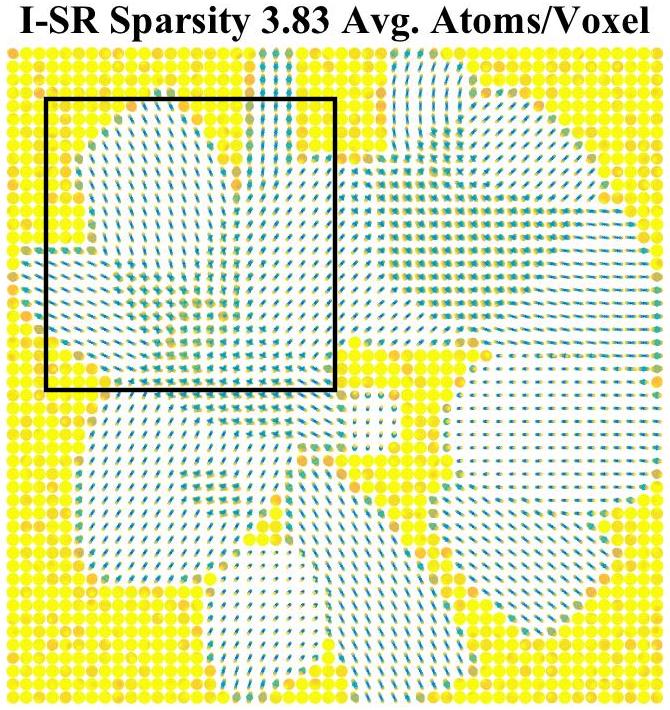}
        \includegraphics[width=.25\linewidth,trim=0 0 0 0,clip]{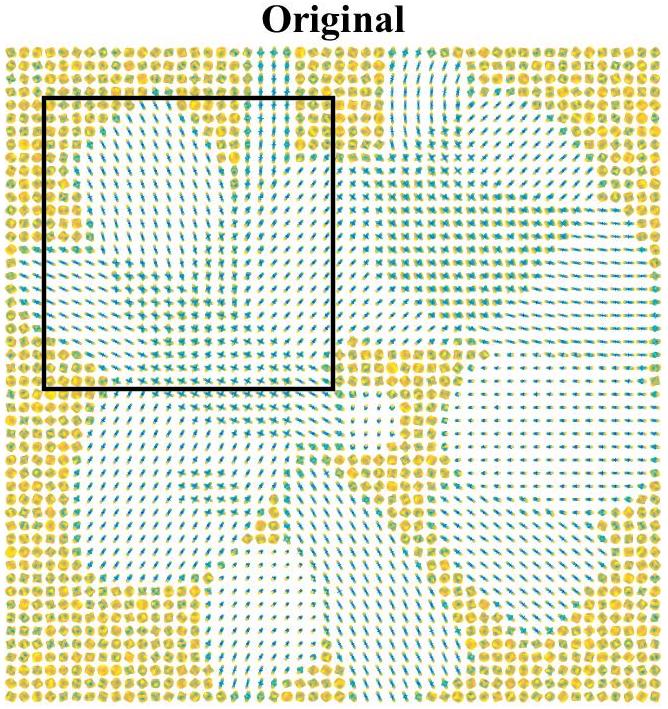}\\
      \includegraphics[width=.25\linewidth,trim=0 0 0 0,clip]{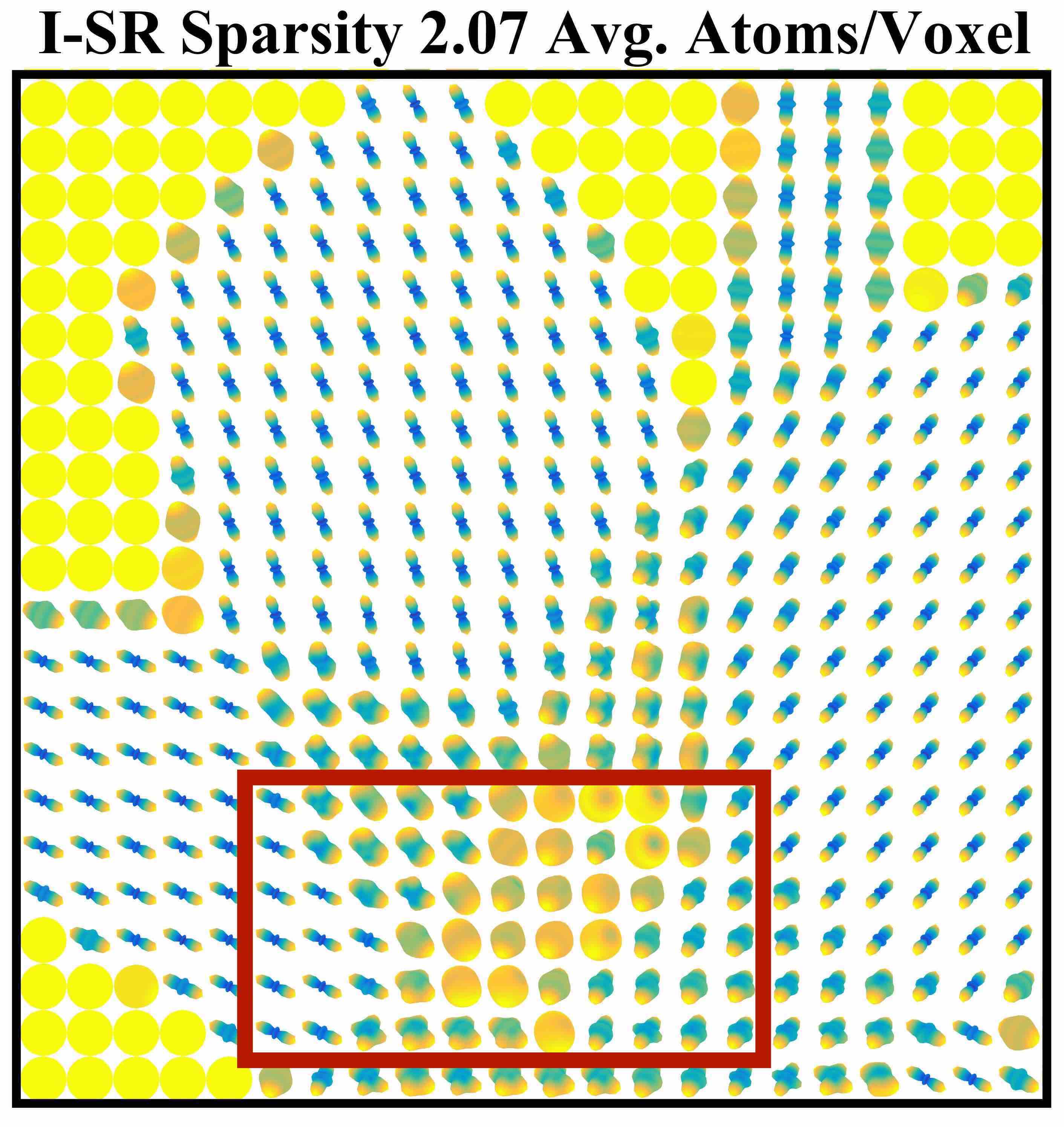}
        \includegraphics[width=.25\linewidth,trim=0 0 0 0,clip]{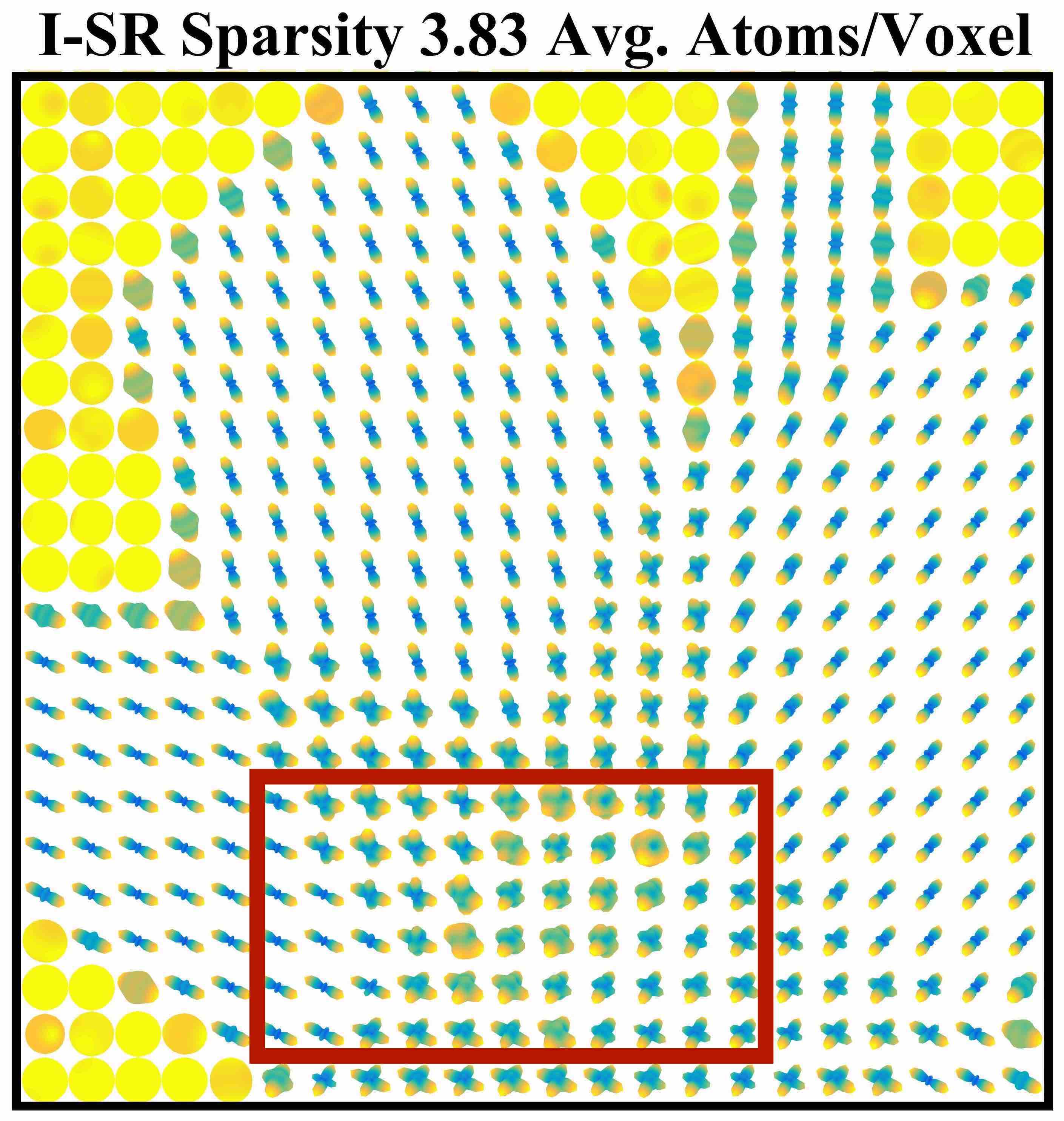}
    \includegraphics[width=.25\linewidth,trim=210 190 170 65,clip]{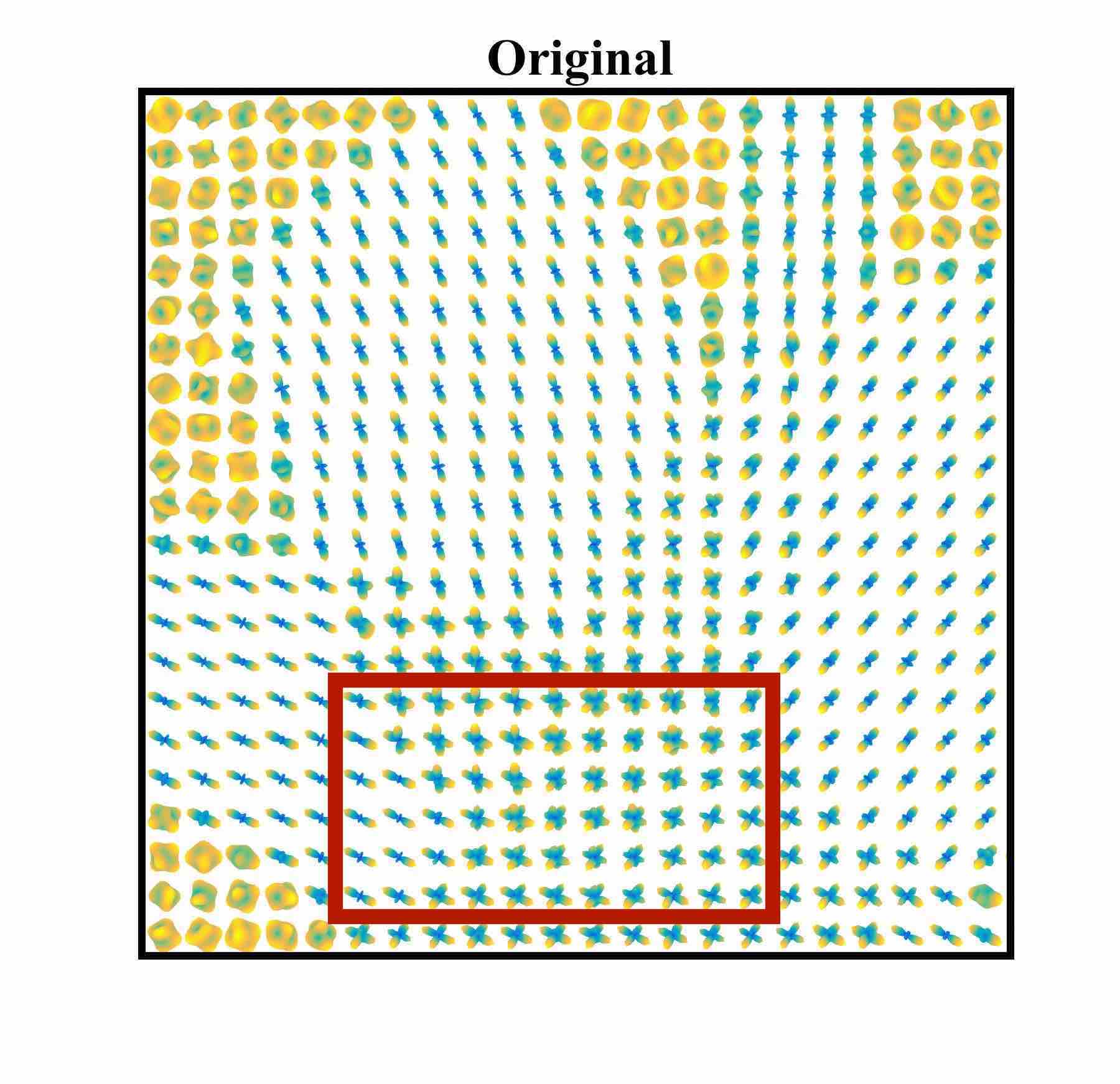}\\
      \includegraphics[width=.25\linewidth,height=.135\linewidth,trim=0 0 0 0,clip]{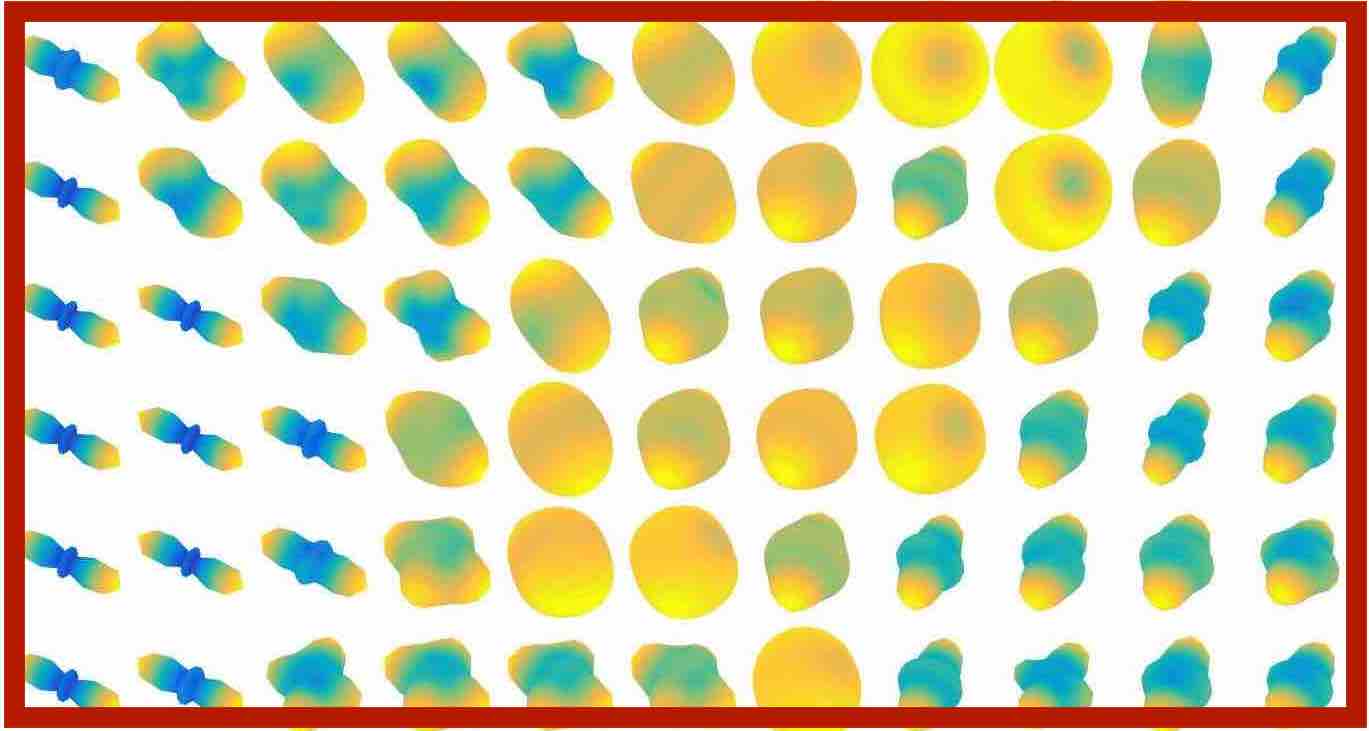}
    \includegraphics[width=.25\linewidth,trim=0 0 0 0,clip]{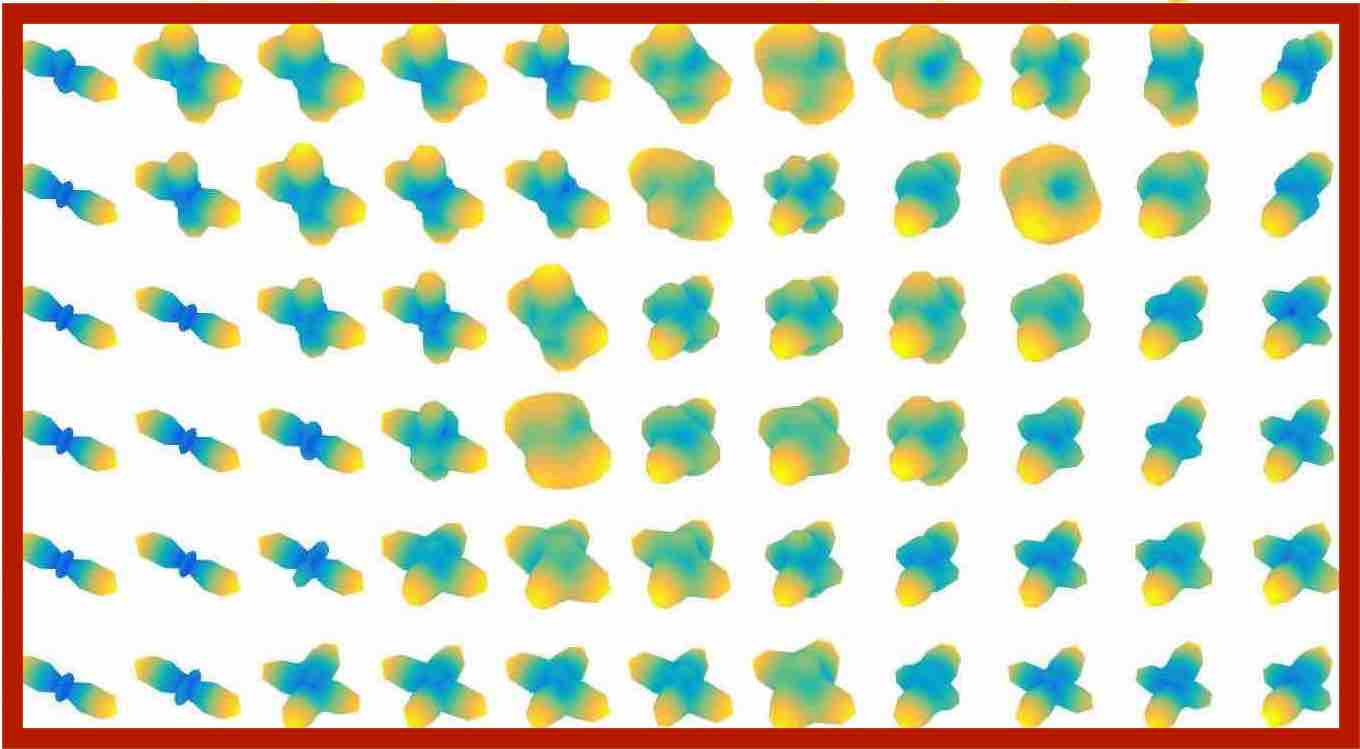}
        \includegraphics[width=.25\linewidth,height=.135\linewidth,trim=0 0 0 0,clip]{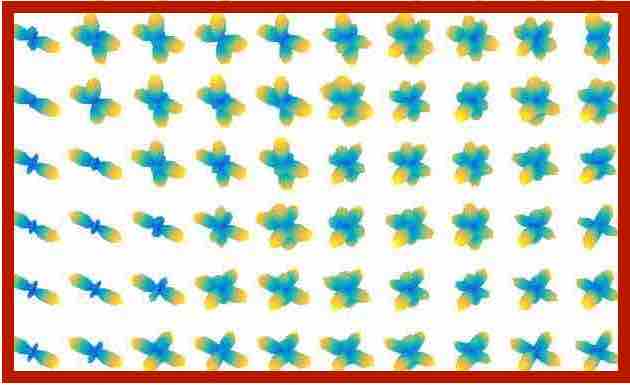}
\caption{Qualitative demonstration of state-of-the-art sparse coding limitations \eqref{eq:VoxReconMult} with the spherical ridgelets (SR) dictionary for 5 different spatial-angular sparsity levels compared to the original signal (bottom right) with ROI closeups underneath.  For high spatial-angular sparsity levels (top left, middle), voxels with complex signals are forced to zero (yellow spheres).  Regions with crossing fibers are unable to be accurately reconstructed even when using an average of 2.07 atoms/voxel. The label I-SR refers to Identity-SR, explained in the next section.}
 \label{fig:IDanalysis}
\end{figure}

\subsection{Limitations of Angular Representations for Sparse Coding}
We illustrate the limitations of sparse coding using a per-voxel angular representation on a HARDI phantom dataset with $V\!=\!50\!\times\!50$ and $G=64$ gradient directions (the same data is used in our experiments in Section~\ref{sec:experiments}). First, we solve \eqref{eq:VoxReconMult} with $\mathcal{R}(A) = 0$, showing qualitative reconstruction results in Figure~\ref{fig:IDanalysis}, for various sparsity levels given by the value of $\lambda$.  Our second result considers the effect of spatial regularization $\mathcal{R}(A) \neq 0$ on the amount of angular sparsity as a function of the reconstruction error in Figure~\ref{fig:angTV}.

For this setting, we choose angular basis $\Gamma$ to be the well performing overcomplete spherical ridglet (SR) dictionary \citep{Michailovich:ISBI08,Michailovich:MICCAI10, TristanVega:MICCAI11}. Figure~\ref{fig:IDanalysis} shows the ODF estimations (computed using the spherical wavelets \citep{TristanVega:MICCAI11}) from the sparse signal reconstruction for various sparsity levels compared to the ODFs estimated from the original signal, as well as close-ups of a region of interest (ROI) containing ODFs with complex crossings of 2, 3 and 4 fibers. In order to compare spatial-angular sparsity levels we are interested in the average number of active dictionary atoms over all voxels, \ie\ $||A||_0/V$.  We use 5 different values of $\lambda$ which gives us average spatial-angular sparsity levels of $0.246, 0.485, 1.11, 2.07$, and $3.84$ atoms per voxel. As expected, when $||A||_0/V < 1$ (see top left/middle), many voxels are forced to zero (as indicated by yellow spheres in Figure~\ref{fig:IDanalysis}). This is especially true for isotropic signals surrounding the fiber tracts. Also as expected, when $||A||_0/V \approx 1$, (see top right) many of the complex signals in the fiber crossing ROI are pushed to zero.  This model requires close to $||A||_0/V = 4$ average atoms per voxel to achieve nearly accurate signal reconstruction (bottom middle).  In fact, the actual number of coefficients per voxel to accurately represent typical dMRI data with angular bases is substantially higher. We illustrate this in Figure~\ref{fig:nonzerocount} which shows the number of atoms used to represent the HARDI signals in each voxel for the reconstructions in Figure~\ref{fig:IDanalysis}. The bottom right image shows the ground truth number of fibers crossing in each voxel. This experiment demonstrates that voxels containing crossing fibers are forced to zero atoms when the average number of atoms per voxel is very small and tend to 6-12 atoms for accurate reconstruction when the sparsity level is decreased. This is consistent with the reports of \citep{Michailovich:ISBI08,Michailovich:TIP10} for the SR dictionary. 

Next, we explore the effect of adding spatial regularization $\mathcal{R}$ to the angular sparsity penalty, as a function of the reconstruction error. As a common spatial regularizer used in the literature, we consider for $\mathcal{T}$ in \eqref{eq:spatialreg} the finite difference (gradient) operator $\mathcal{T} = \nabla := [\partial_x, \partial_y, \partial_z]$ and the corresponding isotropic TV norm given by $||\nabla(X)||_{2,1} = ||\sqrt{|\partial_x X|^2 + |\partial_y X|^2 + |\partial_z X|^2}||_1$. This leads to the new optimization problem
\begin{equation}
\label{eq:TV}
    A^* = \argmin_A ||\Gamma A - S||_F^2 + \lambda ||A||_1 + \alpha||\nabla(\Gamma A)||_{2,1},
\end{equation}
for various $\lambda$ and $\alpha \geq 0$, the relative weight of spatial regularization. This can be solved using Split-Bregman as in \cite{Michailovich:TMI11}. Figure~\ref{fig:angTV} shows the effect of nonzero $\alpha$ on angular sparsity compared to the case of $\alpha=0$ ($\mathcal{R}=0$) on a small $30\times30$ segment of the phantom HARDI data. As we can see, in all cases, the minimal sparsity for accurate reconstruction does not go below the limit of 5 atoms per voxel. In addition, increasing the relative weight of the TV norm spatial regularization actually results in an increase in angular sparsity for a given reconstruction error. In a sense, this is not surprising since the additional regularizer $\mathcal{R}$ will enforce spatial smoothness of the reconstructed signal (which can be beneficial for noisy data and in compressed sensing scenarios) but cannot improve the resulting sparsity of the solution which is still represented by a set of coefficients per voxel in the angular basis $\Gamma$.  
As the goal of this paper is sparse coding, i.e finding sparsest possible representations of full HARDI data, in our later experimental comparisons, we will be using $\mathcal{R}=0$ when referring to state-of-the-art reconstruction.

In the following section, we present our global spatial-angular representation of dMRI which allows for spatial regularization with the possibility to achieve accurate reconstruction at sparsity levels below the number of voxels, unachievable with an angular representation alone.

\begin{figure}[t]
\center
 \includegraphics[width=.25\linewidth,trim=0 0 0 0,clip]{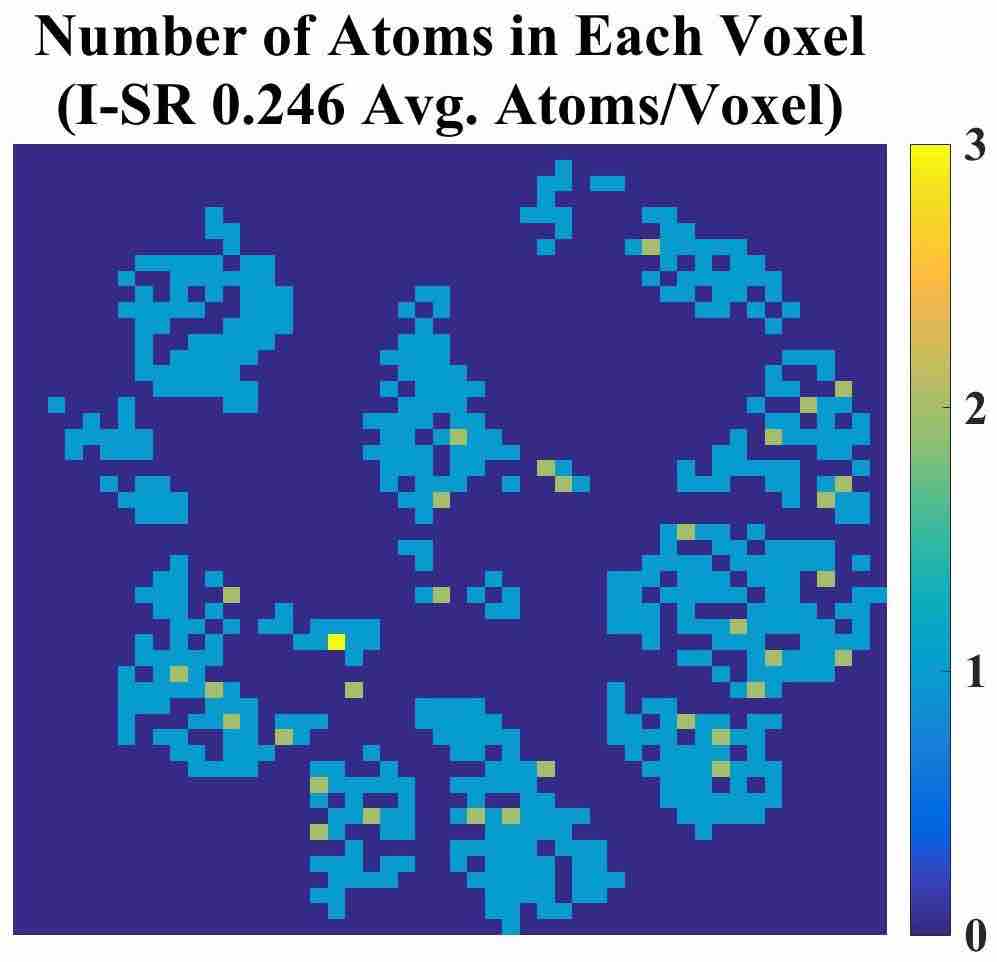}
 \includegraphics[width=.25\linewidth,trim=0 0 0 0,clip]{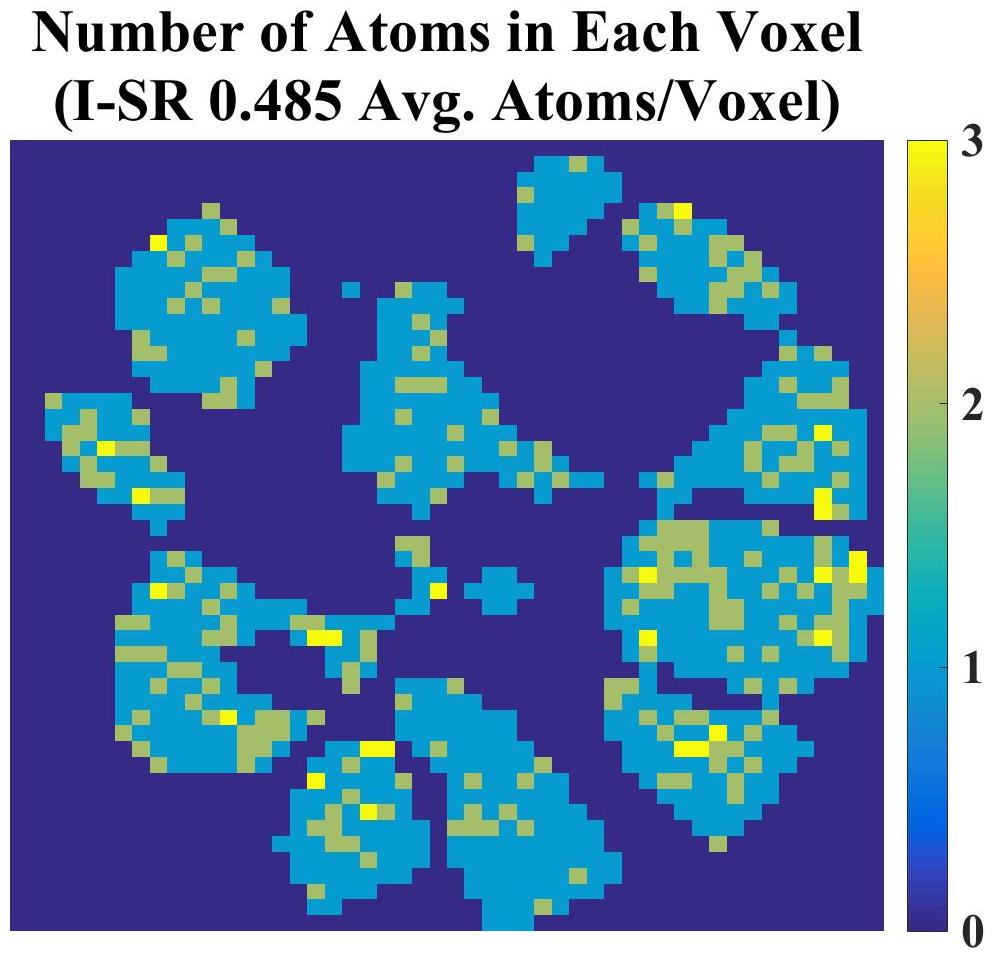}
 \includegraphics[width=.25\linewidth,trim=0 0 0 0,clip]{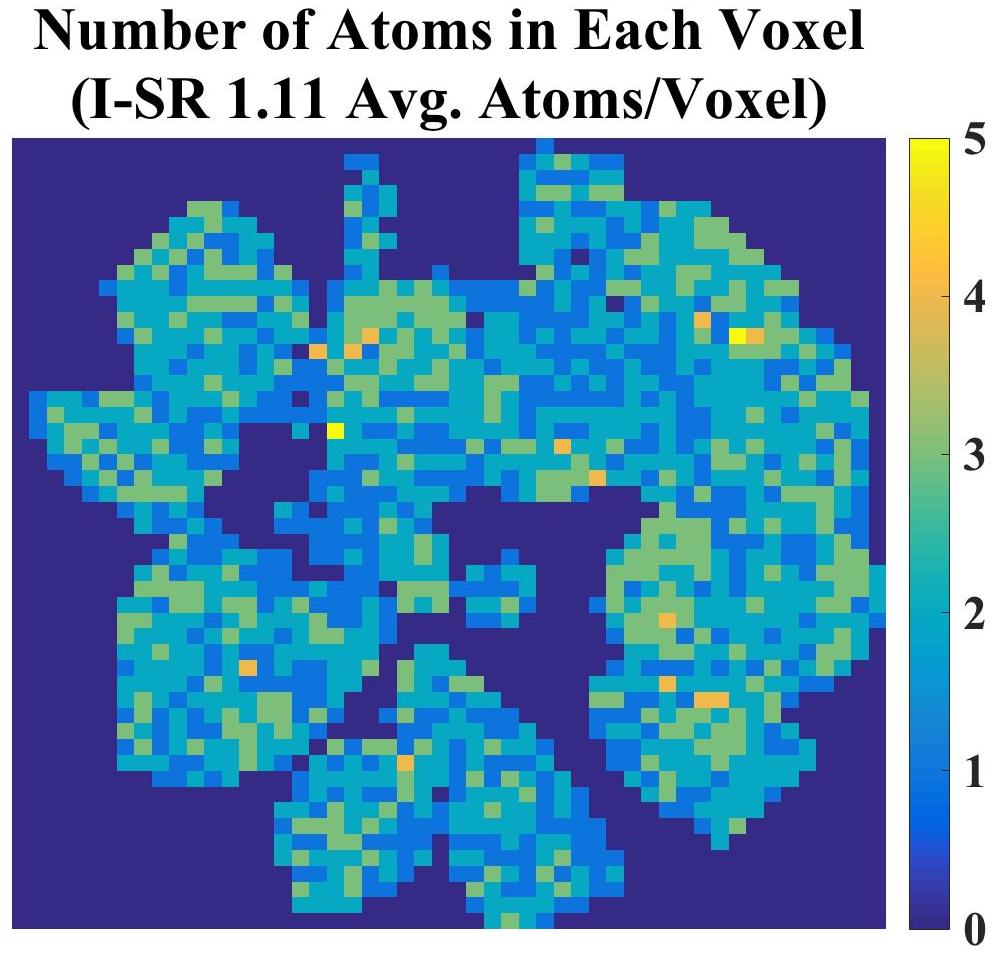}\\
 \includegraphics[width=.25\linewidth,trim=0 0 0 0,clip]{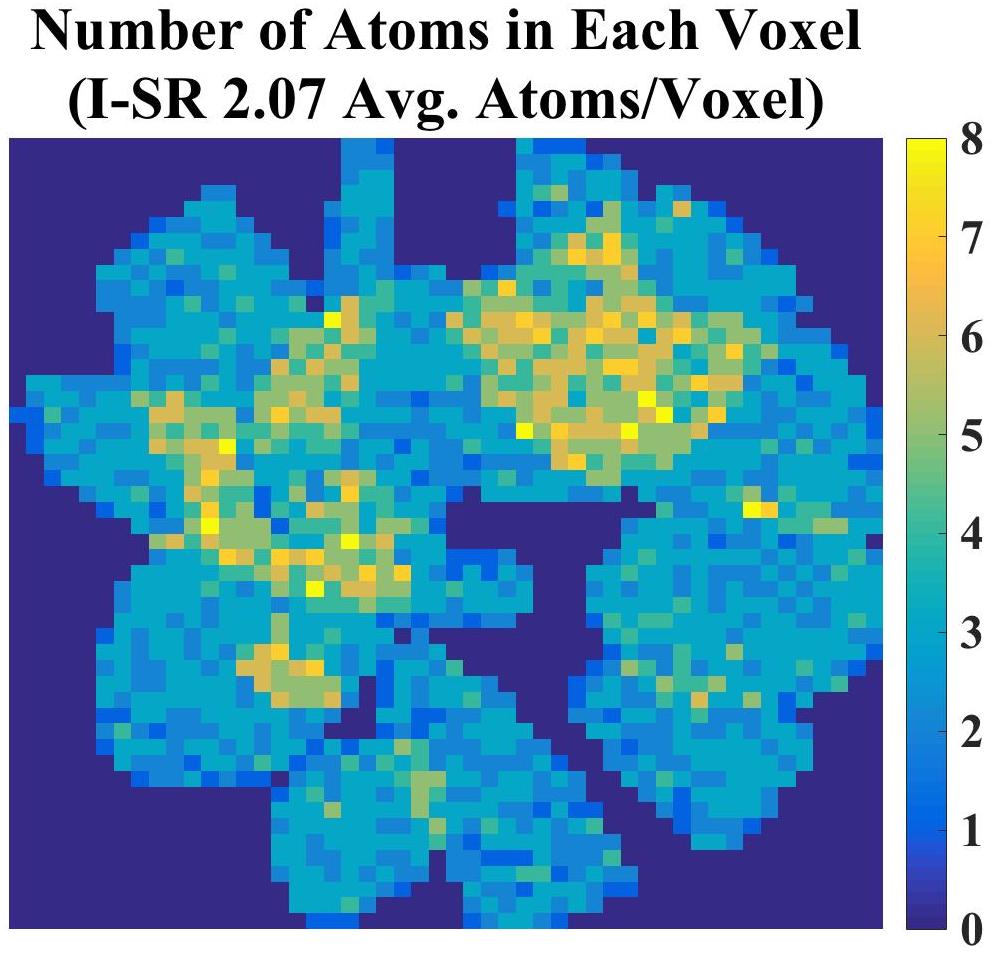}
 \includegraphics[width=.25\linewidth,trim=0 0 0 0,clip]{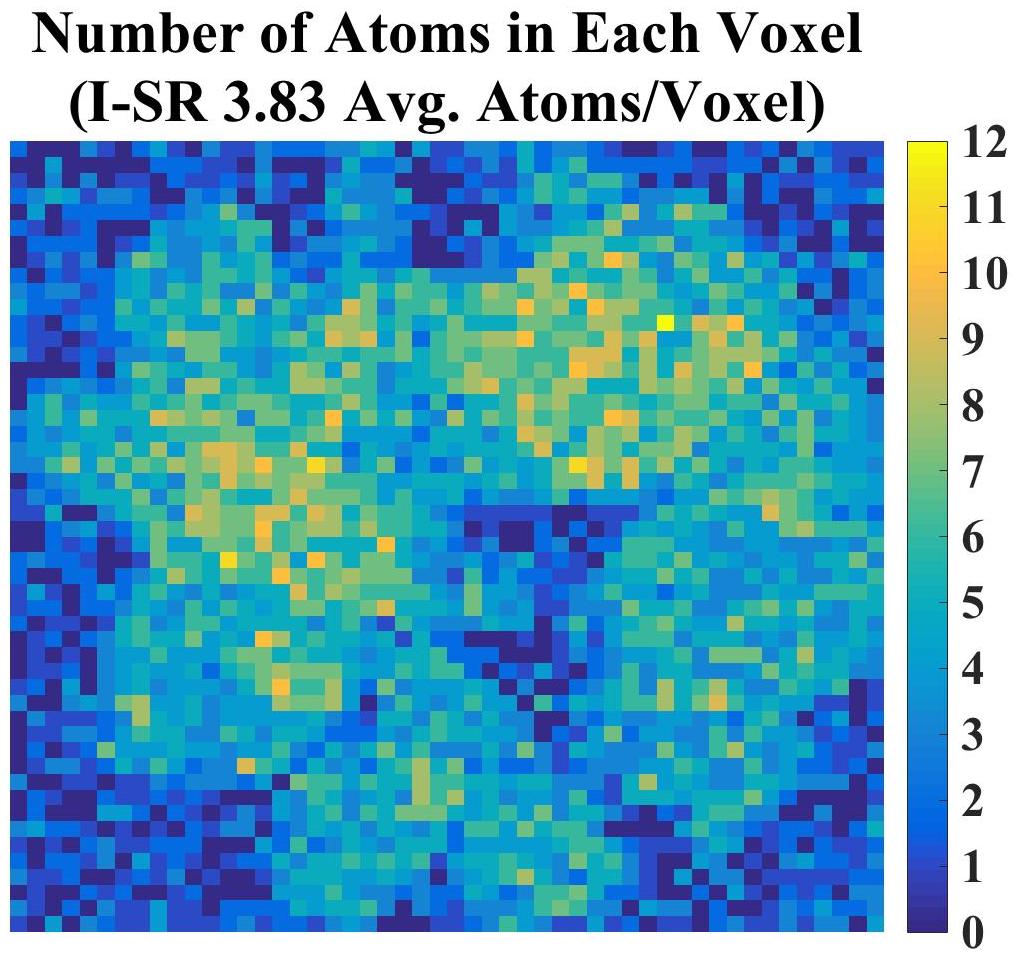}
 \includegraphics[width=.25\linewidth,trim=0 0 0 0,clip]{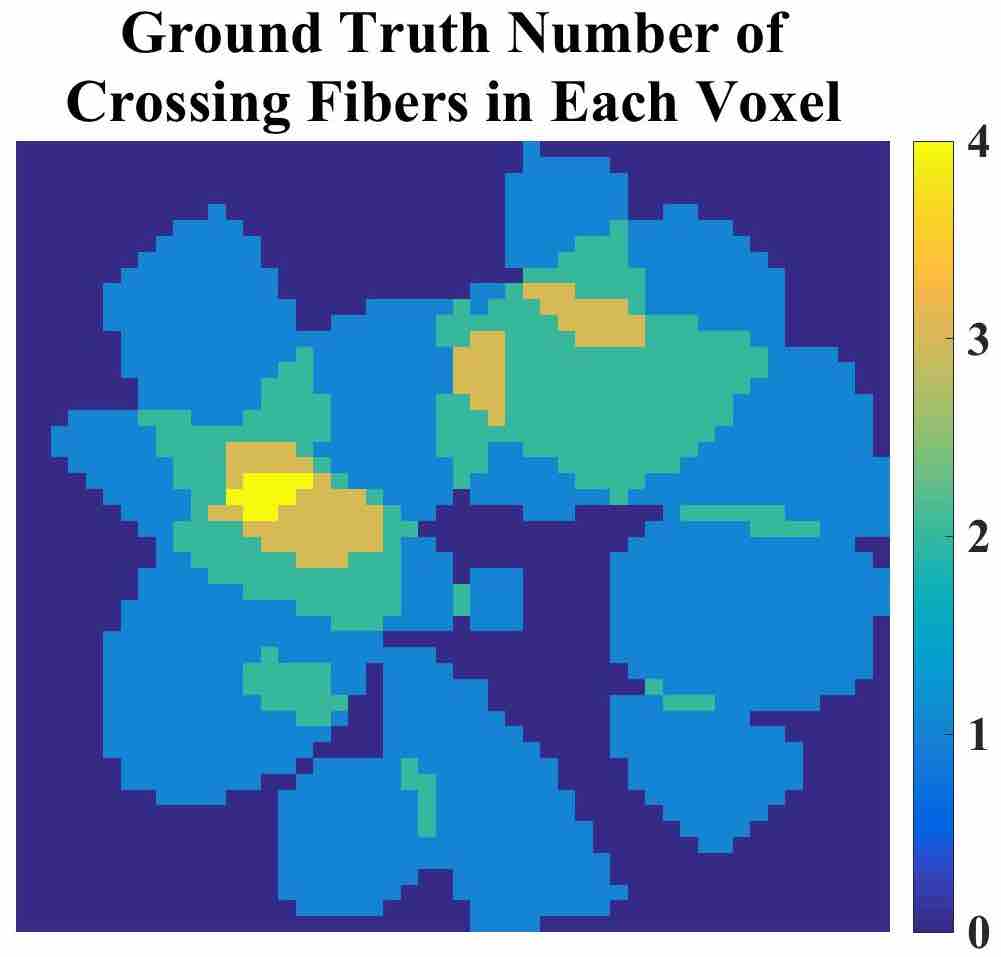}
 \caption{Number of atoms found in each voxel corresponding to the 5 levels of spatial-angular sparsity in Figure~\ref{fig:IDanalysis}. The bottom right figure shows the ground truth number of fibers crossing in each voxel to illustrate the complexity of each angular signal in relation to how many atoms are needed to sparsely model them.  Crossing fiber signals are either forced to zero for high spatial-angular sparsity levels (see: top row) or require between 3-5 atoms for single fiber signals (see: avg. sparsity 1.11 and 2.07) and 6-12 for double and triple crossing fiber signals (see: avg. sparsity 3.83). The label I-SR refers to Identity-SR, explained in the experiments Section~\ref{sec:experiments}.}
 \label{fig:nonzerocount}
\end{figure}

\begin{figure}
    \centering
    \includegraphics[width=1\linewidth]{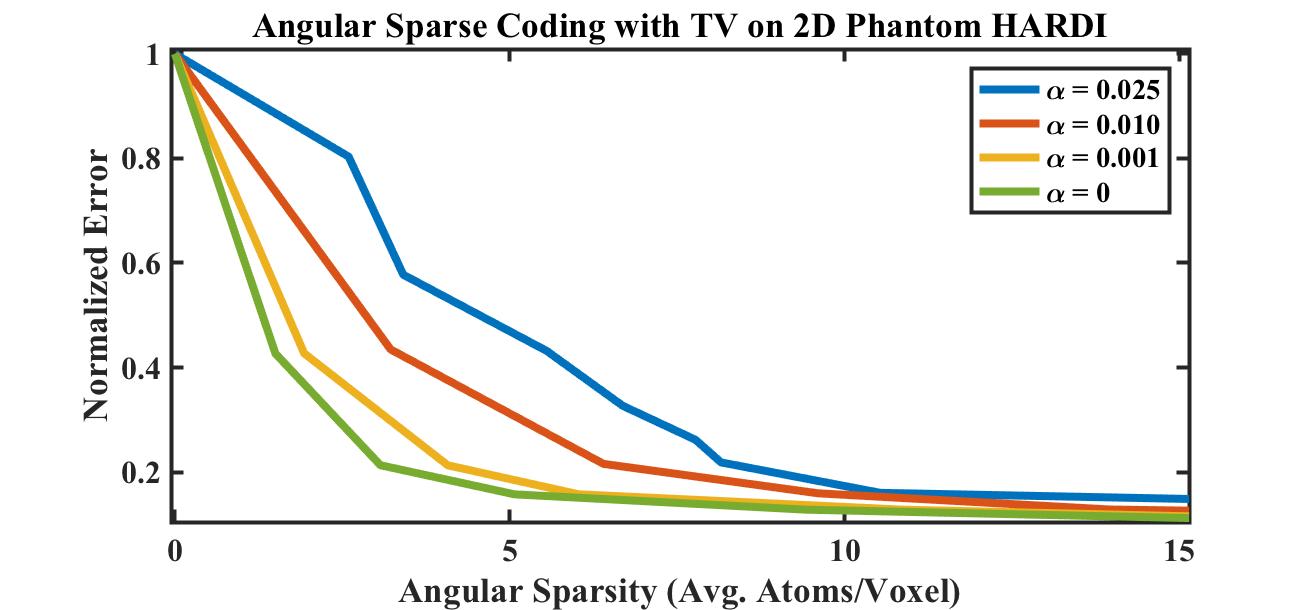}
    \caption{Reconstruction error vs. the average number of angular dictionary atoms per voxel using spatial regularization for the HARDI phantom data. As $\alpha$, the relative weight of spatial regularization (TV) in \eqref{eq:TV}, increases, the average number of angular atoms increases for a given reconstruction error. This suggests that sparser solutions for angular sparse coding can be achieved without spatial regularization, although using adequate spatial regularizers can improve the qualitative aspect of the reconstructed signal, in particular for noisy inputs.}
    \label{fig:angTV}
\end{figure}

\section{Joint Spatial-Angular dMRI Representation}
\label{sec:proposed_framework}
To overcome the sparsity limits of an angular representation, we propose to model a dMRI signal $\mathcal{S}: \Omega \times \mathbb{R}^3 \rightarrow \mathbb{R}$ globally with a joint spatial-angular dictionary, say $\varphi(v,q)$, such that 
\begin{equation}
\mathcal{S}(v,q) = \sum_k c_k \varphi_k(v,q)
\end{equation}
with a single set of global coefficients $c = [c_k]$.  A global dictionary allows us to find global representations with sparsity levels below the number of voxels without forcing some voxels to have zero signal.  In fact, the sparsest possible representation would be the absolute limit of 1 nonzero coefficient $c_k$, and so we find ourselves in a unrestricted setting for global sparse coding. To set up the spatial-angular sparse coding problem, we let $s \in \mathbb{R}^{GV}$ be the vectorization of $\mathcal{S}(v,q)$ where for $v = 1\dots V$ we stack the $q$-space signals, $s_v \in \mathbb{R}^G$, and $\Phi_k \in \mathbb{R}^{GV}$ be the vectorization $\varphi_k(v,q)$ to build the global dictionary $\Phi = [\Phi_1 \dots \Phi_{N_\Phi}] \in \mathbb{R}^{GV \times N_\Phi}$, with $N_\Phi$ atoms. Then, to find a globally sparse $c$, we can solve the $L_0$ minimization problem:
\begin{equation}
\tag{$P0vec$}
\label{eq:l0vec}
c^* = \argmin_c \frac{1}{2}||\Phi c -s||_2^2 \ \ \textnormal{s.t.} \ \ ||c||_0 \leq K,
\end{equation}  
for a sparsity level $K$ or the LASSO problem:
\begin{equation}
\tag{$P1vec$}
\label{eq:l1vec}
c^* = \argmin_c \frac{1}{2}||\Phi c -s||_2^2 + \lambda||c||_1,
\end{equation}
where $\lambda>0$ is the sparsity trade-off parameter.
However, typical dMRI contains on the order of $V\!\approx\!100^3$ voxels each with $G\!\approx\!100$ $q$-space measurements for a total of $100^4=100$ million signal measurements ($|s| \approx 10^8$). Since many sparse coding applications often utilize dictionaries that are over-redundant, this leads to a massive matrix $\Phi$ with $100^4$ rows and over $100^4$ columns ($|\Phi| \approx 10^{16}$). For some datasets, even committing $\Phi$ to memory is prohibitive.  Therefore solving this large-scale global dMRI sparse coding problem using traditional solvers like OMP to approximate \eqref{eq:l0vec} or ADMM and FISTA to solve \eqref{eq:l1vec}, prove intractable.  

\begin{figure}[H]
\center
 \includegraphics[width=1\linewidth,trim=0 0 0 0,clip]{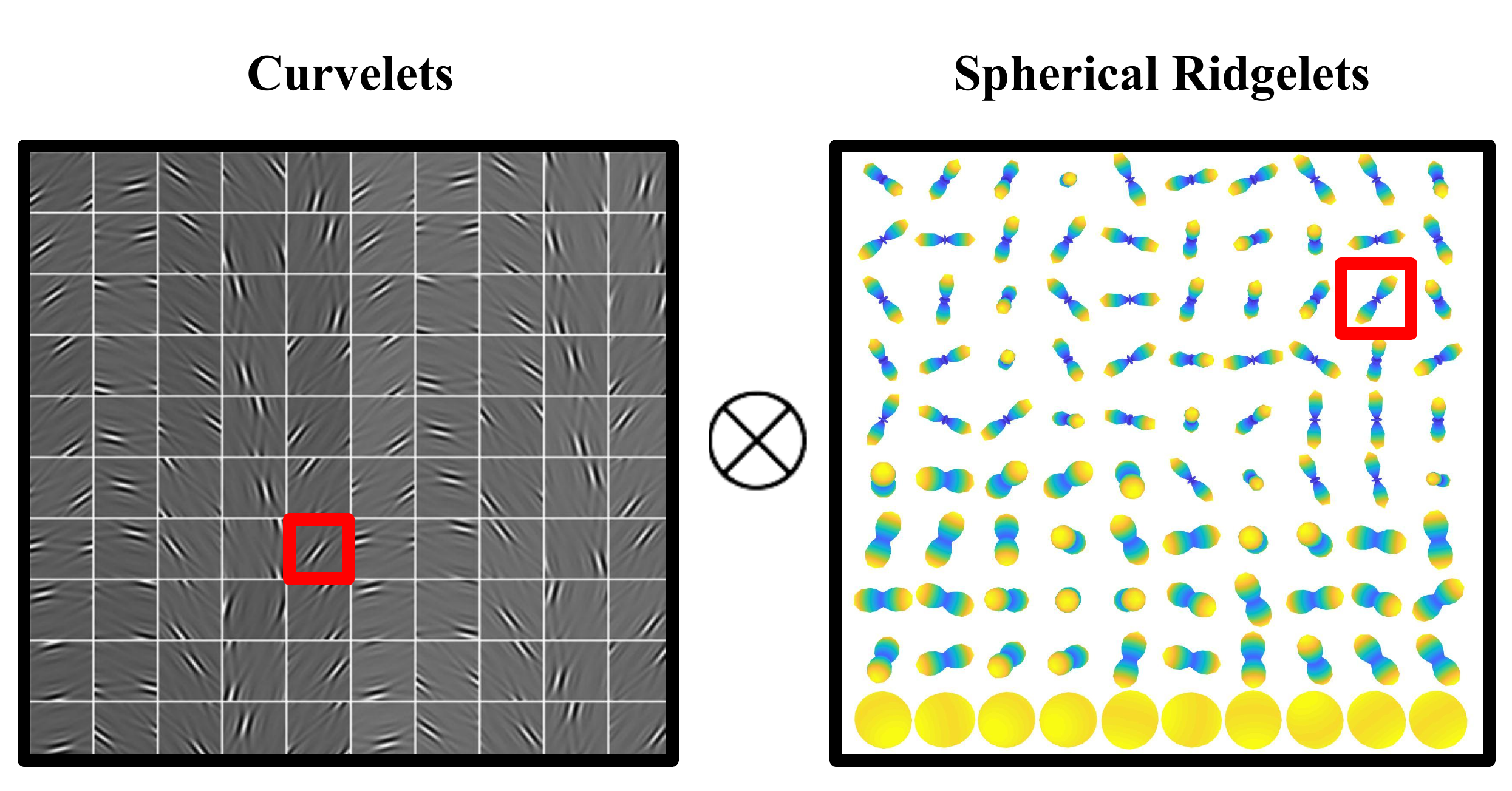}
 \includegraphics[width=1\linewidth,trim=0 0 0 0,clip]{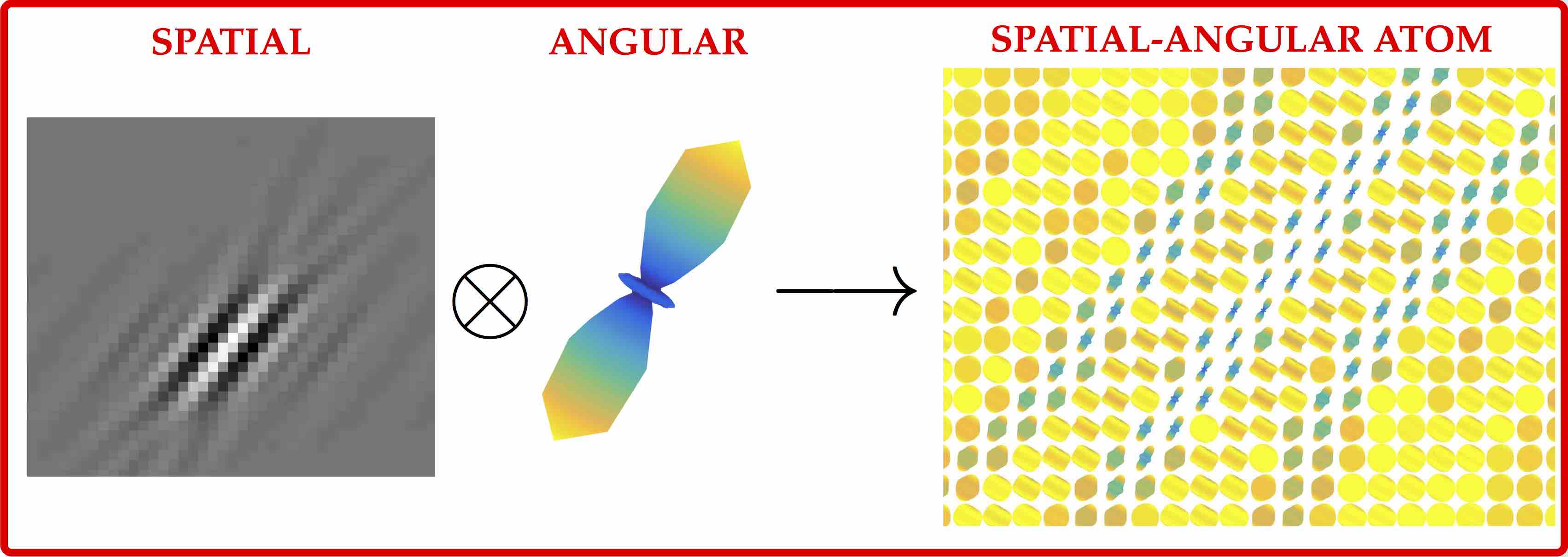}
 \caption{Top: A separable spatial-angular dictionary composed of the Kronecker product between curvelets $\Psi$ and spherical ridgelets $\Gamma$. A pair of spatial and angular atoms are highlighted in red and zoomed in below. Bottom: An example construction of a single spatial-angular basis atom $\Phi_k$ (right) by taking the Kronecker product of $\Psi_j$ (left) and $\Gamma_i$ (middle), \ie \ $\Psi_j \otimes \Gamma_i = \Phi_k$. With this particular combination of spatial (curvelet \citep{Candes:MMS06}) and angular (spherical wavelet \citep{TristanVega:MICCAI11}) atoms, we can see that it may be possible to represent an entire fiber tract with very few spatial-angular atoms.}
 \label{fig:kronbasis}
\end{figure} 

To address this challenge, we introduce additional structure on the dictionary atoms by considering separable functions over $\Omega$ and $\mathbb{R}^3$, namely a set of atoms of the form $\{\varphi_k(v,q)\} = \{\psi_j(v) \otimes \gamma_i(q)\}$, where $\{\psi_j(v)\}$ is a spatial basis for the space of functions from $\Omega \rightarrow \mathbb{R}$ and $\{\gamma_i(q)\}$ is an angular basis for the space of functions from $\mathbb{R}^3\rightarrow \mathbb{R}$ and $\otimes$ is the Kronecker product.  In discretized form for $V$ voxels and $G$ gradient directions, with $\Psi \in \mathbb{R}^{V\times N_\Psi}$ and $\Gamma \in \mathbb{R}^{G\times N_\Gamma}$, the matrix $\Phi = \Psi \otimes \Gamma$ is of the form:
\begin{equation}
s = \left( \begin{array}{c}
\label{eq:SpatialAngularLinCombVec}
s_1 \\
s_2 \\
\vdots \\
s_V \end{array} \right) = \left( \begin{array}{cccc} 
\Psi_{1,1}\Gamma & \Psi_{1,2} \Gamma & \cdots & \Psi_{1,N_\Psi} \Gamma \\
\Psi_{2,1}\Gamma & \Psi_{2,2} \Gamma & \cdots & \Psi_{2,N_\Psi} \Gamma \\
\vdots & \vdots & \ddots & \vdots \\
\Psi_{V,1}\Gamma & \Psi_{V,2} \Gamma & \cdots & \Psi_{V,N_\Psi} \Gamma \end{array} \right) 
\left( \begin{array}{c}
c_1 \\
c_2 \\
\vdots \\
c_{N_\Psi N_\Gamma} \end{array} \right)  = \Phi c .
\end{equation}
Figure~\ref{fig:kronbasis} illustrates the Kronecker structure of spatial-angular atom $\Phi_k$.  We can see that by representing a dMRI signal with this type of global spatial-angular atom, one can model an entire region of the brain with as few as a single atom instead of angular atoms at every voxel.


A motivating model for this separable structure for dMRI is as follows:  first, as is traditionally done, the signal at each voxel $v \in \Omega$ is written as a linear combination of angular basis functions $\{\Gamma_i(q)\}$:
\begin{equation} 
\label{eq:AngularLinComb}
\mathcal{S}(v,q) =  \sum_{i=1}^{N_\Gamma} a_i(v) \Gamma_i(q).
\end{equation}
Then, we notice that each spherical coefficient $a_i(v)$ forms a 3D volume and so can be written as a linear combination of spatial basis functions $\{\Psi_j(v)\}$:
\begin{equation} 
\label{eq:SpatialLinComb}
a_i(v) = \sum_{j=1}^{N_\Psi}c_{i,j} \Psi_j(v).
\end{equation}
Combining \eqref{eq:AngularLinComb} and \eqref{eq:SpatialLinComb} we arrive at our proposed separable spatial-angular dictionary
\begin{equation}
\label{eq:SpatialAngularLinComb}
\mathcal{S}(v,q) = \sum_{i=1}^{N_\Gamma}\sum_{j=1}^{N_\Psi} c_{i,j} \Psi_j(v)\Gamma_i(q),
\end{equation}
When stacking each $s_v$ in a large vector, \eqref{eq:SpatialAngularLinComb} results in the Kronecker product in  \eqref{eq:SpatialAngularLinCombVec}, $s = (\Psi \otimes \Gamma)c$.  Alternatively, when writing $S = [s_1,\dots,s_V]$ as a matrix, \eqref{eq:SpatialAngularLinComb} results in the equivalent matrix form:
\begin{equation}
\label{eq:SpatialAngularMatrixForm}
S = \Gamma C \Psi^\top.
\end{equation}
Table~\ref{table:dims} summaries the dimensions of the vector and matrix variables and Figure~\ref{fig:VecVsMat} illustrates the Kronecker decompositions in the vector and matrix forms.
\begin{table}
\center
\begin{tabular}{|c|c|c|c|c|c|c|c|}
\hline
 & \multicolumn{2}{c|}{Signal} & \multicolumn{2}{c|}{Coefficients} & \multicolumn{3}{c|}{Dictionaries}\\
 \hline
Variable & $s$ & $S$ & $c$ & $C$ & $\Phi$ & $\Gamma$ & $\Psi$ \\
\hline
Dimensions & $GV$ & $G\times V$ & $N_\Gamma N_\Psi$ & $N_\Gamma \times N_\Psi$ & $GV\times N_\Gamma N_\Psi$ & $G\times N_\Gamma$ & $V\times N_\Psi$ \\ 
\hline 
\end{tabular}
\caption{Sparse coding variable dimensions, where $G$ ($\approx 100$) is the number of gradient directions in $q$-space, $V$ ($\approx 100^3$) is the number of voxels in the volume, $N_\Gamma$ ($\gtrsim 100$) is the number of atoms of the angular dictionary $\Gamma$, and $N_\Psi$ ($\gtrsim 100^3$) is the number of atoms of the spatial dictionary $\Psi$.}
\label{table:dims}
\end{table}

Decomposing signals into Kronecker (or more general multi-tensor) structures has been well researched to increase algorithmic efficiency by reducing computations to the smaller, separate domains.  Many research groups have exploited properties of the Kronecker product, when solving problem types of the form of \eqref{eq:l0vec} and \eqref{eq:l1vec} for computational efficiency of larger sparse coding \citep{Caiafa:NC13}, dictionary learning \citep{Hawe:CVPR13} and CS \citep{Duarte:TIP12} applications.  The work of \citep{Caiafa:arXiv15} has applied multi-tensor sparse coding methods on dMRI data for the application of fiber tract data compression. 
In particular, a Kronecker Orthogonal Matching Pursuit (Kron-OMP) \citep{Rivenson:SPL09} has been utilized to solve \eqref{eq:l0vec}.  Although Kron-OMP becomes much more efficient than the classical OMP \citep{Tropp:TIT04}, the problem is not entirely separated into smaller domains, and the computationally expensive $\Phi$ matrix is still built explicitly.  For large-scale problems like that of dMRI reconstruction, solving \eqref{eq:l0vec} or \eqref{eq:l1vec} even with a Kronecker structure dictionary remains largely intractable/expensive for memory and computation time.

\begin{figure}[H]
    \centering
    \includegraphics[width=.6\linewidth]{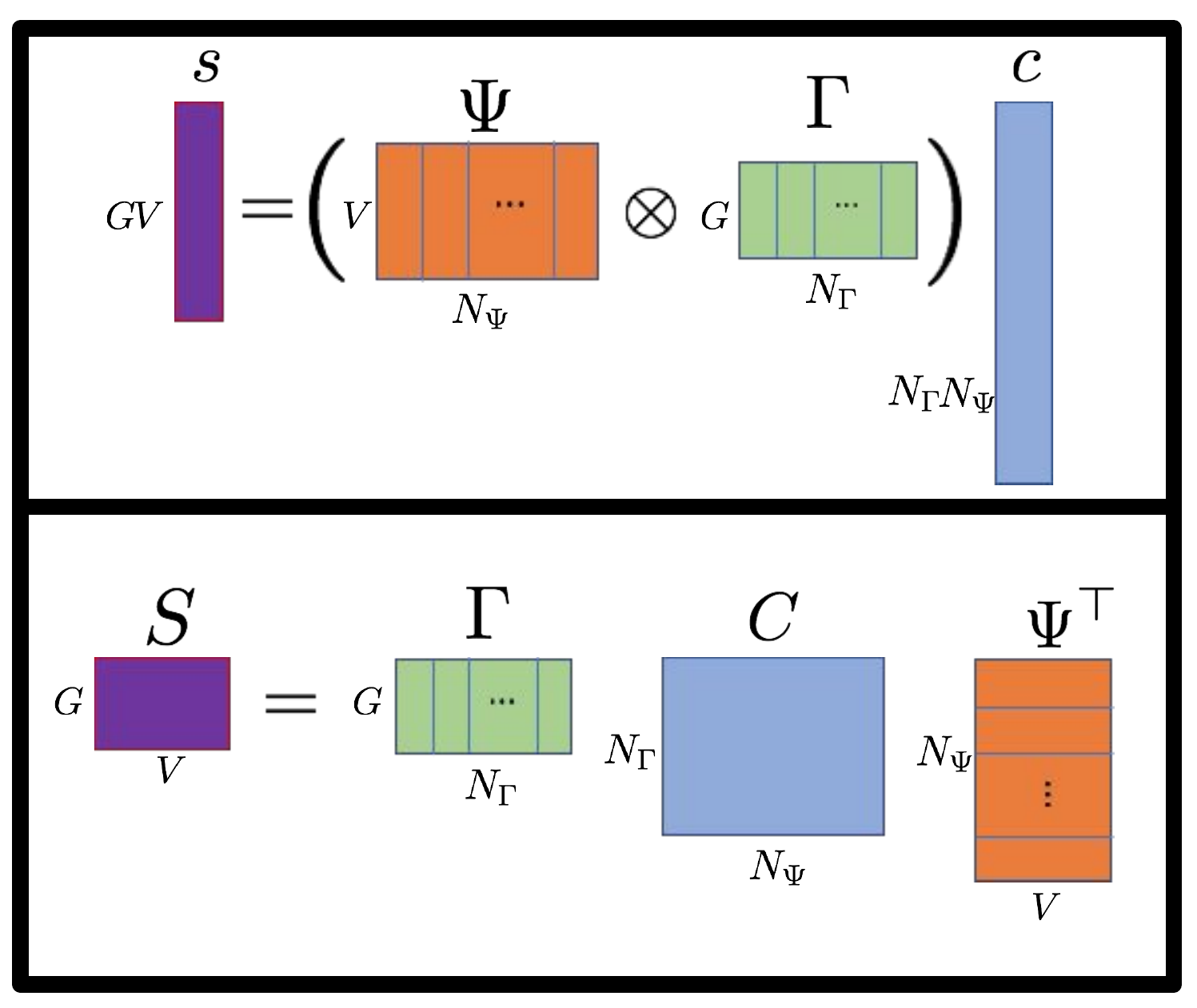}
    \caption{Equivalent vector form (top) and matrix form (bottom) for the Kronecker decomposition of a signal. We propose to use the matrix form which provides a more compact representation for signals of large size and exploits the full separability of the Kronecker product, reducing matrix multiplication complexity from $O(GVN_\Gamma N_\Psi)$ to $O(GVN_\Gamma)$.}
    \label{fig:VecVsMat}
\end{figure}

In this chapter, we use the matrix form \eqref{eq:SpatialAngularMatrixForm} which allows us to avoid the expensive uses of $\Phi$ and fully reduce computational complexity to the smaller separable basis domains of $\Gamma$ and $\Psi$.  In particular, we develop efficient algorithms to solve the completely separable spatial-angular sparse coding problems:
\begin{equation}
\tag{$P0mat$}
\label{eq:l0_separable}
C^* = \argmin_C \frac{1}{2}||\Gamma C \Psi^\top - S||_F^2 \ \ \textnormal{s.t.} \ \ ||C||_0 \leq K
\end{equation}  
and
\begin{equation}
\tag{$P1mat$}
\label{eq:l1_separable}
C^* = \argmin_C \frac{1}{2}||\Gamma C \Psi^\top - S||_F^2 + \lambda||C||_1.
\end{equation}
This becomes a general optimization to solve large-scale sparse coding problems with separable dictionaries and can also be extended to the tensor setting.

As an important note, this matrix formulation is a generalization of the voxel-wise angular sparse coding problem \eqref{eq:VoxReconMult} in the special case of $\Psi =$ I$_V$, the $V\times V$ identity matrix, with $C\equiv A$.  We use the identity as a choice for $\Psi$ in the experiments of Section~\ref{sec:experiments} when comparing the performance of purely angular sparse coding with our proposed framework\footnote{Using $\Psi =$ I$_V$ identity with spherical ridgelets (SR) we adopt the notation I-SR for the dictionary used in the state-of-the-art illustration Figure~\ref{fig:IDanalysis} and Section~\ref{sec:experiments}.}. Note that the optimization problem \eqref{eq:l1_separable} is on the coefficient matrix $C$ of size $N_\Gamma \times N_\Psi$ in comparison to the matrix $A$ of size $N_\Gamma \times V$ of state-of-the-art sparse coding \eqref{eq:VoxReconMult}. This leads to a slight increase in the dimension of the problem proportional to the redundancy factor of the spatial dictionary $\Psi$ (i.e the ratio $N_\Psi / V$). On the other hand, our formulation only involves a single sparsity penalty in comparison to a sum of angular and spatial terms, thus reducing the number of weighting parameters to select.

\section{Efficient Kronecker Sparse Coding Algorithms}
\label{sec:reconstruction}
In what follows we present three novel adaptations of existing sparse coding algorithms for solving large-scale sparse coding problems with a Kronecker dictionary structure. These are Kronecker extensions of OMP (Section~\ref{sec:KronOMP-PGD}), ADMM (Section~\ref{sec:KronADMM}), and FISTA (Section~\ref{sec:KronFISTA}). We compare these to existing Kronecker sparse coding algorithms, a Kronecker OMP \citep{Rivenson:SPL09} (Section~\ref{sec:KronOMP}) as well as Kronecker Dual ADMM (Section~\ref{sec:KronDADMM}), developed in our prior work \citep{Schwab:MICCAI16} and derived in a new formulation for comparison in this paper. 
We compare these algorithms in terms of complexity for various types of bases in Section \ref{sec:complexity} and show experimental time comparisons in Section \ref{sec:experiments}.

\subsection{Kronecker OMP}
\label{sec:KronOMP}

To approximate a solution to the $L_0$ problem \eqref{eq:l0vec}, Orthogonal Matching Pursuit (OMP) \citep{Tropp:TIT04} is a popular greedy algorithm that iteratively selects the atom that is most correlated with the signal, orthogonalizes it to the previously selected atoms by solving a least squares optimization, and selects the next atom that is most correlated with the resulting residual. For the case of a Kronecker structured basis, a Kronecker OMP (Kron-OMP) algorithm has been previously proposed \citep{Rivenson:SPL09,Caiafa:NC13} that reduces computations of solving the least squares subproblem in each iteration by exploiting properties of the Kronecker product.  This form of Kron-OMP, however, represents the signal, coefficients, and basis atoms in vector form providing a solution to \eqref{eq:l0vec}. 

In Algorithm~\ref{alg:KronOMP} we rewrite the Kron-OMP algorithm adapted to the structure of our problem, where $vec(\cdot)$ and $mat(\cdot)$ convert matrices to vectors and vice versa. The main complexity gain of Kron-OMP over the vector OMP is obtained by separating the effects of $\Gamma$ and $\Psi$ when computing the maximally correlated atoms with the residual, $|\Gamma^\top R \Psi|$ (See Algorithm~\ref{alg:KronOMP} Step 1) with complexity $O(N_\Gamma GV + GN_\Gamma N_\Psi)$ instead of computing $|\Phi^\top r|$ with complexity $O(N_\Gamma N_\Psi GV)$. 
The other gain is in solving the least squares problem (See Algorithm~\ref{alg:KronOMP} Step 3) by exploiting properties of the Kronecker product ($A \odot B = [a_1 \otimes b_1,\dots, a_N \otimes B_N])$ to compute a rank-1 update.  However, the only real improvement on complexity is in memory since $\Phi$ can be built atom by atom from columns of $\Gamma$ and $\Psi$ instead of storing the entire matrix. The rank-1 update remains $O(k^2)$ for both vector and Kron-OMP.  
In the next section we present an alternative Kron-OMP algorithm that reduces complexity further by exploiting the full separability of the dictionary.
\begin{algorithm}[H]
\caption{Kron-OMP}
\begin{algorithmic}
\label{alg:KronOMP}
\STATE Choose: $K, \epsilon.$
\STATE Initialize: $k=1, \ \mathcal{I}^0 = \emptyset, \ \mathcal{J}^0 = \emptyset, \ R_0 = S, \ s =  vec(S).$ 
\WHILE {$k \leq K$ and error $> \epsilon$}
\STATE 1: $[i^k,j^k] = \argmax_{[i,j]} |\Gamma_i^\top R_k \Psi_j | ;$
\STATE 2: $\mathcal{I}^k = [\mathcal{I}^{k-1}, \ i^k]; \mathcal{J}^k = [\mathcal{J}^{k-1}, \ j^k]; \mathcal{A}^k = (\I^k,\J^k);$ 
\STATE 3: $c_k = \argmin_c \frac{1}{2}|| (\Gamma_{\I^k} \odot \Psi_{\J^k})c - s ||_2^2;$
\STATE 4: $R_k = mat(s - (\Gamma_{\I^k} \odot \Psi_{\J^k}) c_k);$ 
\STATE 5: $k \leftarrow k+1;$
\ENDWHILE
\STATE Return: $\mathcal{A}^K, c_K$
\end{algorithmic}
\end{algorithm}

\subsection{Kronecker OMP with Projected Gradient Descent (PGD)}
\label{sec:KronOMP-PGD}
In what follows, we develop a novel form of Kronecker OMP which solves the separable \eqref{eq:l0_separable} instead of \eqref{eq:l0vec}.  This allows us to reduce computation by not building columns of $\Phi$ and not repeating individual atoms of $\Gamma$ or $\Psi$.   Instead, indices of $\Gamma$ and $\Psi$ are updated only when they each have not been chosen before, fully exploiting the separability of the dictionary.   
Given the previous sets of respective of indices $\mathcal{I}^{k-1}$ and $\mathcal{J}^{k-1}$, we update sets by following $\mathcal{I}^k = [\mathcal{I}^{k-1} \ i^k]$ if $i^k \not\in \mathcal{I}^{k-1}$ and $\mathcal{I}^k = \mathcal{I}^{k-1}$ otherwise. Likewise, $\mathcal{J}^k = [\mathcal{J}^{k-1} \ j^k]$ if $j^k \not\in \mathcal{J}^{k-1}$ and $\mathcal{J}^k = \mathcal{J}^{k-1}$ otherwise.  With the selected indices, the size of $C_k$ will be $|\I^k| \times |\J^k|$ instead of $k\times k$.  To find $C_k$, it seems natural to solve: 
\begin{equation}
C_k = \argmin_C \frac{1}{2}|| \Gamma_{\mathcal{I}^k} C \Psi^\top_{\mathcal{J}^k} - S||_F^2.
\end{equation}
But the solution $C_k$ will contain possible nonzero coefficients that do not coincide with the chosen selection of indices since additional indices in all combinations of pairs between $\I^k$ and $\J^k$ will be updated in each iteration.  This is problematic for the correctness of the algorithm when choosing the next single most correlated coefficient.  Therefore we must enforce that these coefficients are zero:
\begin{equation}
C_k = \argmin_C \frac{1}{2}|| \Gamma_{\mathcal{I}^k} C \Psi^\top_{\mathcal{J}^k} - S||_F^2 \ \textnormal{s.t.} \ C_{i,j} = 0 \ \forall (i,j) \in \mathcal{O}^k.
\end{equation}
where $\mathcal{O}^k := \overline{({\I}^k,{\J}^k)}$. To solve this problem, we can use projected gradient descent (PGD).  The gradient of $f(C) = \frac{1}{2}|| \Gamma_{\mathcal{I}^k} C \Psi^\top_{\mathcal{J}^k} - S||_F^2$ at iteration $k$ is
\begin{equation}
\nabla f(C) = \Gamma_{\I^k}^\top\Gamma_{\I^k} C \Psi_{\J^k}^\top \Psi_{\J^k} - \Gamma_{\I^k}^\top S \Psi_{\J^k}.
\end{equation}
To save on computation we precompute $\mathcal{G} = \Gamma^\top\Gamma$, $\mathcal{P} = \Psi^\top \Psi$, and $\hat{S} =\Gamma^\top S \Psi$ and can access their entries at each iteration: $\mathcal{G}_{\I^k,\I^k}, \mathcal{P}_{\J^k,\J^k}, \hat{S}_{\I^k,\J^k}$.
Then setting $Z^1 = C_k$ we iteratively project the update in the gradient direction to the space of feasible solutions:
\begin{equation}
Z^{t+1} = P_{\mathcal{O}^k}(Z^t - \epsilon \nabla f(Z^t) ),
\end{equation}
where the projection $P_{\mathcal{O}^k}$ sets all elements in $\mathcal{O}^k$ to $0$ and step-size $\epsilon$ is estimated each iteration using a line search. Once the procedure has converged to $Z^*$, we set $C_k = Z^*$ and compute the residual 
$R_k = S - \Gamma_{\I^k} C_k \Psi^\top_{\J^k}$.
Then, for iteration $k+1$ we must find $(i^{k+1},j^{k+1}) = \argmax_{[i,j]} |\Gamma_i^\top R_k \Psi_j|$.  To save significantly on computation we can instead use our precomputed $\mathcal{G}$ and $\mathcal{P}$ to instead find $\argmax_{[i,j]} [\hat{R}_k]_{i,j}$, where $\hat{R}_k = |\hat{S} - \mathcal{G}_{\I^k} C_k \mathcal{P}^\top_{\J^k}|$ where $\mathcal{G}_{\I^k},\mathcal{P}_{\J^k}$ respectively access the $\I^k, \J^k$ columns and all rows. Maintaining matrix forms throughout allows us to combine computing the residual and the next atoms for a large reduction in computation at each iteration $k$.
Our proposed Kronecker OMP with projected gradient descent (Kron-OMP-PGD) is outlined in Algorithm~\ref{alg:KronOMP-PGD}. 
\begin{figure}
\centering
\includegraphics[width=.7\linewidth,trim=20 0 20 0,clip]{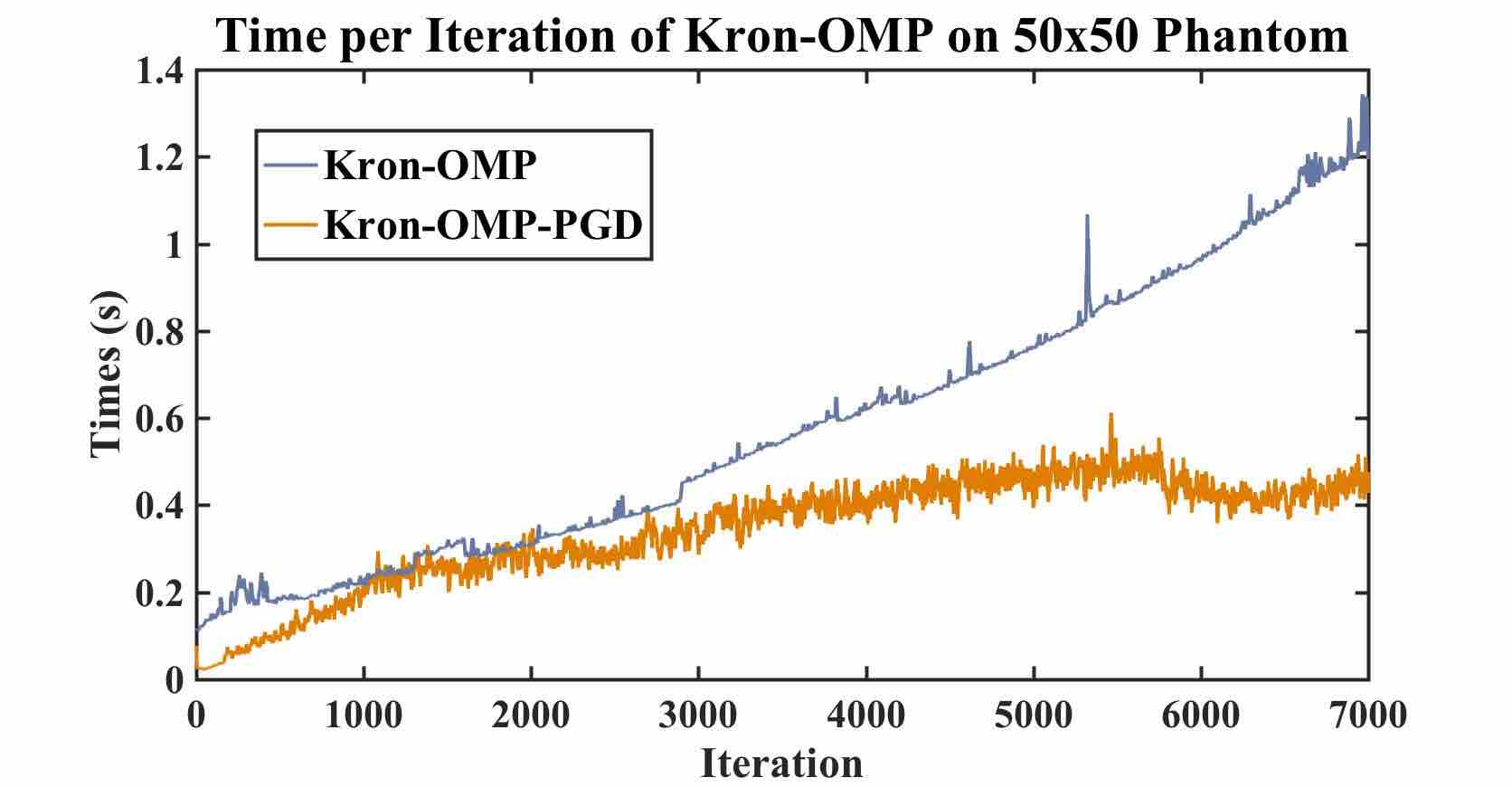}
\caption{Comparison of time per iteration for Kron-OMP and the proposed Kron-OMP-PGD.  The total time to choose $K=7000=2.8V$ atoms for this $V=50\times 50$ slice of a phantom dataset, is 68 min for Kron-OMP and 40 min for Kron-OMP-PGD. We can see that as the number of atoms grows, the time per iteration of Kron-OMP continues to grow at a much higher rate than Kron-OMP-PGD.}
\label{fig:OMPtimeComp}
\end{figure}

We show a comparison of time per iteration for a small $V=50\times50, G=64$ phantom dataset in Figure~\ref{fig:OMPtimeComp}. The steeper time increase for Kron-OMP is due to the fact that at iteration $k$ there is a complexity of $O(k^2 + kGV)$ that comes from Steps 3 (rank-1 update) and 4 of Algorithm~\ref{alg:KronOMP}.  On the other hand, Kron-OMP-PGD has complexity involving $|\I^k|, |\J^k| \leq k$ which are in practice significantly less than $k$.  Even though a PGD sub-routine must be performed at each iteration $k$, we found that by incorporating Nesterov acceleration with a line search, the time per iteration remains lower than Kron-OMP as the number of iterations $k$ increases. 

However, for dMRI data, typical sparsity levels are $K=O(V)$. So for $V\approx 100^3$ the number of iterations as well as the time per iteration of both Kron-OMP and Kron-OMP-PGD when $k$ approaches $K$ becomes astronomical.  Even on a relatively small 3D phantom dataset of spatial size $V = 50\times 50$, for example, one iteration takes on the order of a few seconds which results in over 34 hrs for these greedy algorithms to reach 1 atom/voxel atoms $(K=V)$. 
In this way, greedy algorithms such as OMP are not suitable for large-scale problems that require hundreds of thousands of iterations.  Instead, optimizing the LASSO problem \eqref{eq:l1_separable} can be accomplished with significantly less iterations, as we examine in the following section.

\begin{algorithm}[H]
\caption{Kron-OMP-PGD}
\begin{algorithmic}
\label{alg:KronOMP-PGD}
\STATE Choose: $K, \epsilon_1, \epsilon_2.$
\STATE Precompute: $\hat{S} = \Gamma^\top S \Psi, \ \mathcal{G} = \Gamma^\top \Gamma, \ \mathcal{P} = \Psi^\top \Psi.$
\STATE Initialize: $k=1, \ C_0=1, \ \mathcal{I}^0 = \emptyset, \ \mathcal{J}^0 = \emptyset, \ \hat{R}_0 = \hat{S}$. 
\WHILE {$k \leq K$ and error $> \epsilon_1$}
\STATE 1: $[i^k,j^k] = \argmax_{[i,j]} [\hat{R}_k]_{i,j} ;$
\STATE 2: $\mathcal{I}^k = \mathcal{I}^{k-1} \cup \{i^k\}; \mathcal{J}^k = \mathcal{J}^{k-1} \cup \{j^k\}; \mathcal{A}^k = (\I^k,\J^k); \mathcal{O}^k = \overline{\mathcal{A}^k};$
\STATE 3: $Z^1_{\J^{k-1},\I^{k-1}} = C_{k-1}; \  n_1 = 0; \ t=1;$
\WHILE {error $> \epsilon_2$}
\STATE 1: $\delta = \textnormal{linesearch}(Z^{t});$
\STATE 2: $X^{t+1} = P_{\mathcal{O}^k}(Z^{t} - \delta (\mathcal{G}_{\I^k,\I^k} Z^{t} \mathcal{P}_{\J^k,\J^k} - \hat{S}_{\I^k,\J^k}))$;
\STATE 4: $n_{t+1} = \frac{1}{2}(1 + \sqrt{1 + 4n_{t}^2});$
\STATE 5: $Z^{t+1} = X^{t+1} + \frac{n_{t}-1}{n_{t+1}}(X^{t+1} - X^{t});$
\STATE 6: $t\leftarrow t+1$;
\ENDWHILE
\STATE 4: $C_k = Z^*;$
\STATE 5: $\hat{R}_k = |\hat{S} - \mathcal{G}_{\mathcal{I}^k} C_k \mathcal{P}_{\mathcal{J}^k}|; $
\STATE 6: $k \leftarrow k+1;$
\ENDWHILE
\STATE Return: $\mathcal{A}^K, C_K$.
\end{algorithmic}
\end{algorithm}

\subsection{Kronecker ADMM}
\label{sec:KronADMM}
The Alternating Direction Method of Multipliers (ADMM) \citep{Boyd:FTML10} is a popular method for solving the LASSO problem \eqref{eq:l1vec}. However, its application in the case of a large dictionary $\Phi$ remains prohibitive, requiring computations involving $\Phi^\top s$ of order $O(GVN_\Gamma N_\Psi)$.  Instead, we apply ADMM to the separable LASSO problem \eqref{eq:l1_separable} to reduce computations by solving
\begin{equation}
\label{eq:Kron_l1_SeparableADMM}
\min_{C,Z} \frac{1}{2}|| \Gamma C \Psi^\top - S||_F^2 + \lambda||Z||_1 \ \ \textnormal{s.t.} \ \ C = Z.
\end{equation}
The augmented Lagrangian writes:
\begin{equation}
\label{eq:augLag}
\mathcal{L}_\mu(C,Z,\mathcal{T}) = \frac{1}{2} ||\Gamma C \Psi^\top -S||_F^2 + \lambda||Z||_1 + <\mathcal{T}, C-Z> +\frac{\mu}{2} || C-Z ||_F^2,
\end{equation}
and:
\begin{align}
\frac{\partial \mathcal{L}_\mu}{\partial C} &= \Gamma^\top (\Gamma C \Psi^\top - S) \Psi + \mathcal{T} + \mu(C-Z) = 0\\
&\implies \Gamma^\top \Gamma C \Psi^\top \Psi + \mu C = \mu Z - \mathcal{T} + \Gamma^\top S \Psi := Q.
\end{align}

In principle, one can solve for $C$ by solving a linear system of equations $h(C) = Q$, where $h(C) = \Gamma^\top \Gamma C \Psi^\top \Psi + \mu C$. However, solving this linear system directly is computationally challenging due to the size of the matrices involved. Therefore, to solve for $C$ efficiently, we begin by taking the SVDs of $\Gamma$ and $\Psi$. With $\Gamma = U_\Gamma \Sigma_\Gamma V_\Gamma^\top$ and $\Psi = U_\Psi \Sigma_\Psi V_\Psi^\top$, $\Gamma^\top \Gamma = V_\Gamma \Delta_\Gamma V_\Gamma^\top$ and $\Psi^\top \Psi = V_\Psi \Delta_\Psi V_\Psi^\top$, where $U_\Gamma, U_\Psi$ are the matrices of eigenvectors and $\Delta_\Gamma = \Sigma_\Gamma^\top \Sigma_\Gamma, \Delta_\Psi = \Sigma_\Psi^\top \Sigma_\Psi$ are the diagonal matrices of eigenvalues for $\Gamma$ and $\Psi$ respectively.  Then:
\begin{align}
V_\Gamma \Delta_\Gamma V_\Gamma^\top C V_\Psi \Delta_\Psi V_\Psi^\top + \mu C &= Q\\
\implies \Delta_\Gamma \tilde{C} \Delta_\Psi +\mu \tilde{C} &= \tilde{Q}
\label{eq:Cupdate}
\end{align}
where we introduced the notation $\tilde{X} = V_\Gamma^\top X V_\Psi$. Since $\Delta_\Gamma$ and $\Delta_\Psi$ are diagonal with elements $\delta_{\Gamma_i}$ and $\delta_{\Psi_j}$, respectively, we can solve for $\tilde{C}$ by:
\begin{equation}
\delta_{\Gamma_i} \tilde{C}_{i,j} \delta_{\Psi_j} + \mu \tilde{C}_{i,j} = \tilde{Q}_{i,j} \implies \tilde{C}_{i,j} = \frac{\tilde{Q}_{i,j}}{\delta_{\Gamma_i} \delta_{\Psi_j} + \mu}
\end{equation}
To write this in matrix form we define $[\Delta_\mu]_{i,j}\!\triangleq\!1/(\delta_{\Gamma_i} \delta_{\Psi_j} + \mu)$ and have
$\tilde{C} = (\Delta_\mu \circ \tilde{Q})$
where $\circ$ stands for element-wise matrix multiplication.  Finally, we can recover $C = V_\Gamma \tilde{C} V_\Psi^\top$ and the complete update for $C$ is:
\begin{equation}
\label{eq:C_under}
C_{k+1} = V_\Gamma (\Delta_\mu \circ ( V_\Gamma^\top Q_k V_\Psi)) V_\Psi^\top
\end{equation}
where $Q = \mu Z - \mathcal{T} + \Gamma^\top S \Psi$.  When minimizing $\mathcal{L}_\mu$ with respect to $Z$, we end up with the usual proximal operator of the $L_1$ norm that is given by the shrinkage operator, shrink$_\kappa(X) = \max(0,X-\kappa) - \max(0,-X-\kappa)$, applied element-wise to matrix $X$, giving $Z_{k+1} = \textnormal{shrink}_{\lambda / \mu}(C_{k+1} + \mathcal{T}_{k})$. Similarly with respect to $\mathcal{T}$, we have the usual Lagrange multiplier gradient ascent update $\mathcal{T}_{k+1} = \mathcal{T}_{k} + C_{k+1} - Z_{k+1}$.  

The formal updates for Kron-ADMM are presented in Algorithm~\ref{alg:Kron-ADMMunder}.
\begin{algorithm}
\caption{Kron-ADMM (for undercomplete dictionaries)}
\begin{algorithmic}
\label{alg:Kron-ADMMunder}
\STATE Choose: $\mu, \lambda, \epsilon.$
\STATE Precompute: $V_\Gamma, \Delta_\Gamma, V_\Psi, \Delta_\Psi, \Delta_\mu$.
\STATE Initialize: $k=0, Z_0 = \mathbf{0}, \mathcal{T}_0 = \mathbf{0}, \hat{S} = \Gamma^\top S \Psi$. 
\WHILE {error $> \epsilon$}
\STATE 1: $Q_k = \hat{S} + \mu Z_k - \mathcal{T}_k$;\\
\STATE 2: $C_{k+1} = V_\Gamma (\Delta_\mu \circ ( V_\Gamma^\top Q_k V_\Psi)) V_\Psi^\top$;\\
\STATE 3: $Z_{k+1} = \textnormal{shrink}_{\lambda / \mu}(C_{k+1} + \mathcal{T}_{k})$;\\
\STATE 4: $\mathcal{T}_{k+1} = \mathcal{T}_{k} + C_{k+1} - Z_{k+1};$
\STATE 5: $k \leftarrow k+1;$
\ENDWHILE
\STATE Return: $C$.
\end{algorithmic}
\end{algorithm}
The update for $C$ in \eqref{eq:C_under} works well when $\Gamma$ and $\Psi$ are under-complete and the eigen-decompositions of $\Gamma^\top \Gamma$ and $\Psi^\top \Psi$ are easily computable.  However, dictionaries most commonly used for sparse coding and the application to CS are over-complete \ie\ $G < N_\G$ and $V < N_\P$ making these SVDs potentially expensive to compute.  In the case of an over-complete $\Phi$, for traditional vector ADMM, the matrix inversion lemma \citep{Boyd:FTML10} is involved in order to compute SVDs of the smaller $\Phi \Phi^\top$ instead of $\Phi^\top \Phi$. In the following proposition, we derive the equivalent result for the update of $C$ in \eqref{eq:C_under}.
\begin{prop}
For over-complete dictionaries $\Gamma$ and $\Psi$, update \eqref{eq:C_under} is equivalent to the more compact 
\begin{equation}
\label{eq:C_over}
C = Q/\mu - \G^\top U_\G (\Delta_\mu \circ (U^\top_\G \G Q \P^\top U_\Psi)) U_\Psi^\top \P /\mu.
\end{equation}
\end{prop}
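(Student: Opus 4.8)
The plan is to reduce the matrix equation defining the $C$-update to a single vectorized linear system, invert that system with the matrix inversion lemma, and then translate the result back into matrix form. Recall that update \eqref{eq:C_under} was derived as the solution of $h(C)=Q$ with $h(C)=\Gamma^\top\Gamma C\Psi^\top\Psi+\mu C$; since $\mu>0$ this operator is positive definite and hence invertible, so the solution is unique and it suffices to show that the compact expression \eqref{eq:C_over} is an equivalent closed form of that same solution. First I would vectorize: using the identity $\mathrm{vec}(AXB)=(B^\top\otimes A)\mathrm{vec}(X)$ one gets $\mathrm{vec}(\Gamma^\top\Gamma C\Psi^\top\Psi)=(\Psi^\top\Psi\otimes\Gamma^\top\Gamma)\mathrm{vec}(C)=\Phi^\top\Phi\,\mathrm{vec}(C)$, where $\Phi=\Psi\otimes\Gamma$ and $\Phi^\top\Phi=\Psi^\top\Psi\otimes\Gamma^\top\Gamma$, so that $h(C)=Q$ becomes $(\Phi^\top\Phi+\mu I)\mathrm{vec}(C)=\mathrm{vec}(Q)$.

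The key step is then to apply the matrix inversion lemma \citep{Boyd:FTML10} to $\Phi$, which is \emph{wide} in the overcomplete regime since $GV<N_\Gamma N_\Psi$. This yields
\begin{equation}
(\Phi^\top\Phi+\mu I)^{-1}=\frac{1}{\mu}I-\frac{1}{\mu}\Phi^\top(\mu I+\Phi\Phi^\top)^{-1}\Phi,
\end{equation}
replacing the $N_\Gamma N_\Psi\times N_\Gamma N_\Psi$ inverse by a $GV\times GV$ one. The payoff of separability appears here: $\Phi\Phi^\top=\Psi\Psi^\top\otimes\Gamma\Gamma^\top$, so only the small Gram matrices $\Gamma\Gamma^\top\in\mathbb{R}^{G\times G}$ and $\Psi\Psi^\top\in\mathbb{R}^{V\times V}$ need eigendecompositions, $\Gamma\Gamma^\top=U_\Gamma\Delta_\Gamma U_\Gamma^\top$ and $\Psi\Psi^\top=U_\Psi\Delta_\Psi U_\Psi^\top$.

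With these, $\mu I+\Phi\Phi^\top=(U_\Psi\otimes U_\Gamma)(\mu I+\Delta_\Psi\otimes\Delta_\Gamma)(U_\Psi\otimes U_\Gamma)^\top$, whose inverse is obtained by inverting the diagonal middle factor element-wise. Pushing each Kronecker-times-vec back through $\mathrm{vec}(AXB)=(B^\top\otimes A)\mathrm{vec}(X)$ then turns the action of $\Phi$ on $\mathrm{vec}(Q)$ into $\mathrm{vec}(\Gamma Q\Psi^\top)$, the eigenbasis rotations into $U_\Gamma^\top\Gamma Q\Psi^\top U_\Psi$, the diagonal inverse into the Hadamard product $\Delta_\mu\circ(\cdot)$ with $[\Delta_\mu]_{i,j}=1/(\delta_{\Gamma_i}\delta_{\Psi_j}+\mu)$, and the trailing $\Phi^\top=\Psi^\top\otimes\Gamma^\top$ into the outer factors $\Gamma^\top(\cdots)\Psi$. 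Collecting the $1/\mu$ factors and de-vectorizing reproduces exactly \eqref{eq:C_over}.

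The main obstacle is bookkeeping rather than anything conceptual: one must carry the transposes faithfully through the repeated vec/Kronecker identities so that the rotations land as $U_\Gamma^\top\Gamma(\cdot)\Psi^\top U_\Psi$ and the outer factors as $\Gamma^\top U_\Gamma(\cdot)U_\Psi^\top\Psi$, and one must confirm that the $\Delta_\mu$ appearing here, built from the $G$ and $V$ nonzero eigenvalues of the small Gram matrices, is consistent with its definition in \eqref{eq:C_under}. This holds because the nonzero eigenvalues of $\Gamma\Gamma^\top$ coincide with those of $\Gamma^\top\Gamma$ (and likewise for $\Psi$), so the element-wise rescaling is unchanged, while the dropped zero eigenvalues contribute only to the $\tfrac{1}{\mu}I$ term already isolated by the lemma.
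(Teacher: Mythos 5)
Your argument is correct and arrives at exactly \eqref{eq:C_over}, but it takes a different route from the paper's proof. The paper never vectorizes: it writes the thin SVDs $\Gamma = U_\Gamma[\Sigma_\Gamma,\ \mathbf{0}]V_\Gamma^\top$ and $\Psi = U_\Psi[\Sigma_\Psi,\ \mathbf{0}]V_\Psi^\top$, splits the indices of $\tilde{C}$ into those with zero eigenvalues (where $\tilde{C}_{i,j}=\tilde{Q}_{i,j}/\mu$ directly) and those with nonzero eigenvalues, applies the scalar identity $\frac{1}{\delta_{\Gamma_i}\delta_{\Psi_j}+\mu}=\frac{1}{\mu}-\frac{\delta_{\Gamma_i}\delta_{\Psi_j}}{\mu(\delta_{\Gamma_i}\delta_{\Psi_j}+\mu)}$ entrywise, and then reassembles the matrix form using $V_\Gamma\Sigma_\Gamma^\top = \Gamma^\top U_\Gamma$ and $\Sigma_\Psi V_\Psi^\top = U_\Psi^\top\Psi$. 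You instead pass to the vectorized system $(\Phi^\top\Phi+\mu I)\mathrm{vec}(C)=\mathrm{vec}(Q)$ with $\Phi=\Psi\otimes\Gamma$, invoke the Woodbury identity at the operator level, exploit $\Phi\Phi^\top=\Psi\Psi^\top\otimes\Gamma\Gamma^\top$, and push everything back through $\mathrm{vec}(AXB)=(B^\top\otimes A)\mathrm{vec}(X)$. The two are algebraically the same fact --- the paper's scalar identity is the diagonalized Woodbury formula --- but your version makes the advertised connection to the standard vector-ADMM trick explicit and handles the rank-deficient eigenvalues automatically (no case split is needed, since $\mu I+\Phi\Phi^\top$ is invertible regardless), at the cost of more vec/Kronecker bookkeeping; the paper's version is more elementary and stays entirely within the small matrix factors, never even conceptually forming the $N_\Gamma N_\Psi\times N_\Gamma N_\Psi$ or $GV\times GV$ operators. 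Your closing remark that the $\Delta_\mu$ in \eqref{eq:C_over} is the $G\times V$ restriction built from the nonzero eigenvalues (which coincide for $\Gamma\Gamma^\top$ and $\Gamma^\top\Gamma$) correctly identifies the mild notational overloading that the paper leaves implicit.
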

\begin{proof}
For over-complete dictionaries $\Gamma = U_\Gamma [\Sigma_\Gamma, \  \mathbf{0}] V_\Gamma^\top$ and $\Psi = U_\Psi [\Sigma_\Psi, \ \mathbf{0}] V_\Psi^\top$, 
\begin{equation*}
\Gamma^\top \Gamma = V_\Gamma \left(  \begin{array}{cc}
\Delta_\Gamma & \mathbf{0} \\
\mathbf{0} & \mathbf{0}
\end{array} \right) V_\Gamma^\top \ \textnormal{and} \ \Psi^\top \Psi = V_\Psi \left( \begin{array}{cc} 
\Delta_\Psi & \mathbf{0}\\
\mathbf{0} & \mathbf{0} 
\end{array} \right) V_\Psi^\top.
\end{equation*}
For $G< i \leq N_\G, V< j \leq N_\P$, $\delta_{\G_i}, \delta_{\P_j} = 0$, so $\tilde{C}_{i,j} = \frac{\tilde{Q}_{i,j}}{\delta_{\Gamma_i} \delta_{\Psi_j} + \mu} = \frac{\tilde{Q}_{i,j}}{\mu}$.  For $i\leq G$ and $j\leq V$, we can rewrite 
 \begin{align*}
 \tilde{C}_{i,j} = \frac{\tilde{Q}_{i,j}}{\delta_{\Gamma_i} \delta_{\Psi_j} + \mu} &= \frac{\tilde{Q}_{i,j}}{\mu} - \frac{\delta_{\Gamma_i} \tilde{Q}_{i,j} \delta_{\Psi_j}}{\mu(\delta_{\Gamma_i} \delta_{\Psi_j} + \mu)} = \frac{\tilde{Q}_{i,j}}{\mu} - \frac{\sigma_{\Gamma_i}^2 \tilde{Q}_{i,j} \sigma_{\Psi_j}^2}{\mu(\delta_{\Gamma_i} \delta_{\Psi_j} + \mu)}\\
 &= \frac{\tilde{Q}_{i,j}}{\mu} - \sigma_{\Gamma_i} \frac{\sigma_{\Gamma_i}\tilde{Q}_{i,j} \sigma_{\Psi_j}}{\mu(\delta_{\Gamma_i} \delta_{\Psi_j} + \mu)}\sigma_{\Psi_j}\\
 \implies \tilde{C} & = \tilde{Q}/\mu - \Sigma_\Gamma^\top (\Delta_\mu \circ (\Sigma_\Gamma \tilde{Q} \Sigma_\Psi^\top))\Sigma_\Psi /\mu\\
C &= Q/\mu - V_\G \Sigma_\Gamma^\top (\Delta_\mu \circ (\Sigma_\Gamma V_\G^\top Q V_\P \Sigma_\Psi^\top))\Sigma_\Psi V_\P^\top/\mu\\
C &= Q/\mu - \G^\top U_\G (\Delta_\mu \circ (U^\top_\G \G Q \P^\top U_\Psi )) U_\Psi^\top \P /\mu.
\end{align*}
\end{proof}
Letting $\Gamma' = U_\Gamma^\top \Gamma$ and $\Psi' = U_\Psi^\top \Psi$, which can be precomputed, we have a final efficient update
\begin{equation}
C_{k+1} = Q_k/\mu - \G'^\top (\Delta_\mu \circ (\G' Q_k \P'^\top)) \P' /\mu.
\end{equation}
This allows us to compute the SVDs of $\Gamma \Gamma^\top$ and $\Psi \Psi^\top$ instead of the larger $\Gamma^\top \Gamma$ and $\Psi^\top \Psi$ and work with smaller matrices within each iteration.  We present Kron-ADMM for over-complete dictionaries in Algorithm~\ref{alg:Kron-ADMMover}.
\begin{algorithm}[H]
\caption{Kron-ADMM (for overcomplete dictionaries)}
\begin{algorithmic}
\label{alg:Kron-ADMMover}
\STATE Choose: $\mu, \lambda, \epsilon.$
\STATE Precompute: $U_\Gamma, \Delta_\Gamma, U_\Psi, \Delta_\Psi, \Gamma', \Psi', \Delta_\mu$.
\STATE Initialize: $k=0, Z_0 = \mathbf{0}, \mathcal{T}_0 = \mathbf{0}$. 
\WHILE {error $> \epsilon$}
\STATE 1: $Q_k = \Gamma^\top S \Psi + \mu Z_k - \mathcal{T}_k$;\\
\STATE 2: $C_{k+1} = Q_k/\mu - \Gamma'^\top (\Delta_\mu \circ (\Gamma' Q_k \Psi'^\top)) \Psi'/\mu$;\\
\STATE 3: $Z_{k+1} = \textnormal{shrink}_{\lambda / \mu}(C_{k+1} + \mathcal{T}_{k})$;\\
\STATE 4: $\mathcal{T}_{k+1} = \mathcal{T}_{k} + C_{k+1} - Z_{k+1}$;
\STATE 5: $k \leftarrow k+1$;
\ENDWHILE
\STATE Return: $C$.
\end{algorithmic}
\end{algorithm}
\subsection{Kronecker Dual ADMM}
\label{sec:KronDADMM}
As an alternative to ADMM, Dual ADMM, which applies ADMM to the dual of $l1$ problem \eqref{eq:l1vec}, has been shown to be more efficient than ADMM for over-complete dictionaries \citep{Gonccalves:Thesis15} by allowing one to compute SVDs of the more affordable $\Phi \Phi^\top$ instead of $\Phi^\top \Phi$.  In our previous work \citep{Schwab:MICCAI16} we proposed a Kronecker Dual ADMM (Kron-DADMM) that efficiently solves the spatial-angular sparse coding problem. Below, we give an alternative derivation of this algorithm directly based on the matrix formulation of \eqref{eq:l1_separable}.  The dual of \eqref{eq:l1_separable} is:
\begin{equation}
\label{eq:Kron_l1dual}
\max_A -\frac{1}{2}||A||^2_F + A^\top S \ \ \textnormal{s.t.}\ \ ||\Gamma^\top A \Psi||_\infty \leq \lambda,
\end{equation}
where $||X||_\infty = \max_{i,j} |X_{i,j}|$. To apply ADMM to this optimization problem, we replace $\Gamma^\top A \Psi$ with auxiliary variable $\mathcal{V}$ and add the additional constraint $\Gamma^\top A \Psi - \mathcal{V} = 0$ to get:
\begin{equation}
\label{eq:Kron_l1dual_aux}
\max_{A,\mathcal{V}} -\frac{1}{2}||A||^2_F + A^\top S \ \ \textnormal{s.t.}\ \ ||\mathcal{V}||_\infty \leq \lambda \ \ \textnormal{and} \ \ \mathcal{V} = \Gamma^\top A \Psi.
\end{equation}
Then the augmented Lagrangian is
\begin{equation}
\label{eq:Kron_l1dual_aug}
\mathcal{L}_\eta(A,\mathcal{V},C) = -\frac{1}{2}||A||^2_F + A^\top S + <C, \mathcal{V} - \Gamma^\top A \Psi> + \frac{\eta}{2} ||\mathcal{V} -\Gamma^\top A \Psi ||_F^2 + \delta_\lambda(\mathcal{V})
\end{equation}
where 
\begin{equation}
\delta_\lambda(\mathcal{V}) = \begin{cases} 0 &\mbox{if } ||\mathcal{V}||_\infty \leq \lambda \\ 
\infty &\mbox{if } ||\mathcal{V}||_\infty > \lambda \end{cases}
\end{equation}
and the Lagrange multiplier $C$ corresponds to the primal variable $C$ in \eqref{eq:l1_separable}, which is our variable of interest. We then have
\begin{align}
\frac{\partial \mathcal{L}_\eta(A,\V,C)}{\partial A} &= -A + S - \Gamma C \Psi^\top - \eta \Gamma (\V - \Gamma^\top A \Psi)\Psi^\top = 0\\
&\implies A - \eta \Gamma \Gamma^\top A \Psi \Psi^\top = S - \Gamma (C + \eta \V) \Psi^\top := P.
\end{align}
Now with eigen-decompositions $\Gamma \Gamma^\top = U_\Gamma \Delta_\Gamma U_\Gamma^\top$ and $\Psi \Psi^\top = U_\Psi \Delta_\Psi U_\Psi^\top$ and letting $\tilde{X}= U_\G^\top X U_\P$,
\begin{align}
A + \eta U_\Gamma \Delta_\Gamma U_\Gamma^\top A U_\Psi \Delta_\Psi U_\Psi^\top = P\\
\implies \tilde{A} + \eta \Delta_\Gamma \tilde{A} \Delta_\Psi = \tilde{P}.
\label{eq:Aupdate}
\end{align}
Then, $\tilde{A}$ can be found element-wise by:
\begin{equation}
\tilde{A}_{i,j} + \eta \delta_{\Gamma_i} \tilde{A}_{i,j} \delta_{\Psi_j} = \tilde{P}_{i,j} \implies \tilde{A}_{i,j} = \frac{\tilde{P}_{i,j}}{1 + \eta \delta_{\Gamma_i} \delta_{\Psi_j}}.
\end{equation}
Defining $[\Delta_\eta]_{i,j} \triangleq 1/(1 + \eta \delta_{\Gamma_i} \delta_{\Psi_j})$, the update is $\tilde{A} = \Delta_\eta \circ \tilde{P}$. As shown in \citep{Gonccalves:Thesis15} we can keep the update in terms of $\tilde{A}$ instead of $A$ since the variable we are interested in is $C$.  We can then precompute $S' = \Gamma'^\top S \Psi'$, $\Gamma' = U_\Gamma^\top \Gamma$ and $\Psi' = U_\Psi^\top \Psi$.  The updates of $\mathcal{V}$ and $C$ are as in \citep{Schwab:MICCAI16} and presented in Algorithm~\ref{alg:KronDADMM}, where $P_\lambda^\infty(X)$ sets all entries of matrix $X$ that are greater than $\lambda$ to $\lambda$. 
\begin{algorithm}[H]
\caption{Kron-DADMM}
\label{alg:KronDADMM}
\begin{algorithmic}
\STATE Choose: $\eta, \lambda, \epsilon.$
\STATE Precompute: $S', \Gamma', \Psi', \Delta_\eta$.
\STATE Initialize: $k=0, C_0=0,\mathcal{V}_0=0$. 
\WHILE {Duality Gap $> \epsilon$}
\STATE 1: $\tilde{A}_{k+1} = \Delta_\eta \circ (S' - \Gamma'(C_k - \eta \mathcal{V}_k)\Psi'^\top);$
\STATE 2: $\mathcal{V}_{k+1} = P^\infty_\lambda (\frac{1}{\eta}C_k + \Gamma'^\top \tilde{A}_{k+1}\Psi');$
\STATE 3: $C_{k+1} = \shrink_{\lambda \eta}(C_k + \eta \Gamma'^\top \tilde{A}_{k+1}\Psi');$
\STATE 4: $k \leftarrow k+1;$
\ENDWHILE
\STATE Return: $C$.
\end{algorithmic}
\end{algorithm}
%
\subsection{Kronecker FISTA}
\label{sec:KronFISTA}
The Fast Iterative Thresholding Algorithm (FISTA) \citep{Beck2009} is another well-known method for solving LASSO.  However, just as before, applying FISTA to \eqref{eq:l1vec} for large-scale dMRI data is largely intractable.  So here we adapt FISTA to \eqref{eq:l1_separable} in order to exploit the separability of our spatial-angular basis. 
FISTA is a proximal gradient descent
\begin{equation}
C_{k+1} = \textnormal{shrink}_{\lambda/L}(C_k - \nabla f(C_k) /L), 
\end{equation}
where the proximal operator is the soft-thresholding shrinkage operator associated with the $l1$ norm and $1/L$ is a chosen step size.  The gradient is simply computed as:
\begin{equation}
 \nabla f(C) =  \Gamma^\top (\Gamma C \Psi^\top) \Psi - \Gamma^\top S \Psi.
 \end{equation}
To help speed convergence, we use a line search subroutine to update $L$ at each iteration in addition to the usual Nesterov acceleration.  By \citep{Beck2009}, FISTA will converge for any $L$ greater than the Lipschitz constant of $\nabla f$, which can be estimated by bounding 
\begin{equation}
||\nabla f(C) - \nabla f(\bar{C})||_F = ||\Gamma^\top\Gamma (C - \bar{C}) \Psi^\top \Psi||_F \leq \lambda^\Gamma_{\textnormal{max}} \lambda^\Psi_{\textnormal{max}} ||C - \bar{C}||_F
\end{equation}
where $\lambda^\Gamma_{\textnormal{max}}$ and $\lambda^\Psi_{\textnormal{max}}$ are the maximum eigenvalues of $\Gamma^\top \Gamma$ and $\Psi^\top \Psi$ respectively.  Therefore we initialize $L= \lambda^\Gamma_{\textnormal{max}} \lambda^\Psi_{\textnormal{max}}$.  The Kronecker FISTA (Kron-FISTA) is presented in Algorithm~\ref{alg:Kron-FISTA}. This natural Kronecker extension to FISTA has also been recently presented in \citep{Qi:CVPR16}, but has not been adapted and tested on data of our scale.
\begin{algorithm}[H]
\caption{Kron-FISTA}
\begin{algorithmic}
\label{alg:Kron-FISTA}
\STATE Choose: $\epsilon$.
\STATE Precompute: $\hat{S} = \Gamma^\top S \Psi$
\STATE Initialize: $Z_1 = C_0 = \mathbf{0}$, $n_1 = 1, L = \lambda_{\textnormal{max}}^\Gamma \lambda_{\textnormal{max}}^\Psi$.
\WHILE {error $> \epsilon$}
\STATE 1: $L = \textnormal{linesearch}(Z_k);$
\STATE 2: $\nabla f(Z_k) = \Gamma^\top (\Gamma Z_k \Psi^\top) \Psi - \hat{S};$
\STATE 3: $C_k = \textnormal{shrink}_{\lambda/L}(Z_k - \nabla f(Z_k) /L);$
\STATE 4: $n_{k+1} = \frac{1}{2}(1 + \sqrt{1 + 4n_k^2});$
\STATE 5: $Z_{k+1} = C_{k+1} + \frac{n_k-1}{n_{k+1}}(C_{k+1} - C_{k});$
\STATE 6: $k \leftarrow k+1;$
\ENDWHILE
\STATE Return: $C$.
\end{algorithmic}
\end{algorithm}
\subsection{Complexity Analysis}
\label{sec:complexity}
To evaluate the efficiency of each algorithm and the gains of Kronecker separability compared to the original algorithms we summarize the complexity of each algorithm for general $\Psi$ and $\Gamma$ in Table~\ref{table:complexity}. We notice that classical LASSO algorithms have complexity on the order of the size of the $\Phi$ matrix, including terms that multiply all four dimensions $GVN_\Gamma N_\Psi$. When applying the Kronecker LASSO algorithms, the complexity is reduced to a summation that includes only 3 of the dimensions $GVN_\Psi$, a reduction on the order of $N_\Gamma$ ($\approx 200$ for some of our dictionary choices). We compare the Kronecker LASSO algorithms empirically in Section~\ref{sec:experiments} to identify which is fastest for our regime.  Next we address the fact that the dimensions of $\Gamma \in \mathbb{R}^{G\times N_\Gamma}$ and $\Psi \in \mathbb{R}^{V\times N_\Psi}$ will be orders of magnitude different since $G \approx 100$ and $V \approx 100^3$.  We consider a few specific assumptions on the structure of spatial dictionary $\Psi$ which can decrease the complexity and simplify computations of some of the proposed algorithms:
\begin{table}
\center
\footnotesize{
    \begin{tabular}{ | c | c | c | }
    \hline
    Algorithm & Standard & Kronecker \\ \hline
    OMP & $k^2 + kGV + GVN_\Gamma N_\Psi$ & $k^2 + kGV + GVN_\Gamma  + VN_\Gamma N_\Psi$ \\ \hline
    OMP-PGD & -- & $TG|\I^k||\J^k| + TGV|\J^k| + |\J^k| N_\Gamma N_\Psi $ \\ \hline
    ADMM & $(GV)^2N_\Gamma N_\Psi  + GV(N_\Gamma N_\Psi)^2$  & $(G N_\Gamma N_\Psi + GVN_\Psi) + GV$ \\ \hline
    DADMM & $(GV)^2N_\Gamma N_\Psi $ & $(G N_\Gamma N_\Psi + GVN_\Psi) + GV$ \\ \hline
    FISTA & $(N_\Gamma N_\Psi)^2 +GVN_\Gamma N_\Psi$ & $(G N_\Gamma N_\Psi + GVN_\Psi)$ \\    
    \hline
    \end{tabular}}
    \caption{Comparison of algorithms complexity at iteration $k$. For Kron-OMP-PGD, $T$ is the number of sub-iterations of PGD.}
    \label{table:complexity}
\end{table}

\myparagraph{$\Psi$ Tight Frame}
In the case that $\Psi$ is a tight frame, $\ie \ \Psi \Psi^\top =$ I, which is commonly an assumption in compressed sensing theorems, our method can still be simplified.  In Kron-ADMM (overcomplete) and Kron-DADMM, we may avoid the SVD of $\Psi\Psi^\top$ and respective updates \eqref{eq:Cupdate} and \eqref{eq:Aupdate} can be simplified.


\myparagraph{$\Psi$ Fast Transform}
In the case that $\Psi$ corresponds to a well-studied transform such as wavelets, curvelets, etc., fast transform implementations can be utilized to reduce complexity further.  For the case of FISTA, for example, matrix multiplications of $\Gamma^\top (\Gamma Z_k \Psi^\top) \Psi$ (See Algorithm~\ref{alg:Kron-FISTA} Step 2) involve fast transform reconstructions ($\Psi^\top$) of each DWI ($\Gamma Z_k$) and then deconstructions ($\Psi$) which we parallelize over all DWI in our implementation.

\myparagraph{$\Psi$ Orthonormal}
In the case that $\Psi$ is orthonormal, $\ie \ \Psi^\top \Psi = \Psi \Psi^\top =$ I then \eqref{eq:l1_separable} can be simplified to \eqref{eq:VoxReconMult} after noticing:
\begin{equation}
\label{eq:Kron_l1_orthonormal}
||\Gamma C \Psi^\top - S||_F^2 = ||\Gamma C \Psi^\top\Psi - S\Psi||_F^2 =  ||\Gamma C - \hat{S}||_F^2.
\end{equation}
This optimization can be solved using traditional methods after precomputing $\hat{S}=S\Psi$. 

\myparagraph{$\Psi$ Separable Tensor Product}
In the case that $\Psi$ can be separated into a 3D tensor product $\Psi = \Psi_x \otimes \Psi_y \otimes \Psi_z$, the complexity of multiplication can be simplified by another degree, in the same vein as the decrease in complexity we gained from using $\Phi = \Psi \otimes \Gamma$.  In this case, instead of the matrix multiplication, $S = \Gamma C \Psi^\top$ can be written using n-mode products of tensors $\mathcal{S} =  \mathcal{C} \times_x \Psi_x \times_y \Psi_y \times_z \Psi_z \times_q \Gamma$. Furthermore, if we consider DSI acquisition where $q$-space measurements are acquired in a grid over $\mathbb{R}^3$, and assume we can represent these measurements over a separable basis over each dimension, then we can take $\Gamma = \Gamma_{q_x} \otimes \Gamma_{q_y} \otimes \Gamma_{q_z}$ and $\Phi$ becomes a 6-tensor.

\section{Experiments}
\label{sec:experiments}
\subsection{Data}
\label{sec:data}
We perform our experiments on single-shell HARDI data, though as we emphasized earlier, our framework and algorithms can be applied to any dMRI acquisition protocol with a suitable choice of the angular basis $\Gamma$. We experimented on a phantom and a real HARDI brain dataset. Specifically, we applied our methods to the ISBI 2013 HARDI Reconstruction Challenge Phantom dataset \footnote{http://hardi.epfl.ch/static/events/2013\_ISBI/},
a $V\!=\!50\!\times\!50\!\times\!50$ volume consisting of 20 phantom fibers crossing intricately within an inscribed sphere, measured with $G\!=\!64$ gradient directions (SNR $=30$). Our initial experiments test on a 2D $50\!\times\!50$ slice of this data for simplification. The real HARDI brain dataset consists of a $V\!=\!112\!\times\!112\!\times\!65$ volume with $G=127$ gradient directions.  
We conducted experiments on the core white matter brain region of size $V\!=\!60\!\times\!60\!\times\!30$.
\subsection{Kronecker Algorithm Comparison}
\label{sec:algcomp}
In this section we compare the computational time performance of each of the proposed Kronecker LASSO algorithms, (Kron-ADMM, Kron-DADMM, and Kron-FISTA) on a 2D $50\times50$ slice of phantom data for various values of $\lambda$ using Haar-SR.  For our experiment, we ran Kron-FISTA until we reached a very small mean squared error of $\epsilon = 10^{-8}$.  The objective value obtained was then taken to be a rough ground truth minimum.  We then tested each of Kron-ADMM, Kron-DADMM, and Kron-FISTA and recorded the time it took to reach a relative error of $10^{-4}$ from the known minimum. Figure~\ref{fig:algorithmComparison} reports the objective value of each algorithm as a function of computing time for various sparsity levels associated to choices of $\lambda$. Table~\ref{fig:algorithmComparison} gives the number of iterations until completion for each method and sparsity level. For our experiments, Kron-FISTA appears to be the fastest algorithm in all cases, followed by Kron-DADMM.  The superior performance of DADMM over ADMM is consistent with the findings of \citep{Gonccalves:Thesis15}. 
With these results, we henceforth use Kron-FISTA for subsequent experiments.
\begin{figure}
\centering
\includegraphics[width=.9\linewidth,trim=120 0 120 0,clip]{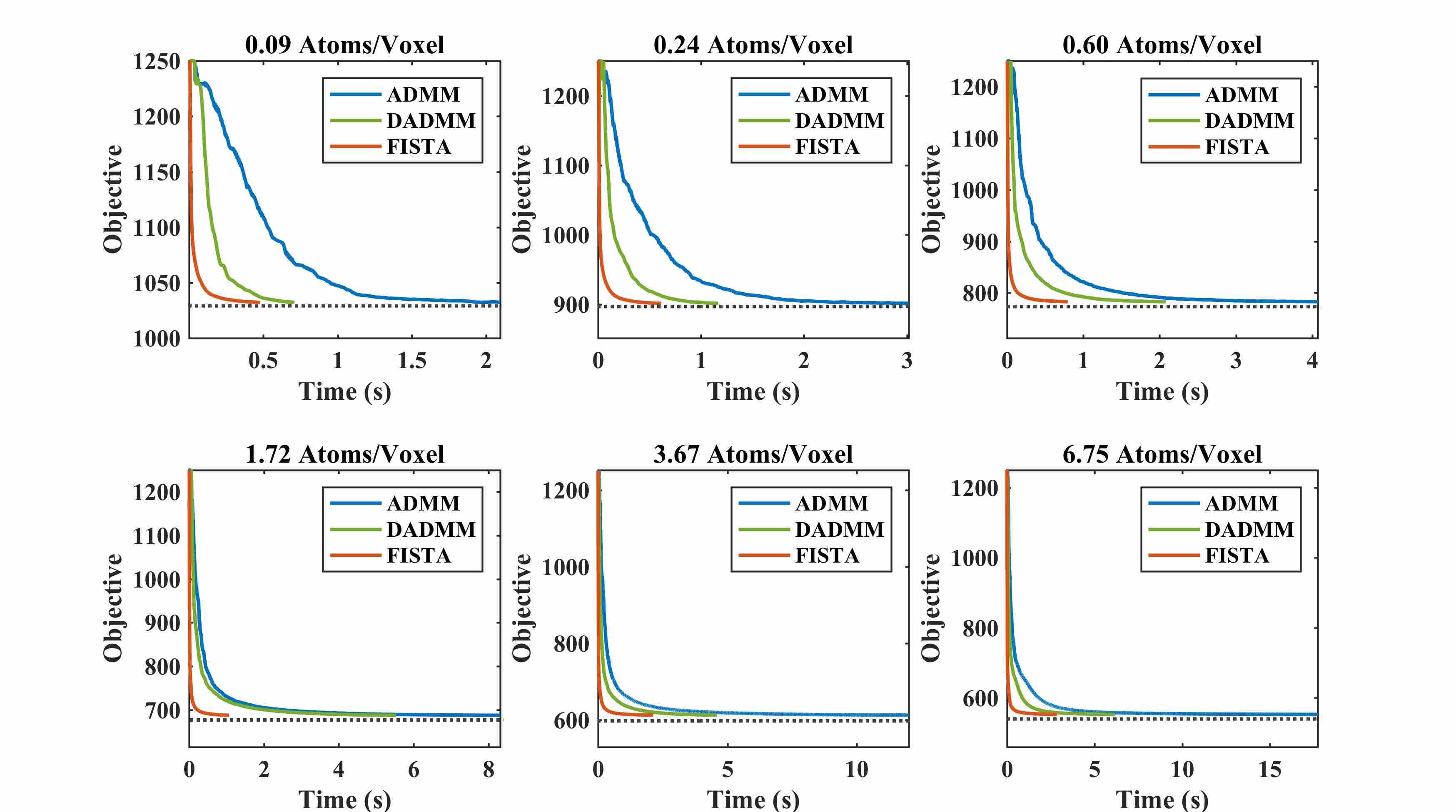}
\caption{Comparison of time for completion of Kron-ADMM, Kron-DADMM, and Kron-FISTA on a 2D $50\times50$ phantom HARDI data using Haar-SR for various sparsity levels. Kron-FISTA consistently reaches the minimum objective in the least amount of time. For number of iterations and lambda values, see Table~\ref{table:algorithmComparison}.}
\label{fig:algorithmComparison}
\end{figure}

\subsection{Choice of Spatial-Angular Dictionaries}
The experiments in this section are conducted using fixed spatial and angular dictionaries.  For the choice of the angular dictionary $\Gamma$, we consider the over-complete Spherical Ridgelet (SR) basis \citep{TristanVega:MICCAI11}, which has been shown to sparsely model HARDI signals. The corresponding dictionary in the space of ODFs is the set of spherical wavelets (SW) (see Figure~\ref{fig:kronbasis} for an example of one spherical wavelet atom).  With order $L=2$ and $4$, the SR dictionary contains $N_\Gamma=210$ and $N_\Gamma=1169$ atoms, respectively.  We used both amounts of atoms for the small 2D $50\times50$ phantom dataset and found roughly identical results suggesting that a basis of order $L=2$ contains enough atoms if the number of gradients is below $210$.  This reduces computation significantly.  In comparison, the spherical harmonic (SH) basis has been shown in prior work \citep{TristanVega:MICCAI11} to not exude sparse signals and so we do not repeat this comparison in the current work.  

\begin{table}[H]
\center
\begin{tabular}{|c|cccccc|}
    \hline
    Atoms/Voxel & 0.09 & 0.24 & 0.60 & 1.72 & 3.67 & 6.75\\
    $\lambda$ & $1.4^1$ & $1.4^{-1}$ & $1.4^{-3}$ & $1.4^{-5}$ & $1.4^{-7}$ & $1.4^{-9}$ \\ \hline
    Kron-ADMM & 797 & 1462 & 2096 & 3660 & 4365 & 4667 \\
    Kron-DADMM & 357 & 597 & 1060 & 1722 & 1928 & 1953 \\
    Kron-FISTA & 161 & 219 & 288 & 346 & 584 & 611 \\
    \hline
\end{tabular}
\caption{
Number of iterations to completion for Kron-ADMM, Kron-DADMM, Kron-FISTA on a 2D $50\times 50$ phantom HARDI data using Haar-SR. Kron-FISTA converges in the fewest number of iterations. For computation time, see Figure~\ref{fig:algorithmComparison}.}
\label{table:algorithmComparison}
\end{table}

For the choice of spatial dictionary $\Psi$, the spatial wavelet transform is a popular sparsifying basis for natural images and structural MRI volumes.  In our previous work \citep{Schwab:MICCAI16} we compared the performance of Daubechies wavelets and Haar wavelets and concluded that Daubechies wavelets resulted in a smoothing of the boundaries between isotropic and anisotropic regions which was not indicative of the more abrupt boundaries that real HARDI data exhibits.  Haar wavelets outperformed Daubechies wavelets in terms of reconstruction error arguably due to the fact that HARDI data exhibits more rigid boundaries and piece-wise consistencies, a spatial feature that has motivated the use of total-variation penalties in many other reconstruction methods.  For this reason, we do not consider Daubechies wavelets in this work.  

In addition to Haar wavelets, we consider the spatial curvelets dictionary \citep{Candes:MMS06} (featured as the spatial atom in Figure~\ref{fig:kronbasis}) which, in addition to variations in position and scale, offers directional variations which may be useful for sparsely modeling the naturally directional HARDI fiber tracts regions.  An important criteria for choosing our spatial basis is that they be tight frames as this choice has important theoretical implications for compressed sensing and offers computational advantages (as discussed in Section~\ref{sec:complexity}). They additionally have fast transform implementations which also reduce computational complexity.  
Finally, to compare our formulation to state-of-the-art voxel-wise angular sparse coding, we can simply choose $\Psi$ to be the $V\!\times\!V$ identity I$_V$. 
For ease of notation, we use a spatial-angular $\Psi$-$\Gamma$ labeling: Haar-SR, Curve-SR, I-SR for Haar wavelets, curvelets, and the identity, respectively, for the spatial domain with SR for the angular domain.
\begin{figure}
\centering
\includegraphics[width=.8\linewidth,trim=0 0 0 0,clip]{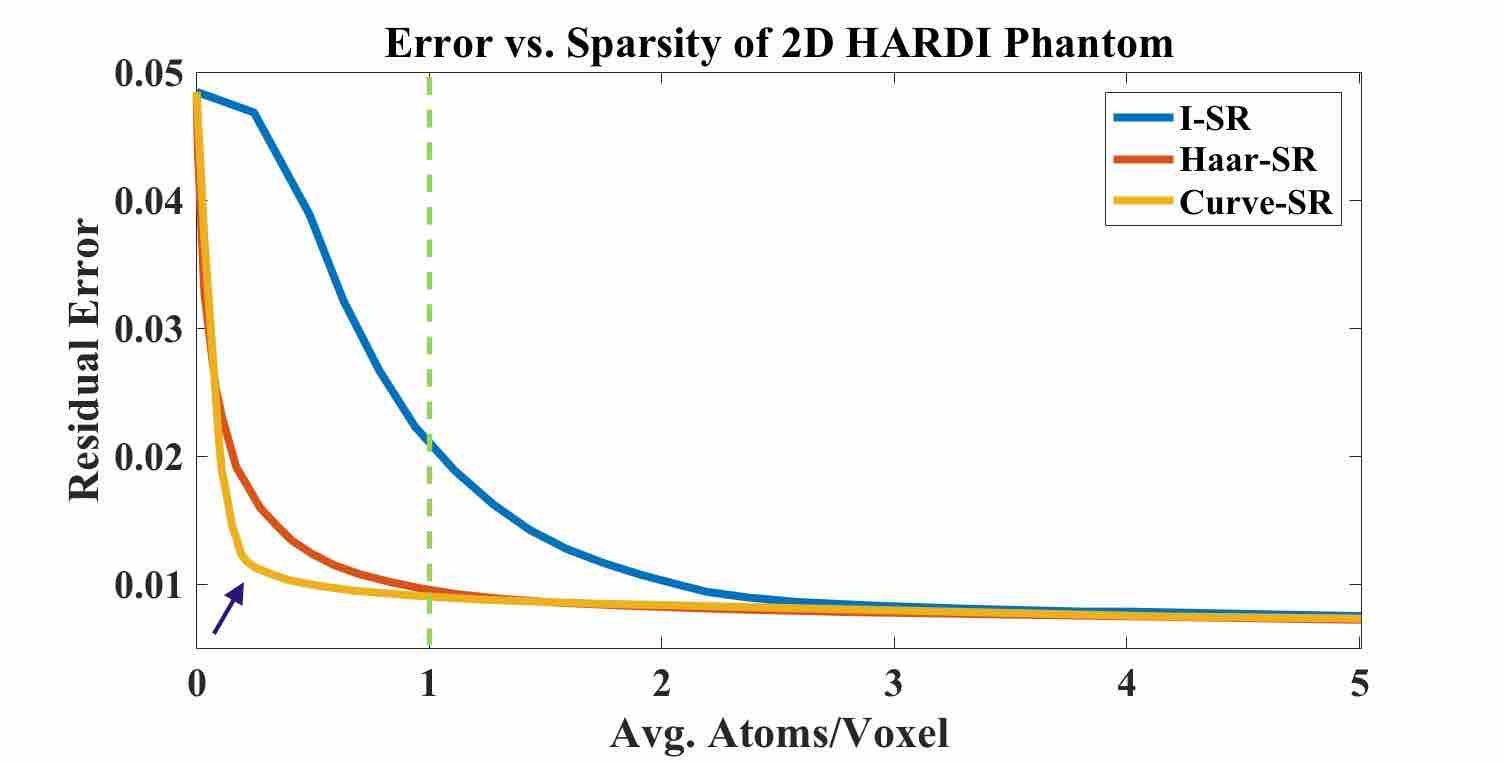}
\caption{Quantitative results of residual error vs. spatial-angular sparsity levels for I-SR, Haar-SR, and Curve-SR on 2D phantom data for various values of $\lambda$. Curve-SR out performs Haar-SR for low sparsity levels while I-SR has very high relative reconstruction error.  The reconstruction of I-SR data points are displayed in Figure~\ref{fig:IDanalysis} and Haar-SR/Curve-SR in Figure~\ref{fig:PhantomQualitative5050}. Our finding of I-SR requiring 6-8 atoms per voxel for accurate reconstruction is consistent with previous findings \citep{Michailovich:ISBI08,Michailovich:TIP10}.}
\label{fig:ResidualVsSparsityPhantom5050}
\end{figure}
\begin{figure}
\center
 \includegraphics[width=.38\linewidth,trim=0 0 0 0,clip]{PhantomSlice25_SHGTsparsity15_lowres2.jpg}
     \includegraphics[width=.38\linewidth,trim=210 190 170 65,clip]{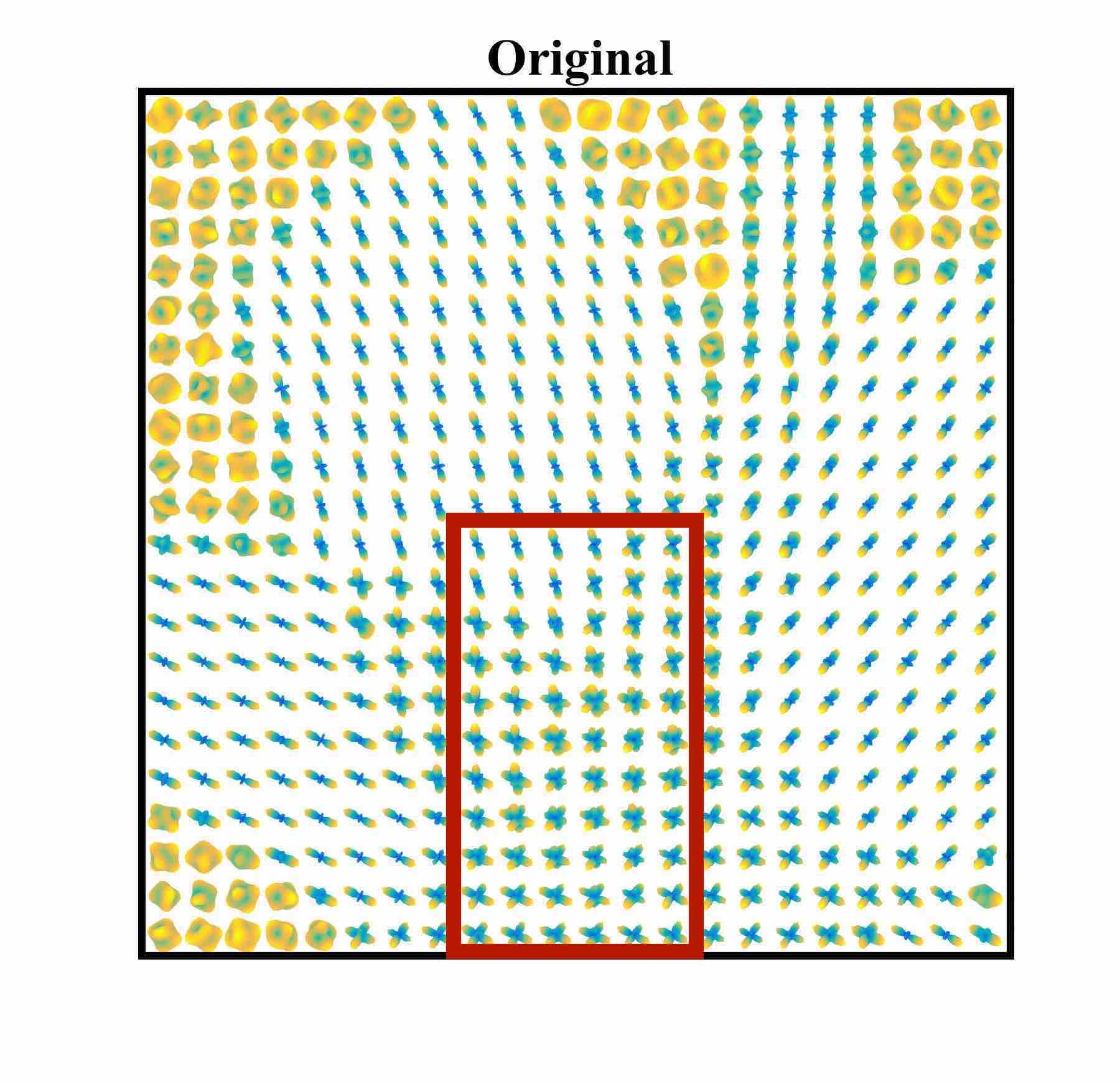}
      \includegraphics[width=.21\linewidth,trim=0 0 0 0, clip]{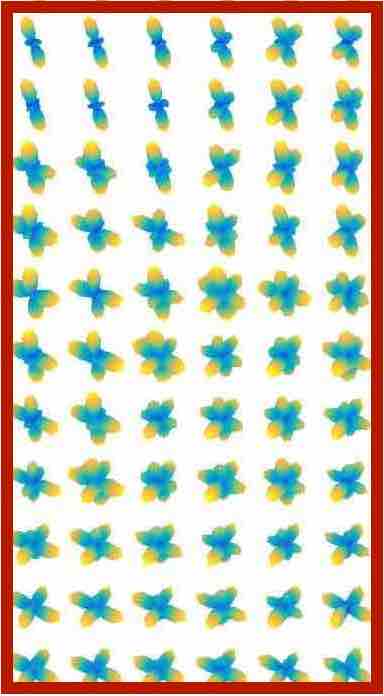}\\
          \includegraphics[width=.38\linewidth,trim=0 0 0 0,clip]{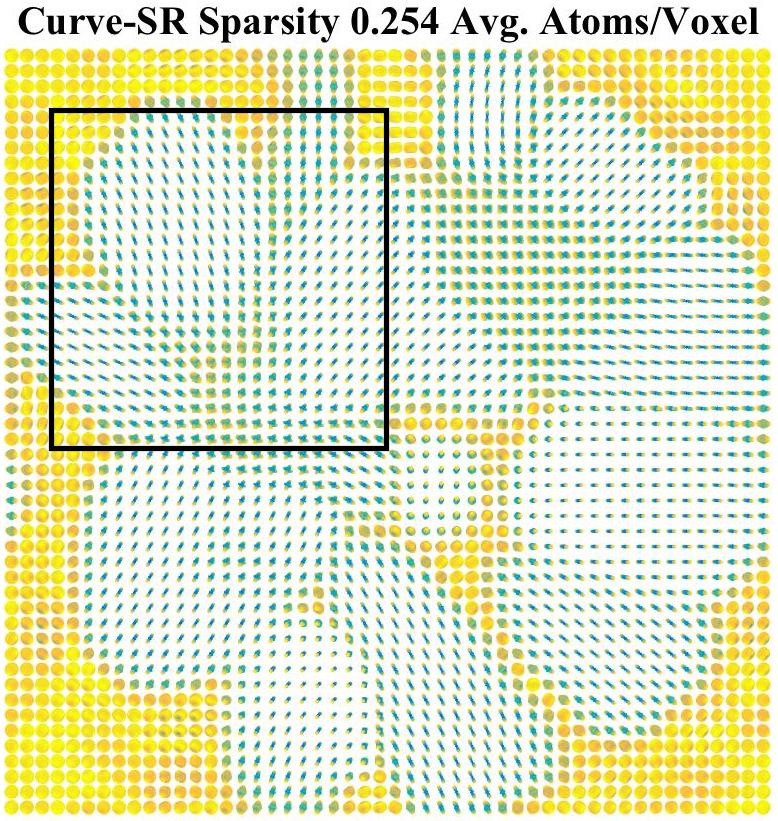}
    \includegraphics[width=.38\linewidth,trim=0 0 0 0,clip]{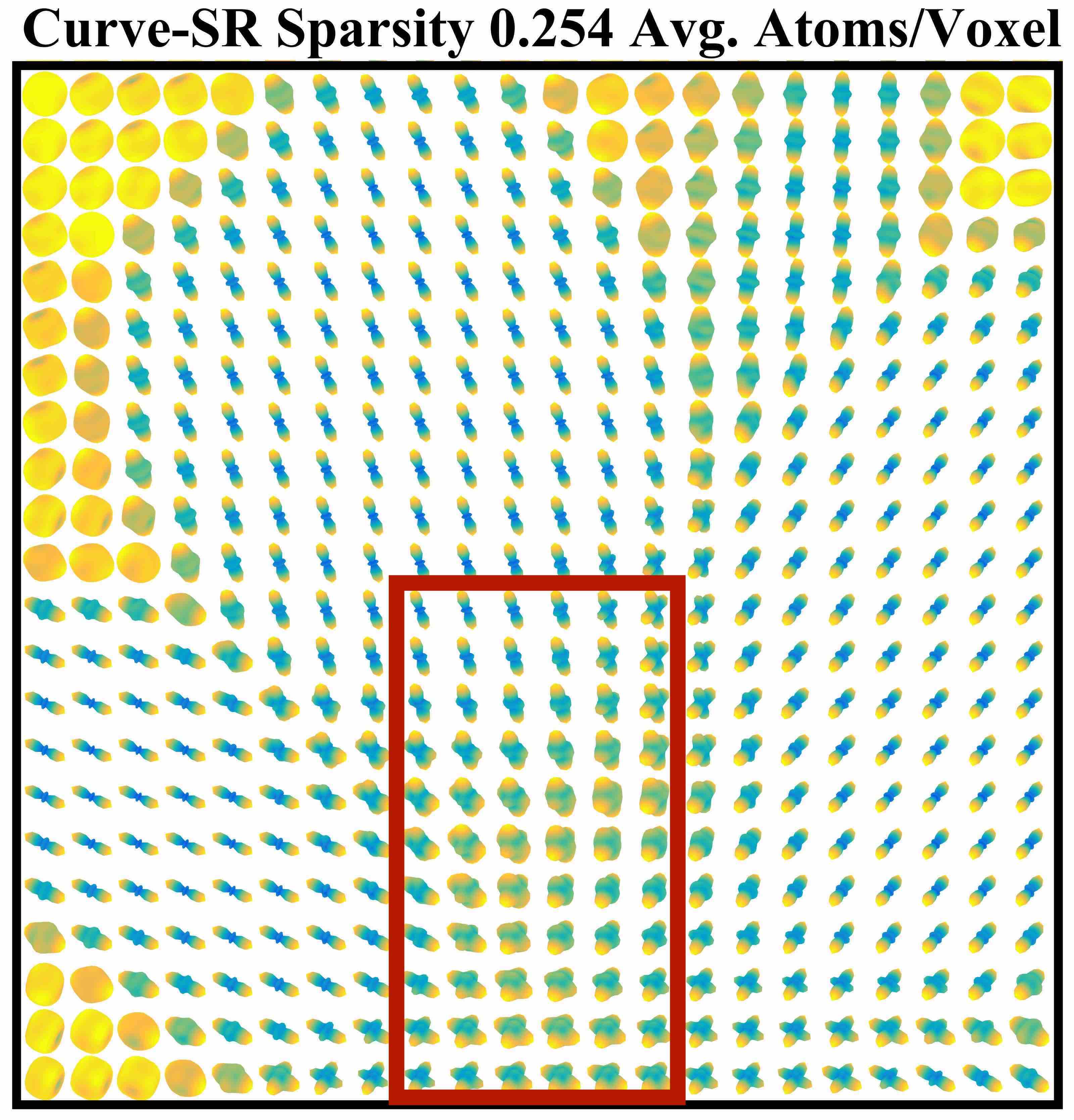}
    \includegraphics[width=.21\linewidth,trim=0 0 0 0, clip]{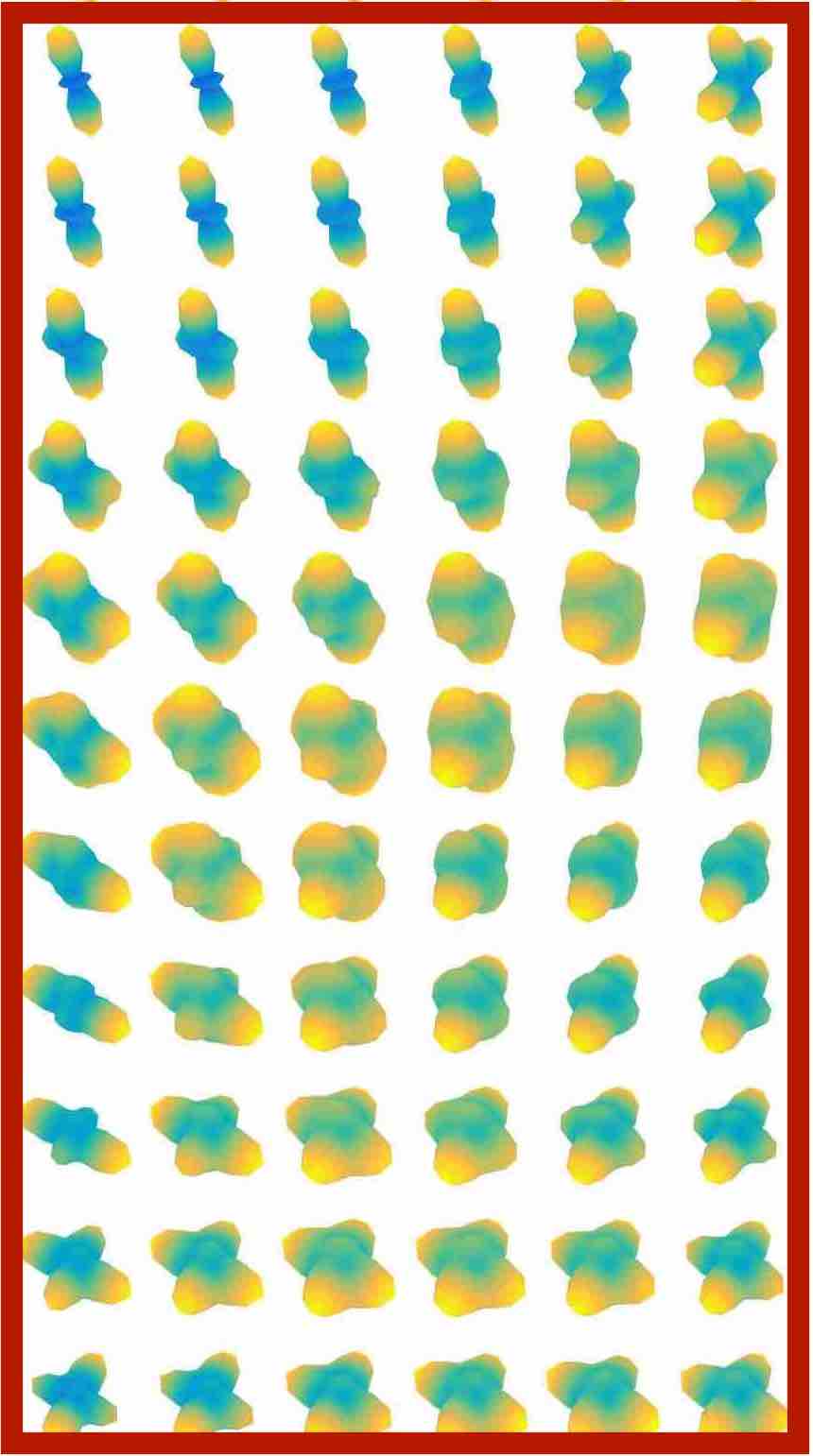}\\
 \includegraphics[width=.38\linewidth,trim=0 0 0 0,clip]{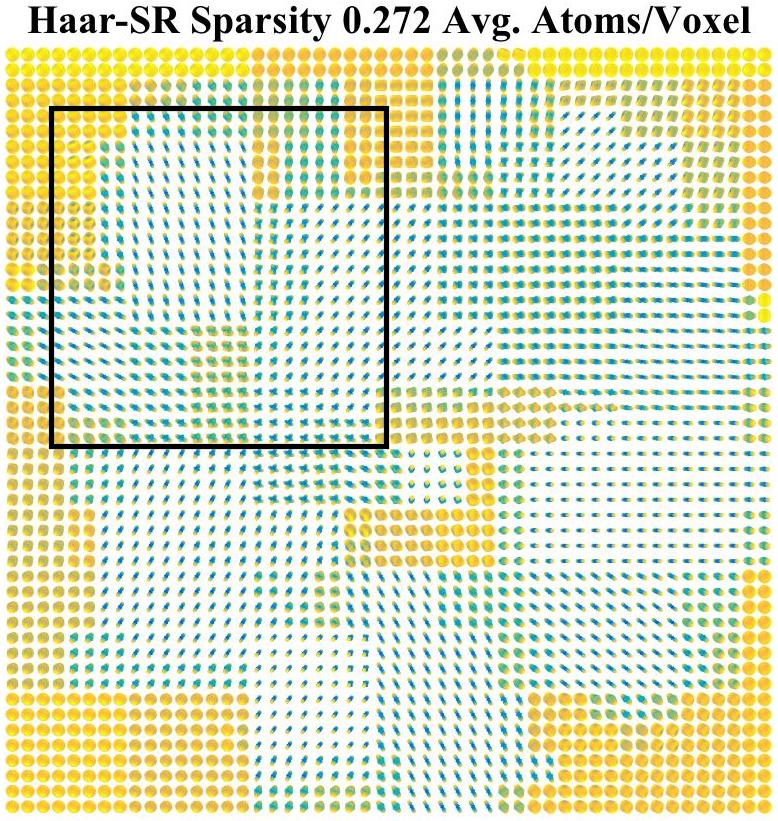}
       \includegraphics[width=.38\linewidth,trim=0 0 0 0,clip]{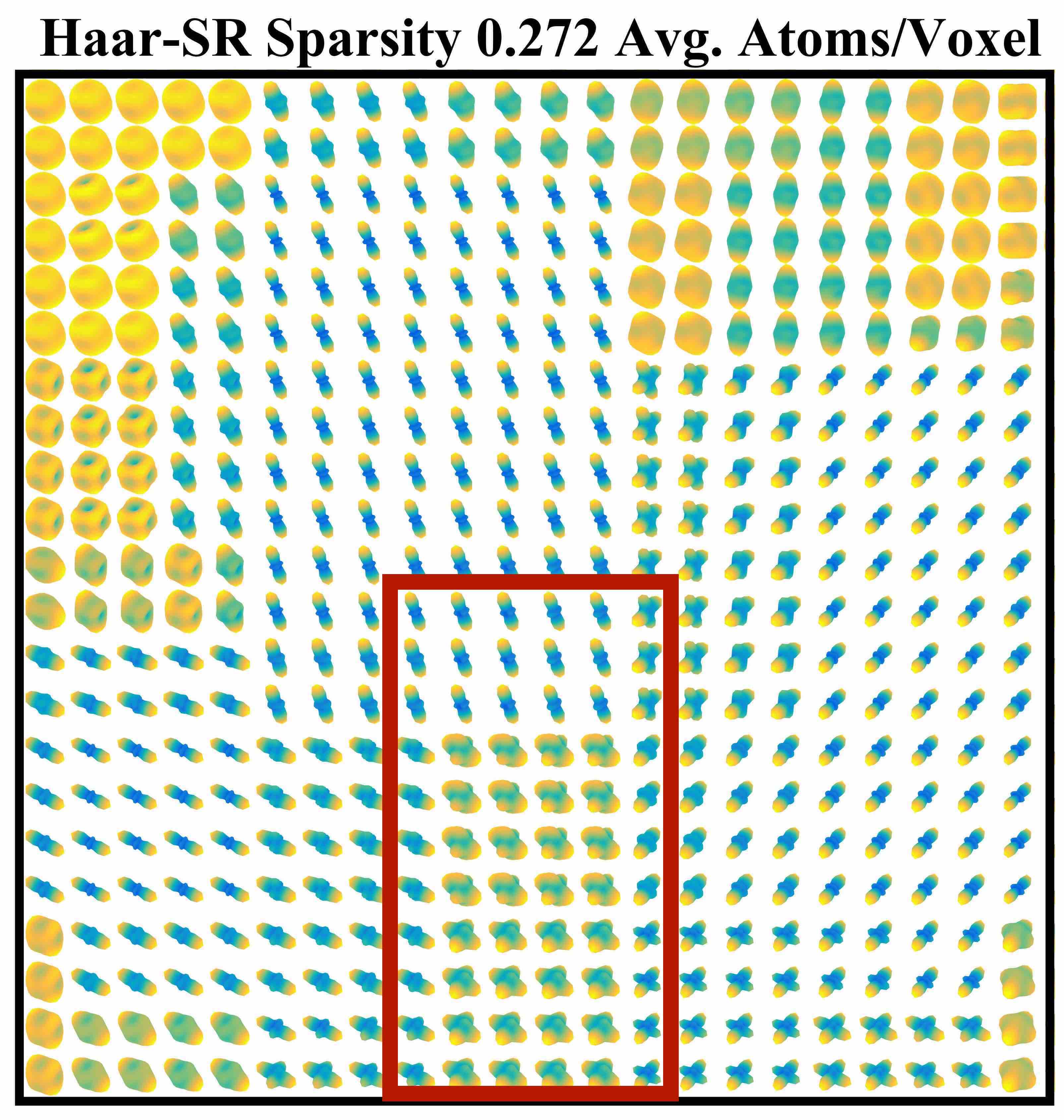}
      \includegraphics[width=.21\linewidth,trim=0 0 0 0,clip]{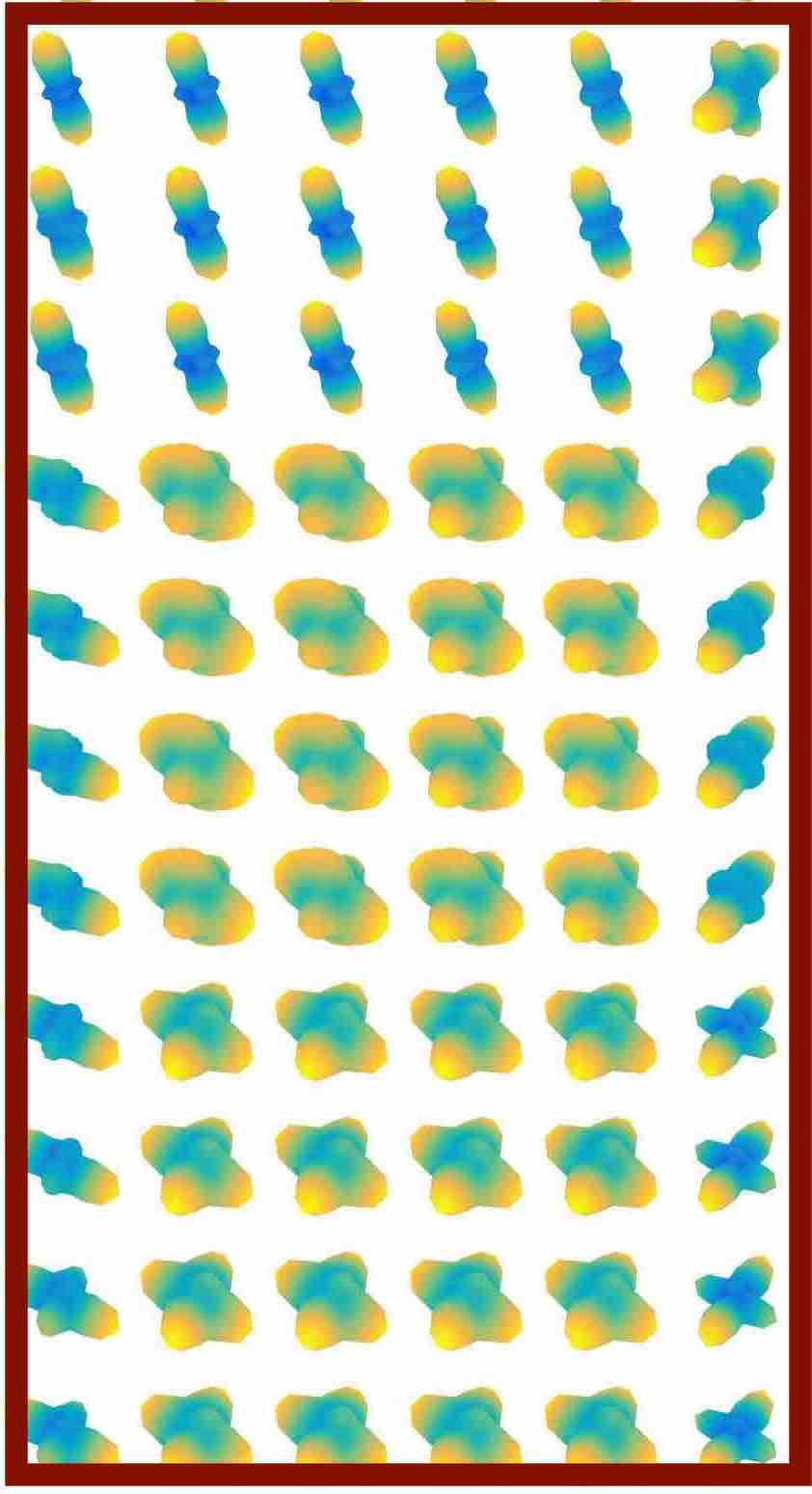}
      \\
 \caption{Results of the proposed spatial-angular sparse coding using Kron-FISTA for Haar-SR and Curve-SR using an average of $\sim0.25$ atoms/voxel compared to original signal.  Curve-SR outperforms Haar-SR in this regime due to its additional directionality. We can see a drastically better reconstruction compared to the state-of-the-art at the same sparsity level in the top left of Figure~\ref{fig:IDanalysis}.  This clearly shows that we can achieve accurate reconstruction with less than 1 atom/voxel.}
 \label{fig:PhantomQualitative5050}
\end{figure}

\subsection{Sparsity Results}
\label{sec:results}
In this section we compare the performance of our spatial-angular sparse coding method to the state-of-the-art angular sparse coding by analyzing reconstruction accuracy using very few nonzero coefficients.  The first experiment is tested on the $50\times50$ phantom data slice.  We ran Kron-FISTA for various values of $\lambda$ for Haar-SR, Curve-SR and I-SR.  In Figure~\ref{fig:ResidualVsSparsityPhantom5050} we show the results of residual reconstruction error $\frac{1}{GV}||S^* - S_{orig}||_F$ vs. spatial-angular sparsity levels in terms of the average number of atoms per voxel ($||C^*||_0/V$).   The ideal reconstruction will have a very low average number of atoms per voxel with low residual error, which happens in the lower left-hand corner of our plot. We can see that in this range, Curve-SR outperforms Haar-SR while I-SR is unable to perform at this level.  Reconstruction of I-SR for various sparsity levels are visualized in Figure~\ref{fig:IDanalysis}.  In comparison, Figure~\ref{fig:PhantomQualitative5050} displays the sparse reconstruction of Haar-SR and Curve-SR with an average of 0.25 atoms/voxel. Notice that Curve-SR leads to a somehow smoother and more accurate reconstruction than the expectedly boxy reconstruction of Haar-SR at this very high sparsity level.  Still, in both cases, the proposed joint spatial-angular sparse coding can reconstruct accurate signals with much fewer number of atoms than angular sparse coding, which as seen again from Figure~\ref{fig:IDanalysis} can be achieved with an average of around 4 atoms per voxel. More strikingly, in cases of high signal complexity for crossing fibers, the sparse code requires on the order of 6-12 atoms per voxel (see Figure~\ref{fig:nonzerocount}).

We repeat this same analysis on real HARDI data. First, as was investigated for the phantom data in Section~\ref{sec:state-of-the-art}, we analyze the effect of adding TV regularization to the angular sparse coding with different weighting $\alpha$ in \eqref{eq:TV}, with $\alpha =0$ being equivalent to the purely angular I-SR model. The algorithm used to solve the resulting optimization problem is the Split Bregman procedure outlined in \citep{Michailovich:TMI11}. Consistent with the phantom experiment of Figure~\ref{fig:angTV}, we observe from Figure~\ref{fig:ResidualVsSparsityReal606030_ISRplusTV} that adding spatial regularization again increases the total number of atoms for a given reconstruction error compared to the $\alpha=0$ case. While the reconstructed signals may have a qualitatively  spatial regularity which may result in a qualitatively better output. In comparison, the joint model we propose achieves better sparsity levels for comparable reconstruction errors. Note also that both algorithms displayed very similar performances in terms of running time for that particular experiment.

We finally validate the approach in the case of a full 3D HARDI volume for a very sparse number of atoms. Figure~\ref{fig:ResidualVsSparsityReal606030} presents the reconstruction error vs. sparsity results for the state-of-the-art framework (in which we set $\alpha=0$ as the results of Figure \ref{fig:ResidualVsSparsityReal606030_ISRplusTV} suggest) versus the joint Haar-SR and Curve-SR. The plot shows again that curvelets outperforms Haar for high sparsity levels in the range of 0.5-2 atoms/voxel. As expected and consistent with our phantom data experiment, the state-of-the-art I-SR has comparable reconstruction error in the range of 6-8 atoms/voxel.  Figure~\ref{fig:RealQualitative606030} shows the quality of reconstruction of I-SR, Haar-SR, and Curve-SR compared to the original signal using an average of $\sim1$ atom/voxel. Haar-SR presents boxy regions while Curve-SR maintains a smoother reconstruction with a preservation of smaller detailed fiber tract regions.  In contrast, the state-of-the-art I-SR is unable to model intricate fiber regions and is forced to set most voxels to zero atoms.  All in all, we can see that using our proposed method, we can achieve much higher sparsity levels than the state-of-the-art, and accurate reconstructions using less than 1 atom/voxel.  In terms of efficiency, Kron-FISTA was completed on the real HARDI data of size $V\!=\!60\!\times\!60\!\times\!30$, $G\!=\!127$ in 1.5 hours for our sparsity level of interest using the fast 3D wavelet transform implemented in MATLAB.
\begin{figure}
\centering
\includegraphics[width=.8\linewidth,trim=0 0 0 0,clip]{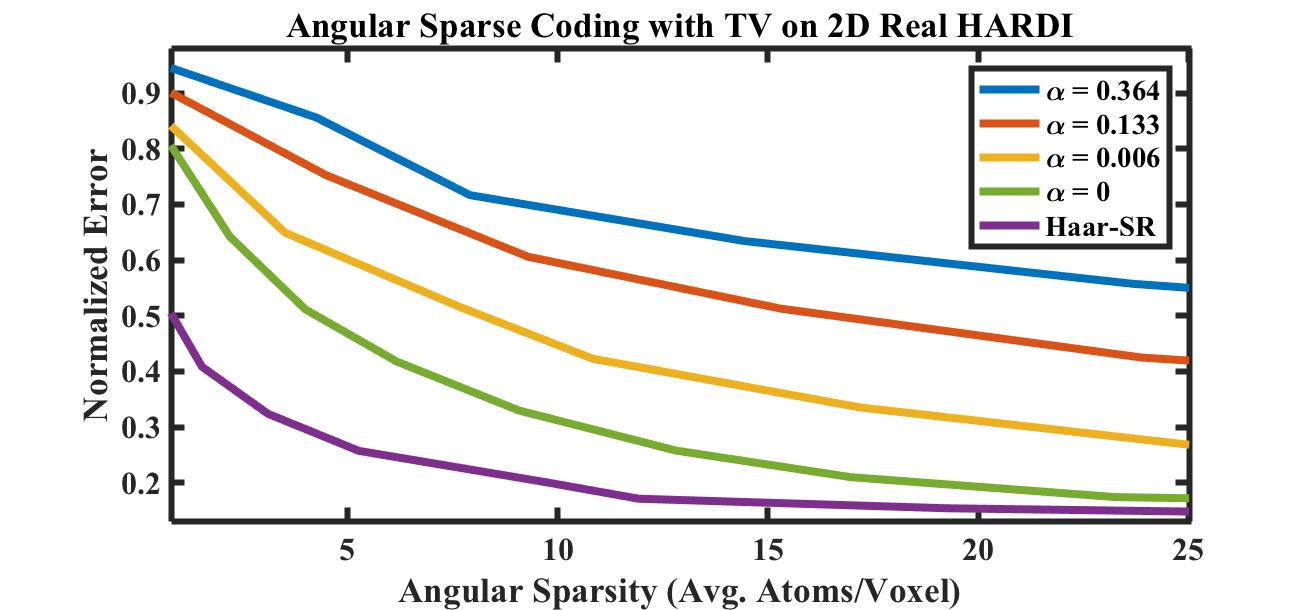}
\caption{Comparison of the spatial-angular sparsity levels achieved by our proposed joint method with Haar-SR dictionaries and state-of-the-art method of \citep{Michailovich:TMI11} with different spatial regularization parameters $\alpha$, both applied on a 2D slice of a real HARDI scan. Recall $\alpha=0$ corresponds to purely angular sparse coding, I-SR. Adding TV regularization results in an increase of the number of atoms for a given reconstruction error. The joint method achieves better sparsity levels than using separate sparsity penalties.}
\label{fig:ResidualVsSparsityReal606030_ISRplusTV}
\end{figure}
\begin{figure}
\centering
\includegraphics[width=.8\linewidth,trim=0 0 0 0,clip]{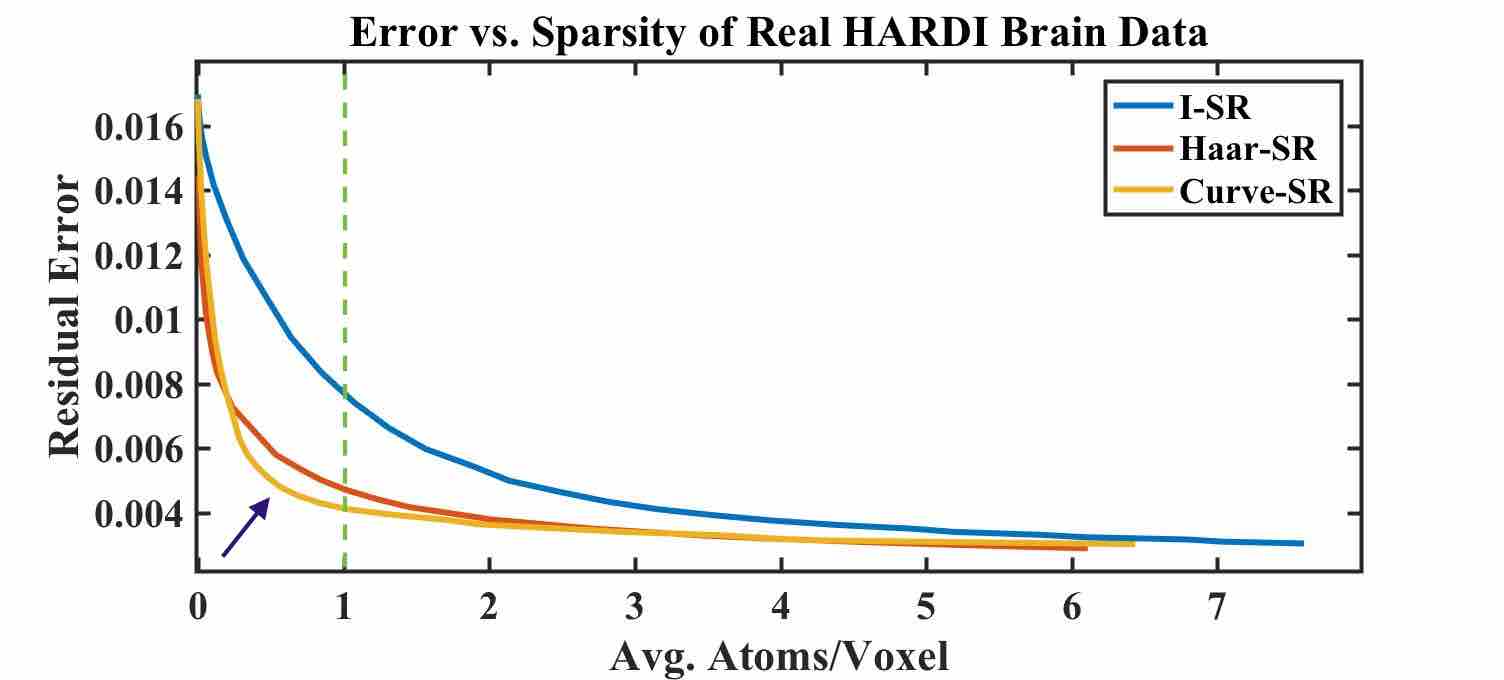}
\caption{Comparison of the spatial-angular sparsity level achieved by Haar-SR and Curve-SR with respect to the state-of-the-art I-SR on the entire 3D real HARDI volume. The curvelets provide a good reconstruction error with the sparsest number of atoms, in the range of 0.5 to 2 atoms/voxel.  The state-of-the-art error is much larger in this sparsity range and only comparable in the predicted range of 6-8 atoms/voxel, consistent with the previously reported \citep{Michailovich:ISBI08,Michailovich:TIP10} for I-SR.}
\label{fig:ResidualVsSparsityReal606030}
\end{figure}
\begin{figure}
\center
 \includegraphics[width=.48\linewidth,trim=0 0 0 0,clip]{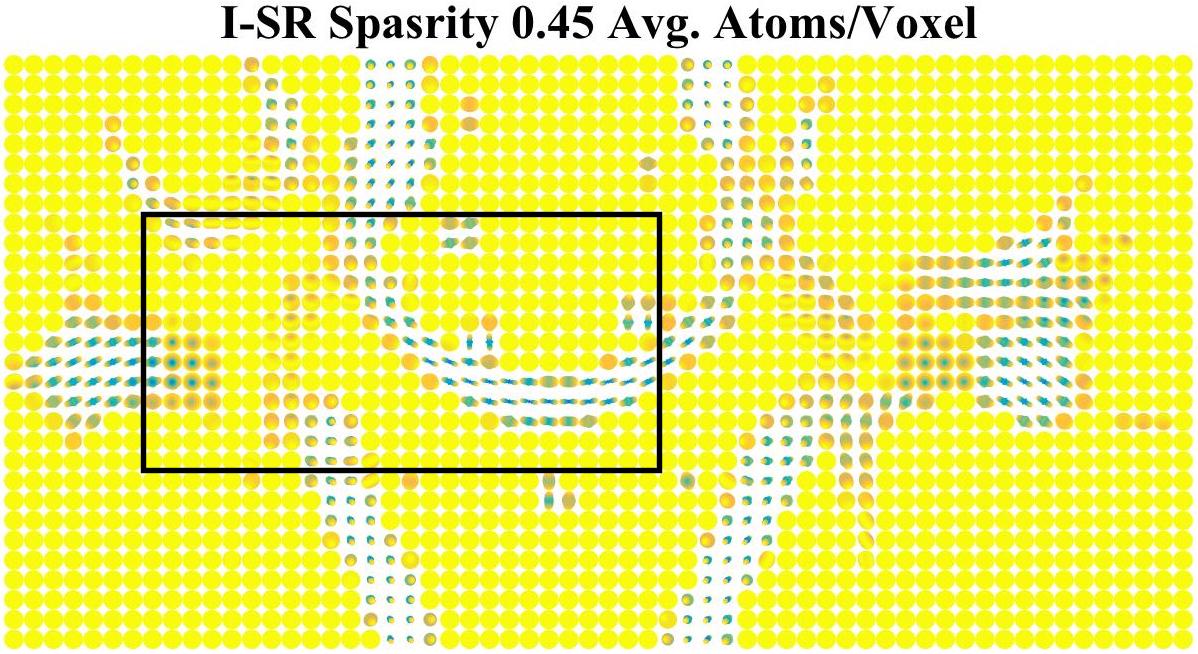}
     \includegraphics[width=.48\linewidth,trim=0 0 0 0,clip]{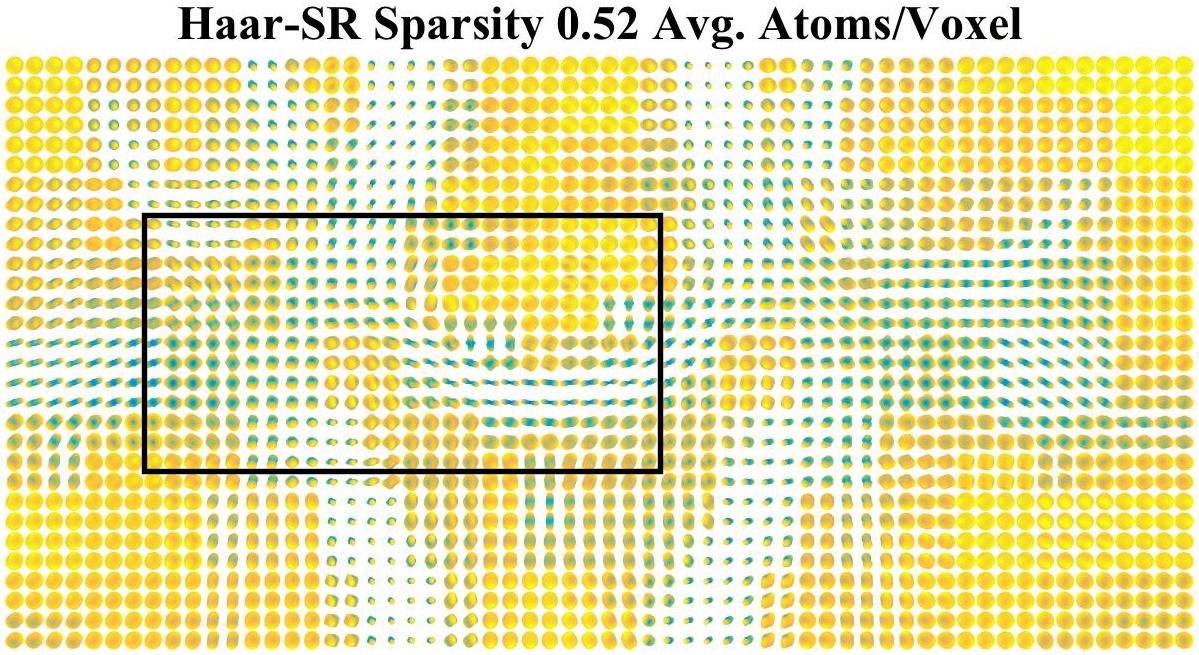}\\
      \includegraphics[width=.48\linewidth,trim=0 0 0 0,clip]{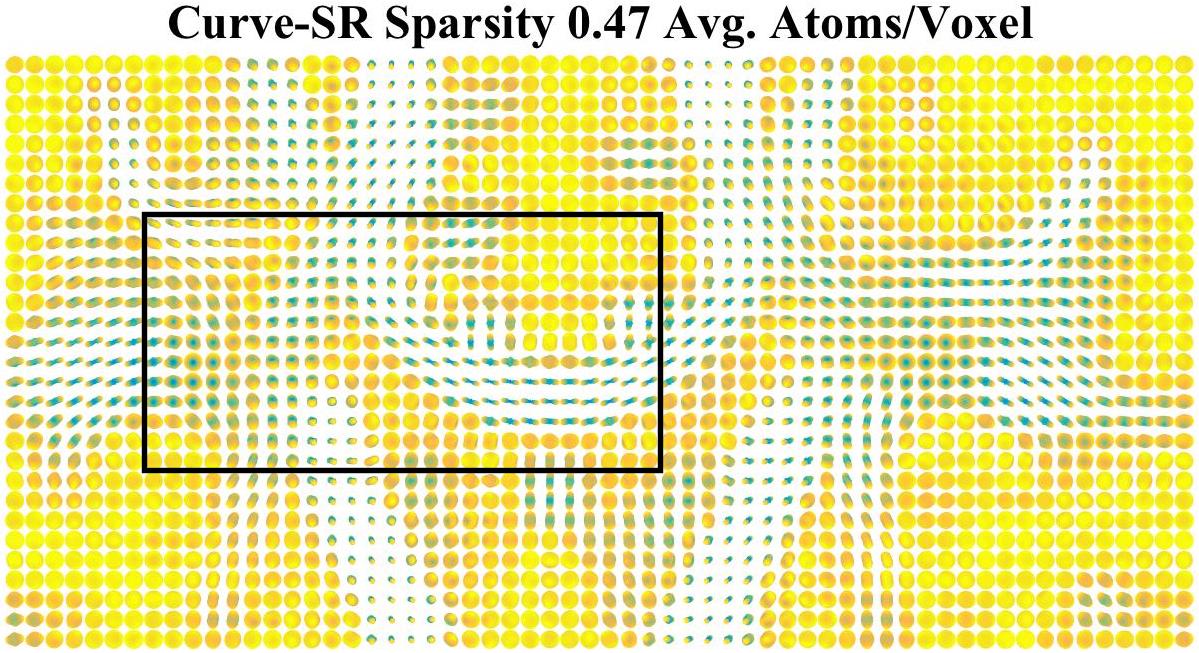}
    \includegraphics[width=.48\linewidth,trim=0 0 0 0,clip]{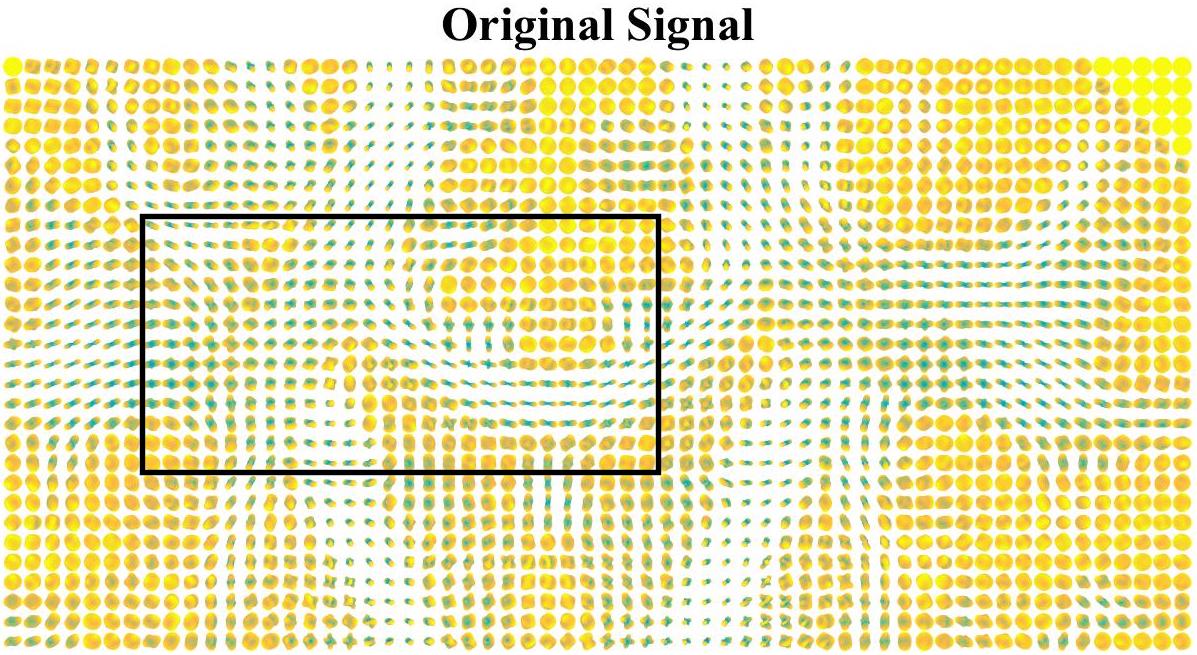}\\
     \includegraphics[width=.48\linewidth,trim=0 0 0 0,clip]{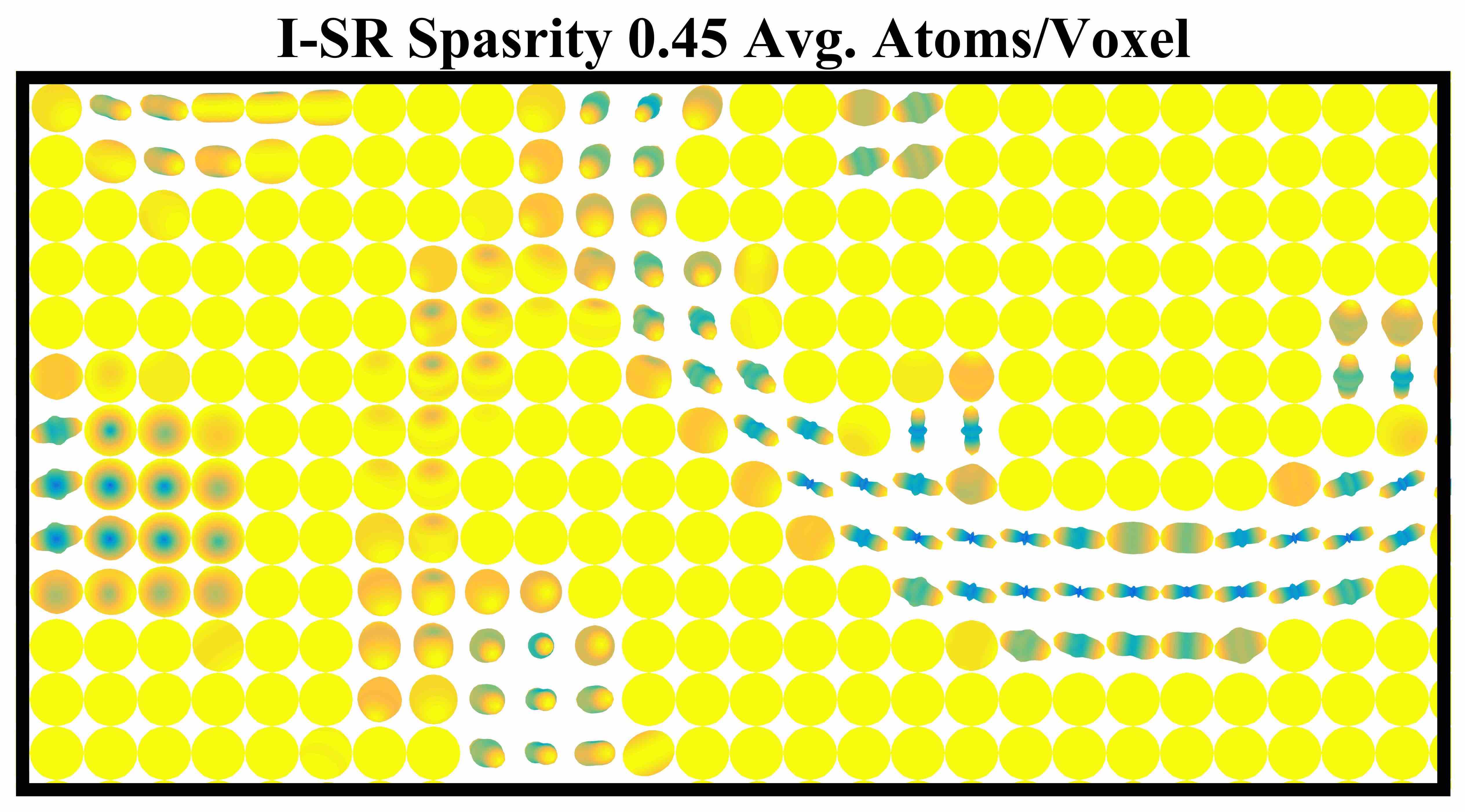}
     \includegraphics[width=.48\linewidth,trim=0 0 0 0,clip]{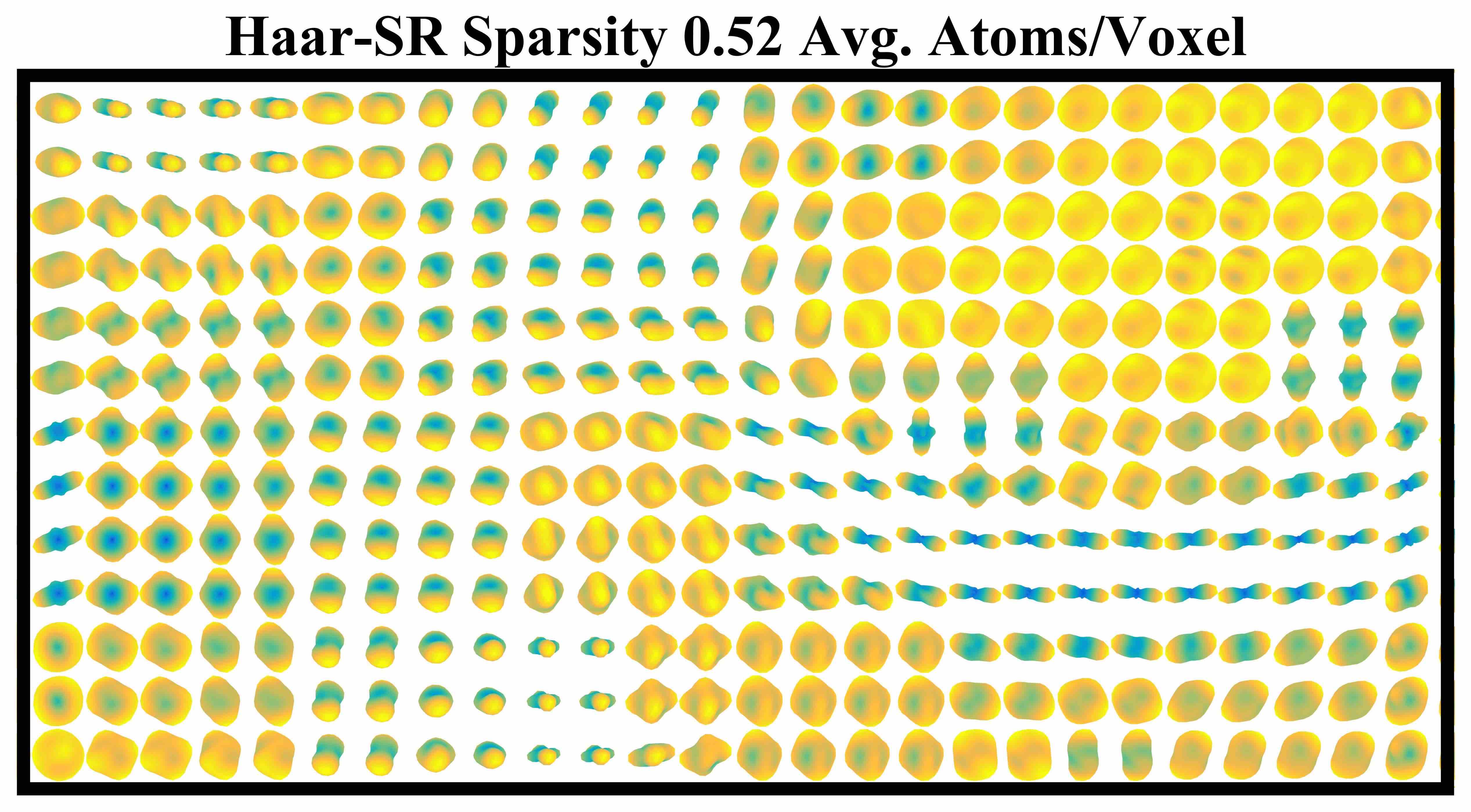}\\
      \includegraphics[width=.48\linewidth,trim=0 0 0 0,clip]{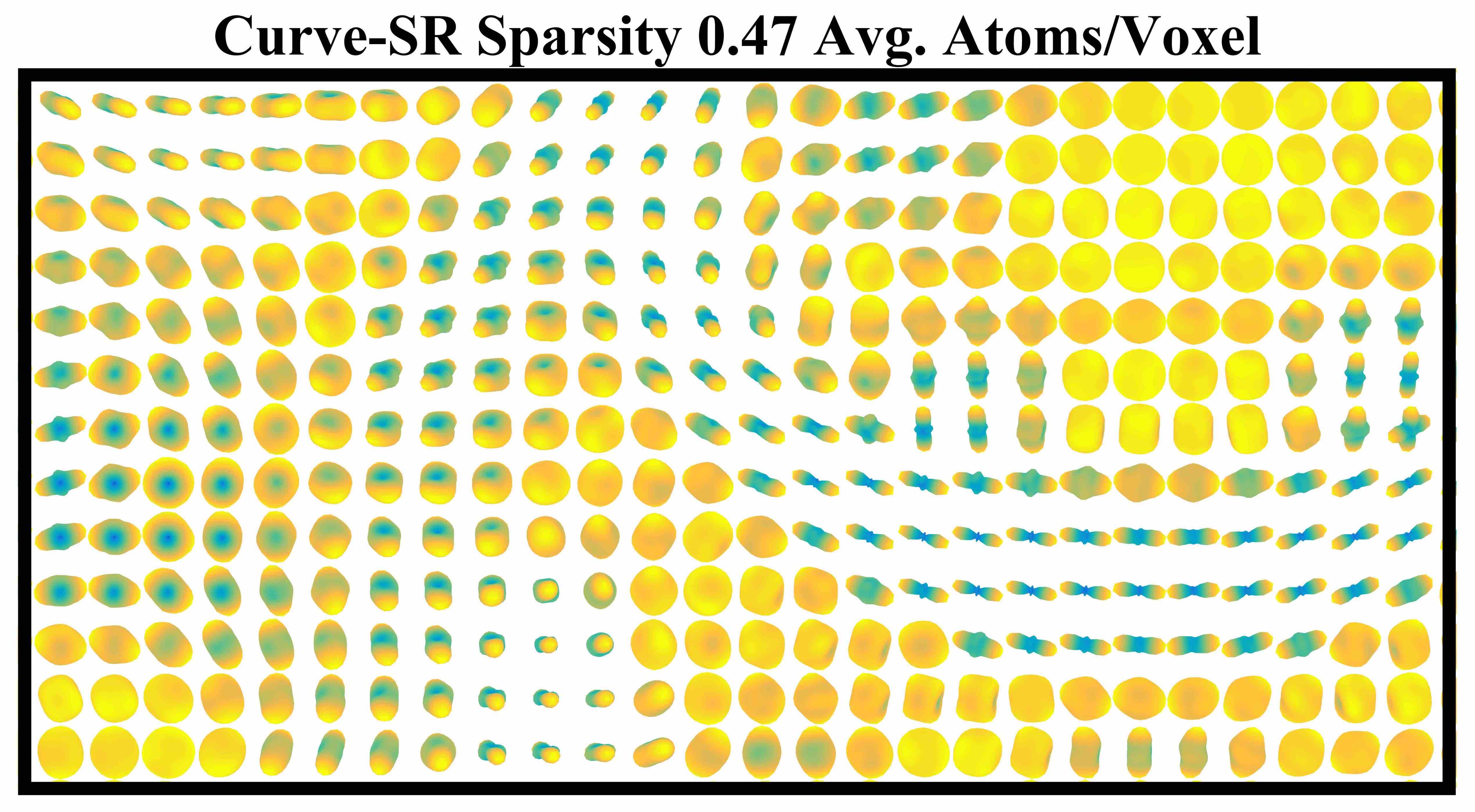}
    \includegraphics[width=.48\linewidth,trim=0 0 0 0,clip]{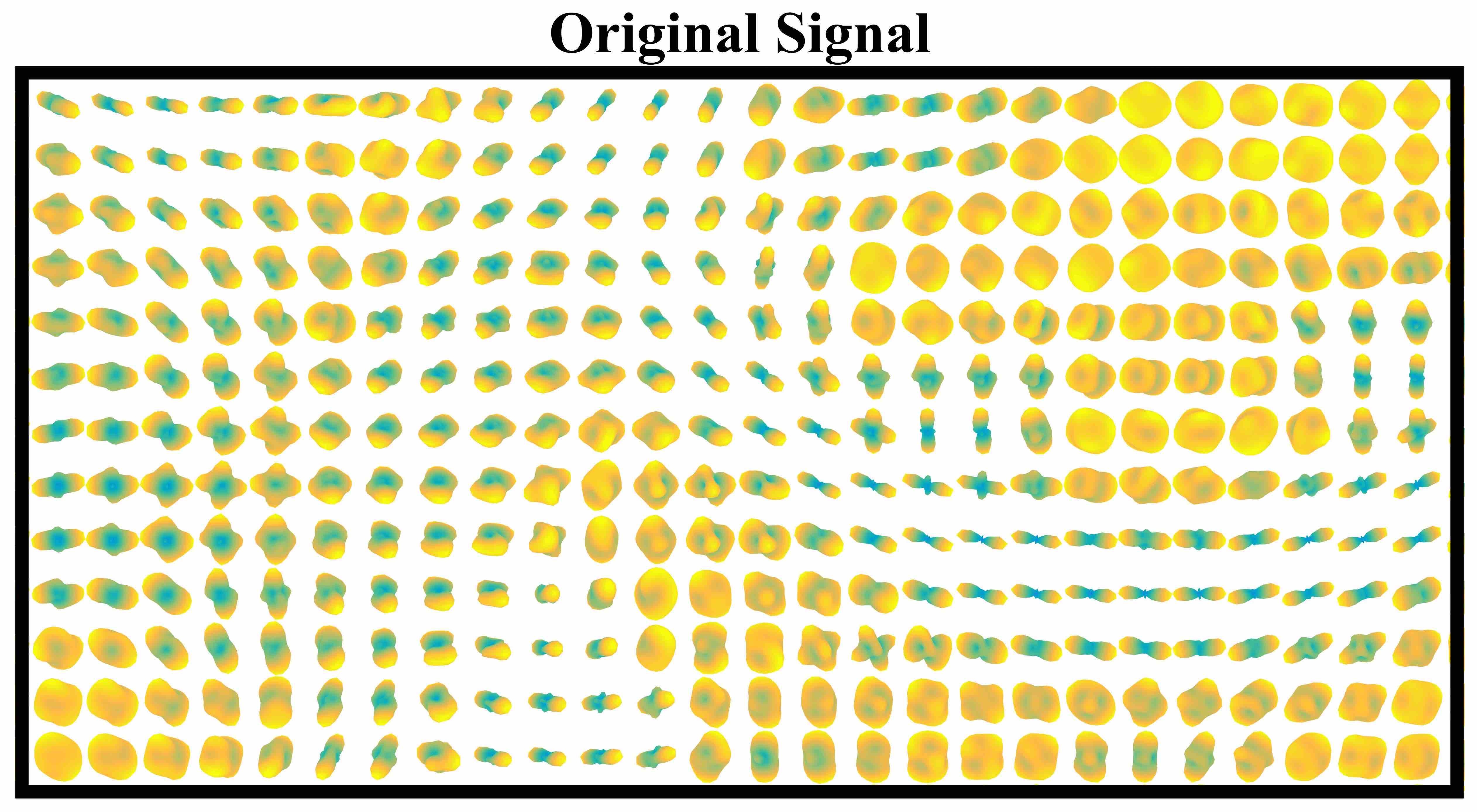}
 \caption{Results of proposed spatial-angular sparse coding on real HARDI brain data using Kron-FISTA for I-SR, Haar-SR and Curve-SR using an average of $\sim0.5$ atoms/voxel compared to original signal.  Curve-SR outperforms Haar-SR in this sparsity range due to its directionality.  The state-of-the-art I-SR is unable to compete at this sparsity level.}
 \label{fig:RealQualitative606030}
\end{figure}
\section{Discussion and Conclusion}
\label{sec:conclusion}
In this work, we have demonstrated that by using a joint spatial-angular dictionary, we can obtain accurate HARDI reconstruction with spatial-angular sparsity levels of less than 1 atom per voxel, surpassing the limitations of state-of-the-art angular representations.  This provides a new general reconstruction framework to achieve sparser dMRI representations than previously possible with optimal choices of spatial and angular dictionaries.  In particular, we have shown promising sparsity results for HARDI from the combination of curvelet (spatial) and spherical ridgelet (angular) dictionaries, but other spatial and angular dictionaries may be chosen for other dMRI protocols like DSI or MS-HARDI.  In future work, we aim to further optimize sparsity levels by learning a joint spatial-angular dictionary directly from dMRI data.

Furthermore, to efficiently solve this large-scale global sparse coding problem, we have proposed three novel extensions of popular sparse coding algorithms for the Kronecker dictionary setting.  All strategies improve upon previously proposed algorithms by explicitly exploiting the separability of the dictionary and each may be beneficial depending on the problem regime and size of data.  For our large-scale HARDI data, Kron-FISTA was the leader in speed.  
In future work, we will investigate other efficient active set methods such as the recent ORacle Guided Elastic Net (ORGEN) \citep{You:CVPR16-EnSC}. 


In addition to sparse coding, our spatial angular representation may have novel applications in other areas of dMRI processing such as denoising, feature extraction, global ODF non-negativity, fiber tract segmentation, and tractography.  
However, our main application for spatial-angular sparse coding framework is the promising improvements of acquisition acceleration of dMRI through CS. One natural future extension of this work will be to incorporate our joint spatial-angular sparsifying dictionaries within a unified ($k,q$)-CS framework to subsample signal measurements both in $k$- and $q$-space. With the adequate design of joint sensing schemes, CS recovery results such as \citep{Candes:ACHA11} predict that the minimum number of samples needed for stable and accurate reconstruction is directly linked to the sparsity of the signal in the chosen dictionary, which argues in favor of the joint representation we have proposed.

\myparagraph{Acknowledgements} This work is supported by Johns Hopkins University startup funds.
\section*{References}

\end{document}